\newcommand{\kronecker}{\otimes}     
\newcommand{\khatri}{\odot}          
\newcommand{\hadamard}{\circledast}      
\newcommand{\qedsymbol}{\rule{2.5mm}{2.5mm}}
\begin{document}

\title{Interpretable Bayesian Tensor Network Kernel Machines with Automatic Rank and Feature Selection}

\author{\name Afra Kilic \email h.a.kilic@tudelft.nl \\
           \addr Delft Center for Systems and Control\\
       Delft University of Technology\\
       Delft, 2628 CN, The Netherlands
       \AND
       \name Kim Batselier \email k.batselier@tudelft.nl \\
       \addr Delft Center for Systems and Control\\
       Delft University of Technology\\
       Delft, 2628 CN, The Netherlands}

\editor{My editor}

\maketitle

\begin{abstract}

Tensor Network (TN) Kernel Machines speed up model learning by representing parameters as low-rank TNs, reducing computation and memory use. However, most TN-based Kernel methods are deterministic and ignore parameter uncertainty. Further, they require manual tuning of model complexity hyperparameters like tensor rank and feature dimensions, often through trial-and-error or computationally costly methods like cross-validation. We propose Bayesian Tensor Network Kernel Machines, a fully probabilistic framework that uses sparsity-inducing hierarchical priors on TN factors to automatically infer model complexity. This enables automatic inference of tensor rank and feature dimensions, while also identifying the most relevant features for prediction, thereby enhancing model interpretability. All the model parameters and hyperparameters are treated as latent variables with corresponding priors. Given the Bayesian approach and latent variable dependencies, we apply a mean-field variational  inference to approximate their posteriors. We show that applying a mean-field approximation to TN factors yields a Bayesian ALS algorithm with the same computational complexity as its deterministic counterpart, enabling uncertainty quantification at no extra computational cost. Experiments on synthetic and real-world datasets demonstrate the superior performance of our model in prediction accuracy, uncertainty quantification, interpretability, and scalability.
\end{abstract}

\vspace{3mm}
\begin{keywords}
tensor network kernel machines, tensor decompositions, variational inference, uncertainty quantification, automatic model selection
\end{keywords}

\section{Introduction}

Kernel methods, such as Support Vector Machines (SVM) and Gaussian Processes (GP), are powerful tools for nonlinear learning, capable of approximating arbitrary functions given enough data and often matching or outperforming neural networks \citep{hammer2003, garriga2018, lee2018, novak2018}. They use a kernel function \(k(x, x') = \langle \bm{\varphi}(x), \bm{\varphi}(x') \rangle\) to implicitly map inputs into a Reproducing Kernel Hilbert Space, where nonlinear problems become linear. This kernel trick avoids explicit computations in high-dimensional spaces but scales poorly with data size, conventional methods require \(\mathcal{O}(N^2)\) memory and \(\mathcal{O}(N^3)\) flops for \(N\) training points \citep{rasmussen:06, suykens1999, suykens2002}. Kernel approximation methods reduce complexity to $\mathcal{O}(NM^2)$ by projecting data onto $M$ basis function where $M \ll N$: data-dependent \citep{williams2001, drineas2005} and data-independent \citep{rahimi2007} methods both converge at rate $\mathcal{O}(1 / \sqrt{M})$. Deterministic approaches \citep{dao2017, mutny2018, hensman2017, solin2020} can achieve faster convergence, but their reliance on tensor products causes $M$ to grow exponentially with input dimension $D$, limiting their use to low-dimensional settings. 

\vspace{1mm}
Tensor Network (TN) Kernel machines deal with high-dimensional feature spaces and large-scale data sets by considering the model
\begin{equation}
    f(\mathbf{x}_n) =\bm{\varphi}(\mathbf{x}_n)^T \mathbf{w},
    \label{eq:1}
\end{equation}
with model weights \(\mathbf{w}\) and a feature map $\bm{\varphi}(\cdot) : \mathbb{R}^D \rightarrow \mathbb{R}^{M_1 M_2 \cdots M_D}$ that is restricted to a Kronecker product form
\begin{equation}
    \bm{\varphi}(\mathbf{x}_n) = \bm{\varphi}^{(1)}(x^{(1)}_n) \otimes \bm{\varphi}^{(2)}(x^{(2)}_n) \otimes \dots \otimes \bm{\varphi}^{(D)}(x^{(D)}_n),
    \label{eq:2}
\end{equation}
where each factor is a feature vector $\bm{\varphi}^{(d)} \in \mathbb{R}^{M_d}$  for all \( d \in [1, D] \).
As a result only product kernels are considered, which includes common choices such as the polynomial \citep{batselier_tensor_2017, Novikov_A._Exponential_2018} and Gaussian kernel~\citep{wesel_large-scale_2021, wesel2024quantizedfourierpolynomialfeatures}. Assume now without loss of generality that $M_1=M_2=\cdots=M_D=M$ such that $\mathbf{w} \in \mathbb{R}^{M^D}$. Tensor Network Kernel Machines impose an additional low-rank TN constraint on the weights $\mathbf{w}$. In this way the exponential storage complexity of the parameters to-be-learned is reduced from $O(M^D)$ to $O(DMR)$ or $O(DMR^2)$, depending on whether a canonical polyadic decomposition (CPD)~\citep{harshman1970parafac, carroll_analysis_1970} or tensor train (TT)~\citep{oseledets2011tensor} is used, respectively. Likewise, the Kronecker product structures in both the feature map $\bm{\varphi}(\cdot)$ and low-rank TN $\mathbf{w}$ can be exploited during training via the alternating least squares (ALS) algorithm with a computational complexity of $O(N(MR)^2+D(MR)^3)$ when using CPD and $O(N(MR^2)^2+D(MR^2)^3)$ when using TT. Furthermore, the low-rank structure captures redundancies in \( \mathbf{w} \), resulting in a parsimonious representation, while also imposing implicit regularization by constraining the model’s degrees of freedom.

\vspace{1mm}
Despite the widespread use of TNs for complexity reduction, most existing techniques operate in deterministic settings and do not account for parameter uncertainty. Uncertainty is a key challenge in modern learning due to noisy and limited data. Bayesian inference provides a principled way to handle this uncertainty. Beyond quantifying uncertainty, it also helps control model complexity, choose model size, enforce sparsity, and include prior knowledge. However, the application of TN Kernel machines to reduce model complexity in probabilistic frameworks remains largely underexplored.

\vspace{1mm}
Using TNs requires specifying model complexity in advance by tuning hyperparameters like tensor rank $R$ and feature dimension $M_d$. Choosing the right complexity is essential but challenging. Typically, this is done by trial and error, which can be time-consuming and imprecise. More principled methods like maximum likelihood might cause overfitting if only the training data is used. With ample data, multiple models can be trained on subsets, and the best is chosen based on validation performance. However, when data is limited, using most for training leaves only small validation sets, which may give unreliable results. Cross-validation improves data use by rotating validation sets but is computationally costly, especially with multiple hyperparameters. Thus, a more efficient and sophisticated method to determine model complexity is needed.

\vspace{4mm}
\noindent \textbf{In this paper}, to address the gap in probabilistic kernel methods with low-rank TNs and to enable more principled and efficient hyperparameter tuning, we introduce a fully Bayesian tensor network kernel method (BTN-Kernel machines) that automatically infers both the tensor rank $R$ and feature dimension $M_d$ for all  \( d \in [1, D] \). To achieve this, we specify a sparsity-inducing hierarchical prior over the TN components with individual parameters associated to each feature dimension $M_d$ and rank $R$ component, promoting minimal tensor rank and feature dimension of each TN component. This penalizes the irrelevant components to shrink towards zero, allowing automatic inference of the tensor rank and feature dimensions during training. Additionally, the sparsity parameters placed on the feature dimensions highlights the most relevant features for prediction, enhancing model interpretability. All model parameters are treated as random variables with corresponding priors. Due to the complex dependencies and the fully Bayesian formulation, exact inference is intractable. We therefore employ a mean field variational Bayesian inference to obtain a deterministic approximation of the posterior distributions over all parameters and hyperparameters. We show that employing a mean field approximation on the factors of the TN results in a Bayesian ALS algorithm with an identical computational complexity as the conventional ALS algorithm. In other words, the uncertainty quantification of BTN-Kernel machines can be achieved at zero additional computational cost! We demonstrate the superior performance of our proposed Bayesian model in terms of prediction accuracy, uncertainty quantification, scalability and interpretability through numerous experiments.

\subsection{Related Work}

Probabilistic models for tensor decompositions have gained attention in collaborative filtering, tensor factorization and completion. Gibbs sampling with Gaussian priors on CP factors is used in \citep{rai_leveraging_2015, rai_scalable_2014, xiong_temporal_2010}, while variational Bayes (VB) is employed in \citep{Zhao_2015_rank_det, zhao_bayesian_2016}. Orthogonal factor recovery via Stiefel manifold optimization with VB is explored in \citep{cheng_probabilistic_2017}. Bayesian low-rank Tucker models are studied using VB \citep{chu_probabilistic_2009, zhao_bayesian_2015_tucker} and Gibbs sampling \citep{hoff_equivariant_2016}. An infinite Tucker model based on a \(t\)-process is proposed in \citep{xu_bayesian_2015}. A VB Tensor Train (TT) model with von Mises–Fisher priors appears in \citep{hinrich_probabilistic_2019}. More recently, a Bayesian TT model has been introduced that sequentially infers each component's posterior, proposing a probabilistic interpretation of the alternating linear scheme (ALS). A MATLAB toolbox supporting both VB and Gibbs sampling has also been developed \citep{hinrich_probabilistic_2020}.

\vspace{1mm}

Tensor regression and classification have been explored \citep{Hoff_2015, yang_bayesian_2016, guhaniyogi_bayesian_2017}, including connections to Gaussian processes \citep{yu_tensor_2018}. VB PARAFAC2 models analyze multiple matrices with a varying mode \citep{jorgensen_probabilistic_2018, jorgensen_analysis_2019}, while approaches for multiple 3-way tensors with varying modes appear in \citep{khan2014bayesian, Khan_2016}. While tensor networks are widely used to reduce model complexity in deterministic kernel methods, their use in probabilistic models remains limited. Existing approaches, such as the TT-GP model, apply variational inference by representing the posterior mean with a TT and using a Kronecker-structured covariance matrix \citep{izmailov2018ttgp}. More recently, the Structured Posterior Bayesian Tensor Network (SP-BTN) further reduces parameter complexity by imposing a low-rank structure on the mean of a CP-decomposed weight tensor, combined with Kronecker-structured local covariances \citep{konstantinidis2022vbttn}. Although both methods perform well in terms of predictive mean, their ability to capture uncertainty is limited by the diagonal structure of the covariance matrices. In fact, uncertainty quantification in the existing models remains largely unexplored. In addition, these methods still require manually tuned hyperparameters, often relying on costly or imprecise optimization procedures.

\section{Mathematical Background}
\label{sec:Background}
\subsection{Preliminaries and Notation}
 The order of a tensor is the number of dimensions, also known as ways or modes. Scalars are denoted by lowercase letters e.g., $a$. Vectors (first-order tensors) are denoted by boldface lowercase letters, e.g., $\mathbf{a}$. Matrices (second-order tensors) are denoted by boldface capital letters, e.g., $\mathbf{A}$. Higher-order tensors (order $\geq 3$) are denoted by boldface calligraphic letters, e.g., $\bm{\mathcal{A}}$. Given an $D$th-order tensor $\bm{\mathcal{A}} \in \mathbb{R}^{I_1 \times I_2 \times \cdots \times I_D}$, the $(i_1, i_2, \ldots, i_D)$-th entry is denoted by $a_{i_1 i_2 \cdots i_D}$, where the indices range from 1 to their capital version, e.g., $i_d = 1, 2, \ldots, I_d, \forall d \in [1, D]$. Often it is easier to avoid working with the tensors directly, thus, we will consider their vectorization. The vectorization \(\operatorname{vec}(\bm{\mathcal{A}}) \in \mathbb{R}^{I_1 I_2 \dots  I_D} \) of a tensor \(\bm{\mathcal{A}} \in \mathbb{R}^{I_1 \times I_2 \times \dots \times I_D}\) is a vector such that  
\(\operatorname{vec}(\bm{\mathcal{A}})_i = a_{i_1 i_2 \dots i_D},
\)
\noindent where the relationship between the linear index \(i\) and the multi-index \((i_1, i_2, \dots, i_D)\) is given by  
\[
i = i_1 + \sum_{d=2}^{D} (i_d - 1) \prod_{k=1}^{d-1} I_k.
\]

\noindent When applied to a matrix, the operator $\operatorname{vec}(\cdot)$ performs column-wise vectorization. The reshape operator $\mathcal{R}\{\cdot\}_{I \times J}$ reorganizes the entries of its input into a matrix of size $I \times J$, preserving the column-wise order consistent with the $\operatorname{vec}(\cdot)$ operator. The operator $\operatorname{diag}(\cdot)$ returns a diagonal matrix when applied to a vector, and extracts the main diagonal as a vector when applied to a matrix. The inner product of vectors $\mathbf{a}, \mathbf{b} \in \mathbb{R}^I$ is $c = \langle \mathbf{a}, \mathbf{b} \rangle = \sum_{i=1}^{I} a_i b_i$. The Kronecker product of two matrices \( \mathbf{A} \in \mathbb{R}^{I \times J} \) and \( \mathbf{B} \in \mathbb{R}^{K \times L} \) is denoted by \( \mathbf{A} \otimes \mathbf{B} \in \mathbb{R}^{KI \times LJ} \). The Khatri-Rao product \( \mathbf{A} \khatri \mathbf{B} \) of the matrices \( \mathbf{A} \in \mathbb{R}^{I \times J} \) and \( \mathbf{B} \in \mathbb{R}^{K \times J} \) is an \( IK \times J \) matrix obtained by taking the column-wise Kronecker product. The Hadamard (element-wise) product of matrices \( \mathbf{A} \in \mathbb{R}^{I \times J} \) and \( \mathbf{B} \in \mathbb{R}^{I \times J} \) is denoted by \( \mathbf{A} \hadamard \mathbf{B} \in \mathbb{R}^{I \times J}\). The identity matrix is conventionally denoted by \( \mathbf{I} \), with its dimensions either inferred from the context or explicitly stated as a subscript.

\subsection{Tensor Decompositions}
Tensor decompositions, also known as tensor networks, generalize the Singular Value Decomposition (SVD) from matrices to higher-order tensors \citep{kolda_tensor_2009}. Three widely used decompositions are the Tucker decomposition \citep{tucker_mathematical_1966, kolda_tensor_2009}, the Tensor Train (TT) decomposition \citep{oseledets2011tensor}, and the Canonical Polyadic Decomposition (CPD) \citep{harshman1970parafac, carroll_analysis_1970}. Each of these extends different properties of the SVD to tensors.  In this subsection, we briefly introduce these decompositions and discuss their advantages and limitations for modeling $\mathbf{w}$ in (2).  For a more detailed overview, we refer to \citep{kolda_tensor_2009} and references therein. We omit Tucker decomposition due to its non-uniqueness and high storage complexity $\mathcal{O}(R^D)$, which makes it impractical for high-dimensional data. Therefore, since CPD and TT decompositions have storage complexities of \( O(DMR) \) and \( O(DMR^2) \), respectively, they are both suitable options for representing \( \mathbf{w} \) in~\eqref{eq:1}.

\begin{definition}[Canonical Polyadic Decomposition]
A rank-$R$ Canonical Polyadic Decomposition of $\mathbf{w} = \operatorname{vec}(\bm{\mathcal{W}}) \in \mathbb{R}^{M^D}$ consists of $D$ factor matrices $\mathbf{W}^{(d)} \in \mathbb{R}^{M_d \times R}$, such that  
\begin{equation}
    \mathbf{w} = (\mathbf{W}^{(1)} \khatri \mathbf{W}^{(2)} \khatri \dots \khatri \mathbf{W}^{(D)}) \bm{1}_{R} = \sum_{r=1}^{R} \mathbf{w}^{(1)}_r \kronecker \mathbf{w}^{(2)}_r \kronecker \dots \kronecker \mathbf{w}^{(D)}_r.
    \label{eq:4}
\end{equation}  
Unlike matrix factorizations, CPD is unique under mild conditions \citep{kruskal1977three}. The primary storage cost arises from the $D$ factor matrices, leading to a complexity of $\mathcal{O}(R M D)$.  
\end{definition}

\begin{definition}[Tensor Train Decomposition]
Tensor Train decomposition represents $\mathbf{w} \in \mathbb{R}^{M^D}$ using $D$ third-order tensors $\bm{\mathcal{W}}^{(d)} \in \mathbb{R}^{R_d \times M_d \times R_{d+1}}$ such that  
\begin{equation}
    w_{i_1 i_2 \dots i_D} =
    \sum_{r_1=1}^{R_1} \dots \sum_{r_{D+1}=1}^{R_{D+1}} w^{(1)}_{r_1 i_1 r_2} \dots w^{(D)}_{r_D i_D r_{D+1}}.
    \label{eq:6}
\end{equation}  
The auxiliary dimensions $R_1, R_2, \dots, R_{D+1}$ are called TT-ranks. To ensure that the right-hand side of \eqref{eq:6} is a scalar, the boundary condition $R_1 = R_{D+1} = 1$ is required. TT decomposition is non-unique, with a storage complexity of $\mathcal{O}(R^2 M D)$ due to the $D$ tensors $\bm{\mathcal{W}}^{(d)}$.  
\end{definition}
 The CP-rank $R$ and TT-ranks $R_2, \dots, R_D$ serve as additional hyperparameters, often making CPD preferable in practice. For ease of the discussion, we focus on the learning algorithm for the CPD case. In section \ref{tt-discussion}, we briefly discuss what changes to our probabilistic model when using a TT decomposition.

\section{Probabilistic Tensor Network Kernel Machines}
\label{sec:the_model}
\subsection{Probabilistic Model and Priors}

 When $N$ inputs $\{\mathbf{x}_n\} ^ {N} _ {n=1}$ and outputs $\{y_n\} ^ {N} _ {n=1}$ are available for learning then model \eqref{eq:1} can be expressed as a linear matrix equation 

\begin{equation}
    \mathbf{y} = \mathbf{\Phi}^T \mathbf{w} + \mathbf{e}.
    \label{eq:3}
\end{equation}

 \noindent where $\bm{y} \in \mathbb{R}^{N}$ and $\mathbf{e} \in \mathbb{R}^N$, with $\mathbf{e} \sim \mathcal{N}(\mathbf{0}, \tau^{-1} \mathbf{I}_N)$. The matrix \( \mathbf{\Phi} \in \mathbb{R}^{M^D \times N} \) is constructed such that its \( n \)th column contains \( \bm{\varphi}(\mathbf{x}_n) \). Due to the tensor-product structure of the feature map in \eqref{eq:2}, $\mathbf{\Phi}$ can be expressed as a Khatri-Rao product of component matrices $\mathbf{\Phi} = \mathbf{\Phi}^{(1)} \khatri  \mathbf{\Phi}^{(2)} \khatri \dots \khatri  \mathbf{\Phi}^{(D)}$, where the columns of $\mathbf{\Phi}^{(d)}$ are the feature vectors $\bm{\varphi}^{(d)}$, for all  \( d \in [1, D] \). Consequently,  assuming a CP-decomposed weight vector $\mathbf{w}$ along with the Gaussian noise model, the likelihood of the observed outputs is given by

\begin{equation}
    p(\mathbf{y} \mid \{\mathbf{W}^{(d)}\}_{d=1}^{D}, \tau) = \prod_{n=1}^{N} \mathcal{N}\left(y_n \mid \bm{\varphi}(x_n)^T\mathbf{w}, \tau^{-1} \right),
    \label{eq:7}
\end{equation}
\noindent where the parameter $\tau$ denotes the noise precision. Factor matrices \(\mathbf{W}^{(d)} \in \mathbb{R}^{M_d \times R}\) can be represented either row-wise or column-wise as
\begin{equation}
    \mathbf{W}^{(d)} = 
    \begin{bmatrix}
    \mathbf{w}^{(d)}_1 & \cdots & \mathbf{w}^{(d)}_{m_d} & \cdots & \mathbf{w}^{(d)}_{M_d}
    \end{bmatrix}^T =  
    \begin{bmatrix}
    \mathbf{w}^{(d)}_{1} & \cdots & \mathbf{w}^{(d)}_{r} & \cdots & \mathbf{w}^{(d)}_{R}
    \end{bmatrix}.
    \label{eq:8}
\end{equation}
A subscript $m_d$ will always refer to a row of the factor matrix \(\mathbf{W}^{(d)}\), while a subscript $r$ to a column. The tensor rank $R$ and feature dimensions $\{M_d\}_{d=1}^D$ are tuning parameters whose selection is challenging and computationally costly. Therefore, we seek a principled and efficient model selection method that not only infers the model complexity via \(R\) and \(\{M_d\}_{d=1}^D\) but also effectively avoids overfitting. To achieve this, we specify a sparsity-inducing hierarchical prior over the factor matrices \(\{\mathbf{W}^{(d)}\}_{d=1}^D\) with continuous hyperparameters that control the variance related to their rows and the columns. This promotes simpler models by pushing unnecessary components toward zero, allowing automatic model selection during training. This form of prior is motivated by the framework of automatic relevance determination (ARD) introduced in the context of neural networks by \citet{neal1996bayesian} and \citet{mackay1994bayesian}. A similar prior has also been applied in a Bayesian CP tensor completion method, where it enables automatic determination of the CP rank \citep{Zhao_2015_rank_det}.

\vspace{1mm}

More specifically, we define the sparsity parameters as $\bm{\lambda}_R := [\lambda_1, \lambda_2, \dots, \lambda_R]$, where each \(\lambda_r\) regulates the strength of the \(r\)th \textbf{column} $\mathbf{w}^{(d)}_{r}$ of \(\mathbf{W}^{(d)}\), for all  \( d \in [1, D] \). The parameter set \(\bm{\lambda}_R\) is shared across all factor matrices, allowing uniform regularization across different modes. Similarly, we define   $\bm{\lambda}_{M_d} := [\lambda_{1_d}, \lambda_{2_d}, \dots, \lambda_{M_d}]$, where each \(\lambda_{m_d}\) controls the regularization of the \(m_d\)th \textbf{row} $\mathbf{w}^{(d)}_{m_d}$ of \(\mathbf{W}^{(d)}\). Unlike \(\bm{\lambda}_R\), the vector of precisions \(\bm{\lambda}_{M_d}\) is specific to each factor matrix \(\mathbf{W}^{(d)}\), for all  \( d \in [1, D] \), allowing for feature-dependent regularization.
The prior distribution for the vectorized factor matrices is a zero mean Gaussian prior 

\begin{equation}
    p(\operatorname{vec}(\mathbf{W}^{(d)})\mid \bm{\lambda}_R, \bm{\lambda}_{M_d}) = \mathcal{N} \left(\operatorname{vec}(\mathbf{W}^{(d)}) \mid \bm{0},  \bm{\Lambda}_R^{-1} \otimes \bm{\Lambda}_{M_d}^{-1} \right), \quad \forall d \in [1, D],
    \label{eq:9}
\end{equation}

\noindent where \(\bm{\Lambda}_R \otimes \bm{\Lambda}_{M_d} = \operatorname{diag}(\bm{\lambda}_R)  \otimes \operatorname{diag}(\bm{\lambda}_{M_d})\) represents the inverse covariance matrix
, also known as the precision matrix.  The factor \(\bm{\Lambda}_R\) is shared by all factor matrices and determines the \textbf{CP rank}. The factor \(\bm{\Lambda}_{M_d}\) is specific to each mode and determines feature dimension $M_d$ used for each factor matrix. To allow the CP rank and feature dimensions to be inferred from data, we place Gamma hyperpriors over the sparsity parameters \( \bm{\lambda}_R \) and \( \bm{\lambda}_{M_d} \). The hyperprior over \( \bm{\lambda}_R \) is factorized across rank components

\begin{equation}
    p(\bm{\lambda}_R) = \prod_{r=1}^{R} \text{Ga}(\lambda_r \mid c_0, d_0), 
    \label{eq:10}
\end{equation}

\noindent while the hyperprior over $\bm{\lambda}_{M_d}$ is factorized over their individual feature dimensions  
\begin{equation}
    p(\bm{\lambda}_{M_d}) = \prod_{m_d=1}^{M_d} \text{Ga}(\lambda_{m_d} \mid g_0, h_0), 
    \label{eq:11}
\end{equation}

\noindent where $\text{Ga}(x \mid a, b) = b^a x^{a-1} e^{-bx}/\Gamma(a)$ denotes a Gamma distribution and $\Gamma(a)$ is the Gamma function. To complete our model with a fully Bayesian treatment, we also place a hyperprior over the noise precision
$\tau$ , that is,
\begin{equation}
        p(\tau) = \text{Ga}(\tau \mid a_0, b_0).
        \label{eq:12}
\end{equation}
For simplicity of notation, all unknowns including latent variables and hyperparameters are collected and denoted together by $\Theta = \{\mathbf{W}^{(1)}, \ldots, \mathbf{W}^{(D)}, \bm{\lambda}_{M_1, ... \bm{\lambda}_{M_D}}, \bm{\lambda}_R,  \tau  \}$. The probabilistic graph model is illustrated in Figure \ref{fig:CPD_model}, from which we can express the joint distribution as
\begin{equation}
    p(\mathbf{y}, \Theta) = p(\mathbf{y} \mid \{\mathbf{W}^{(d)}\}_{d=1}^{D}, \tau) \prod_{d=1}^{D} \left\{p(\mathbf{W}^{(d)} \mid  \bm{\lambda}_R, \bm{\lambda}_{M_d}) \, p(\bm{\lambda}_{M_d}) \right\} \, p(\bm\lambda_R)  \, p(\tau).
    \label{eq:13}
\end{equation}

Figure~\ref{fig:CPD_model} illustrates the probabilistic graphical model corresponding to the joint distribution in \eqref{eq:13}. For simplicity, the deterministic design matrices~$\{\bm{\Phi}^{(d)}\}_{d=1}^D$ are omitted, as they are not treated as random variables. The weight vector~$\bm{w}$ is represented using a CPD decomposition with factor matrices~$\{\bm{W}^{(d)}\}_{d=1}^D$. These matrices are governed by a shared sparsity term~$\bm{\lambda}_R$ and individual sparsity terms~$\bm{\lambda}_{M_d}$ for each~$\bm{W}^{(d)}$. The shared term~$\bm{\lambda}_R$ determines the CP rank by penalizing entire columns across all factor matrices, while the individual terms~$\operatorname{diag}(\bm{\lambda}_{M_d})$ control the number of active feature dimensions~$M_d$ in each~$\bm{W}^{(d)}$ by penalizing its rows. Like the observation noise precision~$\tau$, both~$\bm{\lambda}_R$ and~$\bm{\lambda}_{M_d}$ are precision parameters and are assigned Gamma priors. The shape and scale parameters of these Gamma distributions, shown above the corresponding nodes in the figure, are hyperparameters that must be initialized before training. Hyperparameter initialization is discussed in Section \ref{implementation_details}.

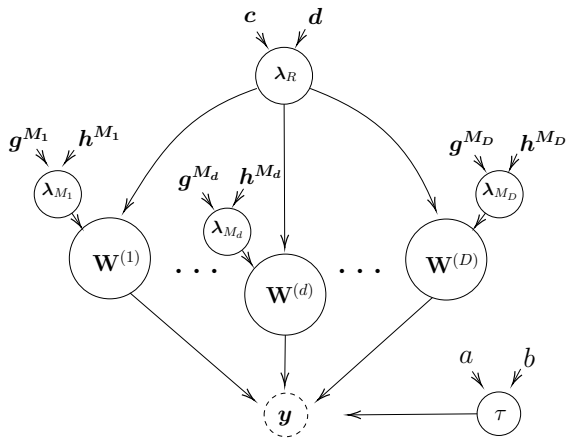
\begin{figure}[t]
    \centering
    \resizebox{0.5\textwidth}{!}{ 
        \begin{tikzpicture}[x=0.75pt,y=0.75pt,yscale=-1,xscale=1]
\draw   (167,223.5) .. controls (167,205) and (182,190) .. (200.5,190) .. controls (219,190) and (234,205) .. (234,223.5) .. controls (234,242) and (219,257) .. (200.5,257) .. controls (182,257) and (167,242) .. (167,223.5) -- cycle ;
\draw   (312,253.5) .. controls (312,235) and (327,220) .. (345.5,220) .. controls (364,220) and (379,235) .. (379,253.5) .. controls (379,272) and (364,287) .. (345.5,287) .. controls (327,287) and (312,272) .. (312,253.5) -- cycle ;
\draw   (445,224.5) .. controls (445,206) and (460,191) .. (478.5,191) .. controls (497,191) and (512,206) .. (512,224.5) .. controls (512,243) and (497,258) .. (478.5,258) .. controls (460,258) and (445,243) .. (445,224.5) -- cycle ;
\draw [dashed]  (327.99,353.1) .. controls (327.94,343.16) and (335.96,335.06) .. (345.9,335.01) .. controls (355.84,334.96) and (363.94,342.98) .. (363.99,352.93) .. controls (364.04,362.87) and (356.02,370.97) .. (346.08,371.01) .. controls (336.14,371.06) and (328.04,363.04) .. (327.99,353.1) -- cycle ;
\draw    (321,72.5) .. controls (321,59.52) and (331.52,49) .. (344.5,49) .. controls (357.48,49) and (368,59.52) .. (368,72.5) .. controls (368,85.48) and (357.48,96) .. (344.5,96) .. controls (331.52,96) and (321,85.48) .. (321,72.5) -- cycle ;
\draw    (344.5,96) -- (344.99,213) ;
\draw [shift={(345,215)}, rotate = 269.76] [color={rgb, 255:red, 0; green, 0; blue, 0 }  ][line width=0.75]    (10.93,-3.29) .. controls (6.95,-1.4) and (3.31,-0.3) .. (0,0) .. controls (3.31,0.3) and (6.95,1.4) .. (10.93,3.29)   ;
\draw   (278,201.5) .. controls (278,190.73) and (286.73,182) .. (297.5,182) .. controls (308.27,182) and (317,190.73) .. (317,201.5) .. controls (317,212.27) and (308.27,221) .. (297.5,221) .. controls (286.73,221) and (278,212.27) .. (278,201.5) -- cycle ;
\draw   (138,170.5) .. controls (138,159.73) and (146.73,151) .. (157.5,151) .. controls (168.27,151) and (177,159.73) .. (177,170.5) .. controls (177,181.27) and (168.27,190) .. (157.5,190) .. controls (146.73,190) and (138,181.27) .. (138,170.5) -- cycle ;
\draw   (503,170.5) .. controls (503,159.73) and (511.73,151) .. (522.5,151) .. controls (533.27,151) and (542,159.73) .. (542,170.5) .. controls (542,181.27) and (533.27,190) .. (522.5,190) .. controls (511.73,190) and (503,181.27) .. (503,170.5) -- cycle ;
\draw   (503.99,352.1) .. controls (503.94,342.16) and (511.96,334.06) .. (521.9,334.01) .. controls (531.84,333.96) and (539.94,341.98) .. (539.99,351.93) .. controls (540.04,361.87) and (532.02,369.97) .. (522.08,370.01) .. controls (512.14,370.06) and (504.04,362.04) .. (503.99,352.1) -- cycle ;
\draw    (217,252) -- (318.49,339.69) ;
\draw [shift={(320,341)}, rotate = 220.83] [color={rgb, 255:red, 0; green, 0; blue, 0 }  ][line width=0.75]    (10.93,-3.29) .. controls (6.95,-1.4) and (3.31,-0.3) .. (0,0) .. controls (3.31,0.3) and (6.95,1.4) .. (10.93,3.29)   ;
\draw    (467,256) -- (373.49,339.67) ;
\draw [shift={(372,341)}, rotate = 318.18] [color={rgb, 255:red, 0; green, 0; blue, 0 }  ][line width=0.75]    (10.93,-3.29) .. controls (6.95,-1.4) and (3.31,-0.3) .. (0,0) .. controls (3.31,0.3) and (6.95,1.4) .. (10.93,3.29)   ;
\draw    (503.99,352.1) -- (399,352.98) ;
\draw [shift={(397,353)}, rotate = 359.52] [color={rgb, 255:red, 0; green, 0; blue, 0 }  ][line width=0.75]    (10.93,-3.29) .. controls (6.95,-1.4) and (3.31,-0.3) .. (0,0) .. controls (3.31,0.3) and (6.95,1.4) .. (10.93,3.29)   ;
\draw    (345.5,287) -- (345.02,327) ;
\draw [shift={(345,329)}, rotate = 270.68] [color={rgb, 255:red, 0; green, 0; blue, 0 }  ][line width=0.75]    (10.93,-3.29) .. controls (6.95,-1.4) and (3.31,-0.3) .. (0,0) .. controls (3.31,0.3) and (6.95,1.4) .. (10.93,3.29)   ;
\draw    (537,137) -- (531.84,148.18) ;
\draw [shift={(531,150)}, rotate = 294.78] [color={rgb, 255:red, 0; green, 0; blue, 0 }  ][line width=0.75]    (10.93,-3.29) .. controls (6.95,-1.4) and (3.31,-0.3) .. (0,0) .. controls (3.31,0.3) and (6.95,1.4) .. (10.93,3.29)   ;
\draw    (277,169) -- (283.89,179.34) ;
\draw [shift={(285,181)}, rotate = 236.31] [color={rgb, 255:red, 0; green, 0; blue, 0 }  ][line width=0.75]    (10.93,-3.29) .. controls (6.95,-1.4) and (3.31,-0.3) .. (0,0) .. controls (3.31,0.3) and (6.95,1.4) .. (10.93,3.29)   ;
\draw    (311,166) -- (304.05,177.3) ;
\draw [shift={(303,179)}, rotate = 301.61] [color={rgb, 255:red, 0; green, 0; blue, 0 }  ][line width=0.75]    (10.93,-3.29) .. controls (6.95,-1.4) and (3.31,-0.3) .. (0,0) .. controls (3.31,0.3) and (6.95,1.4) .. (10.93,3.29)   ;
\draw    (502,138) -- (508.89,148.34) ;
\draw [shift={(510,150)}, rotate = 236.31] [color={rgb, 255:red, 0; green, 0; blue, 0 }  ][line width=0.75]    (10.93,-3.29) .. controls (6.95,-1.4) and (3.31,-0.3) .. (0,0) .. controls (3.31,0.3) and (6.95,1.4) .. (10.93,3.29)   ;
\draw    (140,133) -- (146.89,143.34) ;
\draw [shift={(148,145)}, rotate = 236.31] [color={rgb, 255:red, 0; green, 0; blue, 0 }  ][line width=0.75]    (10.93,-3.29) .. controls (6.95,-1.4) and (3.31,-0.3) .. (0,0) .. controls (3.31,0.3) and (6.95,1.4) .. (10.93,3.29)   ;
\draw    (171,134) -- (164.18,143.38) ;
\draw [shift={(163,145)}, rotate = 306.03] [color={rgb, 255:red, 0; green, 0; blue, 0 }  ][line width=0.75]    (10.93,-3.29) .. controls (6.95,-1.4) and (3.31,-0.3) .. (0,0) .. controls (3.31,0.3) and (6.95,1.4) .. (10.93,3.29)   ;
\draw    (322,83) .. controls (267.55,101.81) and (244.46,129.44) .. (212.96,183.36) ;
\draw [shift={(212,185)}, rotate = 300.19] [color={rgb, 255:red, 0; green, 0; blue, 0 }  ][line width=0.75]    (10.93,-3.29) .. controls (6.95,-1.4) and (3.31,-0.3) .. (0,0) .. controls (3.31,0.3) and (6.95,1.4) .. (10.93,3.29)   ;
\draw    (366,83) .. controls (418.47,102.8) and (448.4,129.46) .. (469.37,182.39) ;
\draw [shift={(470,184)}, rotate = 248.75] [color={rgb, 255:red, 0; green, 0; blue, 0 }  ][line width=0.75]    (10.93,-3.29) .. controls (6.95,-1.4) and (3.31,-0.3) .. (0,0) .. controls (3.31,0.3) and (6.95,1.4) .. (10.93,3.29)   ;
\draw    (169,186) -- (175.89,196.34) ;
\draw [shift={(177,198)}, rotate = 236.31] [color={rgb, 255:red, 0; green, 0; blue, 0 }  ][line width=0.75]    (10.93,-3.29) .. controls (6.95,-1.4) and (3.31,-0.3) .. (0,0) .. controls (3.31,0.3) and (6.95,1.4) .. (10.93,3.29)   ;
\draw    (310,217) -- (316.89,227.34) ;
\draw [shift={(318,229)}, rotate = 236.31] [color={rgb, 255:red, 0; green, 0; blue, 0 }  ][line width=0.75]    (10.93,-3.29) .. controls (6.95,-1.4) and (3.31,-0.3) .. (0,0) .. controls (3.31,0.3) and (6.95,1.4) .. (10.93,3.29)   ;
\draw    (511,186) -- (503.2,196.4) ;
\draw [shift={(502,198)}, rotate = 306.87] [color={rgb, 255:red, 0; green, 0; blue, 0 }  ][line width=0.75]    (10.93,-3.29) .. controls (6.95,-1.4) and (3.31,-0.3) .. (0,0) .. controls (3.31,0.3) and (6.95,1.4) .. (10.93,3.29)   ;
\draw    (322,36) -- (328.89,46.34) ;
\draw [shift={(330,48)}, rotate = 236.31] [color={rgb, 255:red, 0; green, 0; blue, 0 }  ][line width=0.75]    (10.93,-3.29) .. controls (6.95,-1.4) and (3.31,-0.3) .. (0,0) .. controls (3.31,0.3) and (6.95,1.4) .. (10.93,3.29)   ;
\draw    (362,34) -- (355.95,45.24) ;
\draw [shift={(355,47)}, rotate = 298.3] [color={rgb, 255:red, 0; green, 0; blue, 0 }  ][line width=0.75]    (10.93,-3.29) .. controls (6.95,-1.4) and (3.31,-0.3) .. (0,0) .. controls (3.31,0.3) and (6.95,1.4) .. (10.93,3.29)   ;
\draw    (500,317) -- (506.89,327.34) ;
\draw [shift={(508,329)}, rotate = 236.31] [color={rgb, 255:red, 0; green, 0; blue, 0 }  ][line width=0.75]    (10.93,-3.29) .. controls (6.95,-1.4) and (3.31,-0.3) .. (0,0) .. controls (3.31,0.3) and (6.95,1.4) .. (10.93,3.29)   ;
\draw    (540,315) -- (533.95,326.24) ;
\draw [shift={(533,328)}, rotate = 298.3] [color={rgb, 255:red, 0; green, 0; blue, 0 }  ][line width=0.75]    (10.93,-3.29) .. controls (6.95,-1.4) and (3.31,-0.3) .. (0,0) .. controls (3.31,0.3) and (6.95,1.4) .. (10.93,3.29)   ;

\draw (116,114) node [anchor=north west][inner sep=0.75pt]   [align=left] {\Large $\bm{g^{M_1}}$};
\draw (256,146) node [anchor=north west][inner sep=0.75pt]   [align=left] {\Large $\bm{g^{M_d}}$};
\draw (480,117) node [anchor=north west][inner sep=0.75pt]   [align=left] {\Large $\bm{g^{M_D}}$};
\draw (173,113) node [anchor=north west][inner sep=0.75pt]   [align=left] {\Large $\bm{h^{M_1}}$};
\draw (306,145) node [anchor=north west][inner sep=0.75pt]   [align=left] {\Large $\bm{h^{M_d}}$};
\draw (538,117) node [anchor=north west][inner sep=0.75pt]   [align=left] {\Large $\bm{h^{M_D}}$};
\draw (310,18) node [anchor=north west][inner sep=0.75pt]   [align=left] {\Large $\bm{c}$};
\draw (363,15) node [anchor=north west][inner sep=0.75pt]   [align=left] {\Large $\bm{d}$};
\draw (335,63.4) node [anchor=north west][inner sep=0.75pt]    {$\bm{\lambda}_{R}$};
\draw (143,159.4) node [anchor=north west][inner sep=0.75pt]    {$\bm{\lambda}_{M_{1}}$};
\draw (283,191.4) node [anchor=north west][inner sep=0.75pt]    {$\bm{\lambda}_{M_{d}}$};
\draw (507,159.4) node [anchor=north west][inner sep=0.75pt]    {$\bm{\lambda}_{M_{D}}$};
\draw (517,348) node [anchor=north west][inner sep=0.75pt]    {\Large$\tau$};
\draw (338,348) node [anchor=north west][inner sep=0.75pt]    {\Large$\bm{y}$};
\draw (328,242.4) node [anchor=north west][inner sep=0.75pt]  [rotate=-359.99]  {\Large$\mathbf{W}^{(d)}$};
\draw (185,215.4) node [anchor=north west][inner sep=0.75pt]  [rotate=-359.99]  {\Large$\mathbf{W}^{(1)}$};
\draw (460,217.4) node [anchor=north west][inner sep=0.75pt]  [rotate=-359.99]  {\Large$\mathbf{W}^{(D)}$};
\draw (488,299) node [anchor=north west][inner sep=0.75pt]   [align=left]  {\LARGE $a$};
\draw (541,296) node [anchor=north west][inner sep=0.75pt]   [align=left]  {\LARGE $b$};

\node at (275, 235) {\Huge $\cdots$};  
\node at (410, 235) {\Huge $\cdots$};  

        \end{tikzpicture}
    }

    \caption{\footnotesize Representation of BTN-Kernel machines with the CPD-decomposed weight vector $\mathbf{w}$ as a probabilistic graphical model showing the hierarchical sparsity inducing priors over the factor matrices $\{\bm{W}^{(d)}\}_{d=1}^D$ by the sparsity parameters $\boldsymbol{\lambda}_{R}$ and $\{\boldsymbol{\lambda}_{M_d}\}_{d=1}^D$. The dashed node denotes the observed data $\mathbf{y}$, while the solid nodes represent random variables. Shape and scale hyperparameters of the Gamma priors placed on $\boldsymbol{\lambda}_{R}$, $\{\boldsymbol{\lambda}_{M_d}\}_{d=1}^D$ and $\tau$ are shown as unbounded nodes.}

    \label{fig:CPD_model}
\end{figure}

By combining the likelihood in \eqref{eq:7}, the priors of model parameters in \eqref{eq:9} and the hyperpriors in \eqref{eq:10}, \eqref{eq:11} and \eqref{eq:12}, the logarithm of the joint distribution of the model is given by 

\begin{equation}
      \begin{aligned}
     l(\Theta) &= - \frac{\tau}{2} \| \mathbf{y} -\langle\mathbf{\Phi},\mathbf{w}\rangle \|_F^{2}   - \frac{1}{2}  \sum_{d=1}^D \sum_r \sum_{m_d}  w^{(d)}_{m_dr} \, \lambda_r\, \lambda_{m_d}\, w^{(d)}_{m_dr}  + \left( \frac{N}{2} + a_0 -1\right)  \operatorname{ln} \tau \\ 
     &+ \sum_r  \left( \frac{\sum_d M_d}{2} + (c_0^r - 1) \right)\operatorname{ln} \lambda_r 
    +  \sum_d^D \sum_{m_d} \left(\frac{R}{2} + (g_0^{dm_d} - 1) \right) \operatorname{ln}\lambda_{m_d}^{d} \\ 
    &- \sum_d^D \sum_{m_d}  h_0^{dm_d} \lambda_{m_d}^{d} - \sum_r d_0^r \lambda_r - b_0 \tau + \text{const},
      \end{aligned}
      \label{eq:14}
\end{equation}

 \noindent where $N$ denotes the total number of observations.  See Section 2 of the Appendix for a detailed derivation. Without loss of generality, we can perform maximum a posteriori (MAP) estimation of $\Theta$ by maximizing \eqref{eq:14}, which is, to some extent, equivalent to optimizing a squared error function with regularizations imposed on the factor matrices and additional constraints imposed on the regularization parameters. However, our goal is to develop a method that, instead of relying on point estimates, computes the full posterior distribution of all variables in \( \Theta \) given the observed data, that is,
\begin{equation}
    p(\Theta | \mathbf{y}) = \frac{p(\mathbf{y}, \Theta)}{\int p(\mathbf{y}, \Theta) d \Theta}. 
    \label{eq:15}
\end{equation}
Based on the posterior distribution of $\Theta$, the predictive distribution over unseen data points, denoted $\tilde{y_i}$ can be inferred by 
\begin{equation}
     p(\tilde{y_i} \mid \mathbf{y}) = \int p\left(y_i \mid \Theta\right) p\left(\Theta \mid \mathbf{y} \right)d\Theta.
     \label{eq:16} 
\end{equation}

\subsection{Model Learning}

An exact Bayesian inference in \eqref{eq:15} and \eqref{eq:16} requires integrating over all latent variables and hyperparameters, making it analytically intractable. In this section, we present the development of a deterministic approximate inference method within the variational Bayesian (VB) framework \citep{winn_variational_2005} to learn the probabilistic CP model. In this approach, we seek a variational distribution \( q(\Theta) \) that approximates the true posterior distribution \( p(\Theta \mid \mathbf{y}) \) by minimizing the Kullback–Leibler (KL) divergence, that is,

\begin{equation}
\begin{aligned}
   \text{KL}\left( q(\Theta) \, \|\, p(\Theta \mid \mathbf{y}) \right) &= \int q(\Theta) \ln \frac{q(\Theta)}{p(\Theta \mid \mathbf{y})} \, d\Theta
    \\
   &=  \underbrace{\ln p(\mathbf{y})}_{\text{evidence}} - \underbrace{\int q(\Theta) \ln \frac{p(\mathbf{y}, \Theta)}{q(\Theta)} \, d\Theta}_{\text{$\mathcal{L}(q)$}},
\end{aligned}
\label{eq:17}
\end{equation}

\noindent where \( \ln p(\mathbf{y}) \) represents the model evidence, and its lower bound is defined by \( \mathcal{L}(q) = \mathbb{E}_{q(\Theta)} [\ln p(\mathbf{y}, \Theta)] \). Since the model evidence is a constant, the maximum of the lower bound occurs when the KL divergence vanishes, implying \( q(\Theta) = p(\Theta \mid \mathbf{y}) \). We assume a mean field approximation that factorizes the variational distribution over each variable \( \theta_j \in \Theta\), allowing it to be expressed as
\begin{equation}
    q(\Theta) = \prod_{d=1}^{D} \,\left\{q_{\mathbf{W}^{(d)}}(\mathbf{W}^{(d)}) \, q_{\bm{\lambda}_{M_d}}(\bm{\lambda}_{M_d})\right\} q_{\lambda_{R}}(\bm{\lambda}_{R}) q_{\tau}(\tau).
    \label{eq:18}
\end{equation}
It is important to note that this factorization assumption is the only assumption imposed on the distribution and the specific functional forms of the individual factors \( q_j(\theta_j) \) can be explicitly derived in turn. Consider the $j$th variable $\theta_j$.  The optimal solution for $\theta_j$, obtained by maximizing \( \mathcal{L}(q)\) and the maximum occurs when 
\begin{equation}
    \ln q_j(\theta_j) = \mathbb{E}_{q(\Theta \setminus \theta_j)} [\ln p(\mathbf{y}, \Theta)] + \text{const},
    \label{eq:19}
\end{equation}
\noindent where \( \mathbb{E}_{q(\Theta \setminus \theta_j)} [\cdot] \) denotes the expectation taken with respect to the variational distributions of all variables except \( \theta_j \). For a detailed derivation and proof, see \cite{bishop2006pattern}. Since all parameter distributions belong to the exponential family and are conjugate to their corresponding prior distributions, we can derive closed-form posterior update rules for each parameter in \( \Theta \) using \eqref{eq:19}. Learning the BTN-Kernel machines can then be done by initializing the distributions $q_j(\theta_j)$ appropriately and replacing each in turn with a revised estimate given by the update rule.

\subsubsection{Posterior distribution of factor matrices}

To derive the update rule for the \( d \)th factor matrix \( \mathbf{W}^{(d)} \), we first need to introduce the following theorem.

\begin{theorem} 
 Given a set of matrices \( \mathbf{W}^{(d)} \) for all \( d \in [1,  D] \), the following linear relation holds:
\begin{equation}
    \mathbf{\Phi}^T \mathbf{w} = \operatorname{vec}(\mathbf{W}^{(d)})^T \textbf{G}^{(d)},
    \label{eq:20}
\end{equation}
where
\[
    \mathbf{G}^{(d)} := \mathbf{\Phi}^{(d)} \khatri \left( \circledast_{k \neq d} \mathbf{W}^{(k)^T} \mathbf{\Phi}^{(k)} \right) \in \mathbb{R}^{M_d R \times N},
\]
and \(\operatorname{vec}(\mathbf{W}^{(d)}) \in \mathbb{R}^{M_d R}\) is the column-wise vectorization of the \( d \)th factor matrix. The matrix \(\mathbf{G}^{(d)}\) can be interpreted as the design matrix corresponding to \(\mathbf{W}^{(d)}\).
\label{theorem3}
\end{theorem}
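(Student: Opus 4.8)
The plan is to verify \eqref{eq:20} one column at a time: I will show that for every $n\in[1,N]$ the $n$th component of the vector $\mathbf{\Phi}^T\mathbf{w}$ equals the $n$th component of the row vector $\operatorname{vec}(\mathbf{W}^{(d)})^T\mathbf{G}^{(d)}$, and then conclude by the arbitrariness of $n$. Since the $n$th column of $\mathbf{\Phi}$ is $\bm{\varphi}(\mathbf{x}_n)$, the $n$th component of $\mathbf{\Phi}^T\mathbf{w}$ is $\bm{\varphi}(\mathbf{x}_n)^T\mathbf{w}$. Substituting the Kronecker form of the feature map from \eqref{eq:2} and the rank-$R$ CPD of $\mathbf{w}$ from \eqref{eq:4}, and applying the mixed-product property of the Kronecker product — equivalently the fact that $\langle\mathbf{a}_1\otimes\cdots\otimes\mathbf{a}_D,\,\mathbf{b}_1\otimes\cdots\otimes\mathbf{b}_D\rangle=\prod_{k=1}^{D}\langle\mathbf{a}_k,\mathbf{b}_k\rangle$ — I obtain
\[
\bm{\varphi}(\mathbf{x}_n)^T\mathbf{w}=\sum_{r=1}^{R}\prod_{k=1}^{D}\bm{\varphi}^{(k)}(x^{(k)}_n)^T\mathbf{w}^{(k)}_r=\sum_{r=1}^{R}\prod_{k=1}^{D}\bigl(\mathbf{W}^{(k)T}\mathbf{\Phi}^{(k)}\bigr)_{rn}.
\]

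Next I would isolate the $d$th factor by writing the product over $k$ as $\bigl(\mathbf{W}^{(d)T}\mathbf{\Phi}^{(d)}\bigr)_{rn}\prod_{k\neq d}\bigl(\mathbf{W}^{(k)T}\mathbf{\Phi}^{(k)}\bigr)_{rn}$ and recognizing, for fixed $(r,n)$, the trailing product as the $(r,n)$ entry of the Hadamard product $\mathbf{Z}^{(d)}:=\circledast_{k\neq d}\mathbf{W}^{(k)T}\mathbf{\Phi}^{(k)}\in\mathbb{R}^{R\times N}$ (this is exactly where the definition of $\circledast$ enters). Expanding the leading factor as $\bigl(\mathbf{W}^{(d)T}\mathbf{\Phi}^{(d)}\bigr)_{rn}=\sum_{m_d=1}^{M_d}w^{(d)}_{m_dr}\,\varphi^{(d)}_{m_d}(x^{(d)}_n)$ yields
\[
\bm{\varphi}(\mathbf{x}_n)^T\mathbf{w}=\sum_{r=1}^{R}\sum_{m_d=1}^{M_d}w^{(d)}_{m_dr}\,\varphi^{(d)}_{m_d}(x^{(d)}_n)\,Z^{(d)}_{rn}=\bm{\varphi}^{(d)}(x^{(d)}_n)^T\,\mathbf{W}^{(d)}\,\mathbf{z}^{(d)}_n,
\]
where $\mathbf{z}^{(d)}_n$ is the $n$th column of $\mathbf{Z}^{(d)}$. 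The final step is to turn this bilinear form into an inner product against $\operatorname{vec}(\mathbf{W}^{(d)})$: using $\mathbf{u}^T\mathbf{A}\mathbf{v}=\operatorname{vec}(\mathbf{A})^T\operatorname{vec}(\mathbf{u}\mathbf{v}^T)$ with $\mathbf{A}=\mathbf{W}^{(d)}$, the right-hand side becomes $\operatorname{vec}(\mathbf{W}^{(d)})^T\operatorname{vec}\bigl(\bm{\varphi}^{(d)}(x^{(d)}_n)\,\mathbf{z}^{(d)T}_n\bigr)$, and $\operatorname{vec}\bigl(\bm{\varphi}^{(d)}(x^{(d)}_n)\,\mathbf{z}^{(d)T}_n\bigr)$ is precisely the $n$th column of the Khatri--Rao product $\mathbf{\Phi}^{(d)}\khatri\mathbf{Z}^{(d)}=\mathbf{G}^{(d)}$, since the Khatri--Rao product stacks column-wise Kronecker products and the Kronecker product of two vectors is the column-wise vectorization of their outer product. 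Collecting all $n$ gives \eqref{eq:20}, and the stated size $\mathbf{G}^{(d)}\in\mathbb{R}^{M_dR\times N}$ is immediate.

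I expect the one delicate point to be the index bookkeeping in this last step: one must check that the column-wise vectorization convention used for $\operatorname{vec}(\mathbf{W}^{(d)})$ is the same ordering that the column-wise Khatri--Rao product imposes on the rows of $\mathbf{G}^{(d)}$, so that the pair $(m_d,r)$ is enumerated identically on both sides — this is exactly what is guaranteed by the Kronecker-product convention fixed in the notation section (the $\mathbf{A}\otimes\mathbf{B}\in\mathbb{R}^{KI\times LJ}$ block ordering). Everything else — the mixed-product property, the expansion over $m_d$, and the identification of the Hadamard and Khatri--Rao structures — is routine. As an entry-free alternative, one can instead apply the identity $(\mathbf{A}\khatri\mathbf{B})^T(\mathbf{C}\khatri\mathbf{D})=(\mathbf{A}^T\mathbf{C})\circledast(\mathbf{B}^T\mathbf{D})$ repeatedly to $\mathbf{\Phi}^T\mathbf{w}=(\mathbf{\Phi}^{(1)}\khatri\cdots\khatri\mathbf{\Phi}^{(D)})^T(\mathbf{W}^{(1)}\khatri\cdots\khatri\mathbf{W}^{(D)})\bm{1}_R$ to reduce $\mathbf{\Phi}^T\mathbf{w}$ to $\bigl(\circledast_{k=1}^{D}\mathbf{\Phi}^{(k)T}\mathbf{W}^{(k)}\bigr)\bm{1}_R$, then peel the $d$th Hadamard factor back off and re-vectorize; this reaches the same conclusion under the same conventions.
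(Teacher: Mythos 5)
Your proposal is correct and follows essentially the same route as the paper's own proof: expand $\bm{\varphi}(\mathbf{x}_n)^T\mathbf{w}$ sample-by-sample via multilinearity, isolate the $d$th factor against the Hadamard product of the remaining $\mathbf{W}^{(k)T}\bm{\varphi}^{(k)}$, re-vectorize, and stack over $n$; your identity $\mathbf{u}^T\mathbf{A}\mathbf{v}=\operatorname{vec}(\mathbf{A})^T\operatorname{vec}(\mathbf{u}\mathbf{v}^T)$ simply packages the paper's explicit index reshuffling. The bookkeeping point you flagged is real but harmless: with column-wise vectorization, $\operatorname{vec}\bigl(\bm{\varphi}^{(d)}\mathbf{z}_n^{(d)T}\bigr)=\mathbf{z}_n^{(d)}\otimes\bm{\varphi}^{(d)}$, i.e.\ the $n$th column of $\bigl(\circledast_{k\neq d}\mathbf{W}^{(k)T}\mathbf{\Phi}^{(k)}\bigr)\khatri\mathbf{\Phi}^{(d)}$ rather than of $\mathbf{\Phi}^{(d)}\khatri\bigl(\circledast_{k\neq d}\mathbf{W}^{(k)T}\mathbf{\Phi}^{(k)}\bigr)$ --- which is exactly the ordering the paper's own appendix derivation arrives at, the theorem statement's opposite ordering being an internal inconsistency of the paper (a fixed perfect-shuffle permutation of the rows of $\mathbf{G}^{(d)}$), not a flaw in your argument.
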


\begin{proof}
See Section~1 of the Appendix.
\end{proof}
The update for the $d$th factor matrix $\mathbf{W}^{(d)}$ is based on two main sources of information, as shown in Figure \ref{fig:CPD_model}. The first source comes from the observed data and related variables, including the other factor matrices $\mathbf{W}^{(k)}$ for $k \ne d$ and the hyperparameter $\tau$, which are included in the likelihood term \eqref{eq:7}. The second source comes from sparsity parameters $\bm{\Lambda}_R$ and $\bm{\Lambda}_{M_d}$, which contribute through the prior term \eqref{eq:9}. By applying \eqref{eq:19}, the posterior mean $\operatorname{vec}(\tilde{\mathbf{W}}^{(d)})$ and covariance matrix $\mathbf{\Sigma}^{(d)}$ of
\begin{equation}
    q_{\mathbf{W}^{(d)}}(\operatorname{vec}(\mathbf{W}^{(d)})) = \mathcal{N}\left(\operatorname{vec}(\mathbf{W}^{(d)}) \mid  \operatorname{vec}(\tilde{\mathbf{W}}^{(d)}), \mathbf{\Sigma}^{(d)}\right),  \quad \forall d \in [1, D]
    \label{eq:21}
\end{equation}
are updated by
\begin{equation}
    \begin{aligned}
        \operatorname{vec}(\tilde{\mathbf{W}}^{(d)}) &= \mathbb{E}_q\left[\tau\right]  \mathbf{\Sigma}^{(d)} \hspace{0.3mm}\mathbb{E}_q\left[ \textbf{G}^{(d)} \right] \hspace{1.5mm}\mathbf{y}, \\
        \mathbf{\Sigma}^{(d)} &=\left[\mathbb{E}_q\left[\tau\right] \mathbb{E}_q\left[\textbf{G}^{(d)} \textbf{G}^{(d)T} \right] + \mathbb{E}_q\left[\bm{\Lambda}_R\right] \otimes \mathbb{E}_q\left[\bm{\Lambda}_{M_d} \right]\right]^{-1}.
    \end{aligned}
    \label{eq:22}
\end{equation}
See Section 3 of the Appendix for a detailed derivation of \eqref{eq:22}. The matrix $\mathbb{E}_q\left[\mathbf{G}^{(d)} \mathbf{G}^{(d)T} \right]$ represents the posterior covariance of the model fitting term $\mathbf{\Phi}^T \mathbf{w}$ in \eqref{eq:20}, excluding the \( d \)th factor matrix $\mathbf{W}^{(d)}$. In other words, it corresponds to the posterior covariance of the design matrix $\mathbf{G}^{(d)}$  which combines the features \( \mathbf{\Phi}^{(d)} \) with all factor matrices \( \mathbf{W}^{(k)} \) for all \( k \neq d \), leaving out the \( d \)th factor matrix. This term cannot be computed straightforwardly, and therefore, we first need to introduce the following results. In order to express \( \mathbb{E}_q\left[\mathbf{G}^{(d)} \mathbf{G}^{(d)T} \right] \) in terms of the posterior parameters of the factor matrices in \eqref{eq:22}, we reformulate it by isolating the random variables in the expression as in the following theorem. 

\begin{theorem}
For any fixed \(d \in [1,  D] \), and assuming that the random matrices \(\mathbf{W}^{(k)}\) are independent for all \( k \neq d \), the following linear relation holds:
\begin{equation*}
    \mathbb{E}_q\left[\mathbf{G}^{(d)} \mathbf{G}^{(d)T} \right] = \mathcal{R}\left\{\left( \bm{\Phi}^{(d)} \khatri \bm{\Phi}^{(d)} \right) \mathop{\circledast}_{k \neq d}^{D} \left( \bm{\Phi}^{(k)} \khatri \bm{\Phi}^{(k)} \right)^T \mathbb{E}_q \left[ \mathbf{W}^{(k)} \kronecker \mathbf{W}^{(k)} \right]\right\}_{M_dR \,\times \,M_dR},
    \label{eq:23}
\end{equation*}
where
\begin{equation}
    \mathbb{E}_q \left[ \mathbf{W}^{(k)} \kronecker \mathbf{W}^{(k)} \right] = \mathbb{E}_q \left[ \mathbf{W}^{(k)} \right] \kronecker \mathbb{E}_q \left[ \mathbf{W}^{(k)} \right] + \operatorname{Var} \left[ \mathbf{W}^{(k)} \kronecker \mathbf{W}^{(k)} \right],
    \label{eq:24}
\end{equation}
and \( \mathcal{R}\left\{\cdot\right\}_{M_dR \times M_dR} \) is the operator that reshapes its \( M^2 \times R^2 \) argument into a matrix of size \( M_d R \times M_d R \).
\label{theorem4}
\end{theorem}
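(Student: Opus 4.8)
The plan is to start from the definition $\mathbf{G}^{(d)} = \mathbf{\Phi}^{(d)} \khatri \left( \circledast_{k \neq d} \mathbf{W}^{(k)T} \mathbf{\Phi}^{(k)} \right)$ and compute $\mathbf{G}^{(d)} \mathbf{G}^{(d)T}$ directly, column by column. The key identity to exploit is the mixed-product rule relating Khatri--Rao and Hadamard products: for matrices $\mathbf{A} \khatri \mathbf{B}$ and $\mathbf{C} \khatri \mathbf{E}$ of compatible sizes, $(\mathbf{A} \khatri \mathbf{B})(\mathbf{C} \khatri \mathbf{E})^T = (\mathbf{A}\mathbf{C}^T) \hadamard (\mathbf{B}\mathbf{E}^T)$. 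Applying this once with $\mathbf{A} = \mathbf{\Phi}^{(d)}$ and $\mathbf{B} = \circledast_{k\neq d}\mathbf{W}^{(k)T}\mathbf{\Phi}^{(k)}$ gives
\[
    \mathbf{G}^{(d)} \mathbf{G}^{(d)T} = \left(\mathbf{\Phi}^{(d)}\mathbf{\Phi}^{(d)T}\right) \hadamard \left( \mathop{\circledast}_{k\neq d} \mathbf{W}^{(k)T}\mathbf{\Phi}^{(k)}\mathbf{\Phi}^{(k)T}\mathbf{W}^{(k)} \right).
\]
The next step is to rewrite each matrix appearing in a Hadamard product as a vectorization: $\mathbf{\Phi}^{(d)}\mathbf{\Phi}^{(d)T} = \mathcal{R}\{(\mathbf{\Phi}^{(d)} \khatri \mathbf{\Phi}^{(d)})\bm{1}_N\}$ and, using the vectorization identity $\operatorname{vec}(\mathbf{A}\mathbf{X}\mathbf{B}) = (\mathbf{B}^T \kronecker \mathbf{A})\operatorname{vec}(\mathbf{X})$ together with $\operatorname{vec}(\mathbf{W}^{(k)T}\mathbf{\Phi}^{(k)}\mathbf{\Phi}^{(k)T}\mathbf{W}^{(k)}) = (\mathbf{W}^{(k)T} \kronecker \mathbf{W}^{(k)T})\operatorname{vec}(\mathbf{\Phi}^{(k)}\mathbf{\Phi}^{(k)T}) = (\mathbf{W}^{(k)} \kronecker \mathbf{W}^{(k)})^T(\mathbf{\Phi}^{(k)} \khatri \mathbf{\Phi}^{(k)})\bm{1}_N$, convert the whole Hadamard product into a single reshaped product of Khatri--Rao factored feature Gram matrices against the $\mathbf{W}^{(k)} \kronecker \mathbf{W}^{(k)}$ blocks. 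Collecting the $\bm{1}_N$ sums across modes yields the claimed $M^2 \times R^2$-to-$M_dR \times M_dR$ reshape structure.

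Once the deterministic identity for $\mathbf{G}^{(d)}\mathbf{G}^{(d)T}$ is in hand, I would take the expectation $\mathbb{E}_q[\cdot]$ over the mean-field variational distribution. The only random factors are the $\mathbf{W}^{(k)}$ for $k \neq d$; since these are assumed mutually independent under the mean-field factorization \eqref{eq:18}, the expectation distributes over the outer $\circledast_{k\neq d}$ and over the product of distinct modes, so $\mathbb{E}_q$ can be pushed inside onto each $\mathbb{E}_q[\mathbf{W}^{(k)} \kronecker \mathbf{W}^{(k)}]$ block individually while the feature Gram matrices, being deterministic, pass through untouched. Finally, \eqref{eq:24} is just the definition of the second moment of $\mathbf{W}^{(k)} \kronecker \mathbf{W}^{(k)}$ split into (outer product of means) plus (variance correction term), which is immediate and needs no further argument beyond noting that $\mathbb{E}[\mathbf{A}\kronecker\mathbf{A}] \neq \mathbb{E}[\mathbf{A}]\kronecker\mathbb{E}[\mathbf{A}]$ in general because $\mathbf{A}\kronecker\mathbf{A}$ is quadratic in $\mathbf{A}$.

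The main obstacle I anticipate is purely bookkeeping: tracking how the repeated application of the Khatri--Rao/Hadamard mixed-product rule interacts with the vectorization and reshape operators, and in particular verifying that the index orderings produced by $\mathcal{R}\{\cdot\}_{M_dR \times M_dR}$ match the column-wise $\operatorname{vec}$ convention fixed in Section~\ref{sec:Background}, so that the $M^2 \times R^2 \to M_dR \times M_dR$ reshape is exactly the right permutation rather than a transpose of it. A careful treatment of which Kronecker factor is "outer" versus "inner" (i.e.\ whether $\mathbf{W}^{(k)} \kronecker \mathbf{W}^{(k)}$ or its transpose appears, and on which side) is where sign-of-convention errors would creep in, so I would pin down a small running example with $D=2$, $M_d=R=1$ or $2$ to sanity-check the final reshape before writing the general case. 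The independence step and the moment decomposition \eqref{eq:24} are routine by comparison.
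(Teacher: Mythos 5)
Your overall plan (exploit the Kronecker/Khatri--Rao structure, push $\mathbb{E}_q$ inside by mean-field independence, then split the second moment as in \eqref{eq:24}) is in the right spirit, but the opening identity on which the whole derivation rests is wrong, in a way that does not survive repair by bookkeeping. The mixed-product rule you invoke holds for the \emph{Gram matrix} of a Khatri--Rao product, $(\mathbf{A}\khatri\mathbf{B})^T(\mathbf{C}\khatri\mathbf{E})=(\mathbf{A}^T\mathbf{C})\hadamard(\mathbf{B}^T\mathbf{E})$, i.e.\ for $\mathbf{G}^{(d)T}\mathbf{G}^{(d)}\in\mathbb{R}^{N\times N}$, not for $\mathbf{G}^{(d)}\mathbf{G}^{(d)T}\in\mathbb{R}^{M_dR\times M_dR}$, which is what the theorem needs. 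Your first display is already dimensionally inconsistent: $\bm{\Phi}^{(d)}\bm{\Phi}^{(d)T}$ is $M_d\times M_d$ while $\mathbf{W}^{(k)T}\bm{\Phi}^{(k)}\bm{\Phi}^{(k)T}\mathbf{W}^{(k)}$ is $R\times R$, so their Hadamard product is undefined and in any case could not equal an $M_dR\times M_dR$ matrix. More fundamentally, your subsequent plan of contracting each mode separately with $\bm{1}_N$ (``collecting the $\bm{1}_N$ sums across modes'') exchanges the sum over samples with the product over modes: the true quantity is $\sum_n \circledast_{k\neq d}\bigl(\mathbf{W}^{(k)T}\bm{\varphi}^{(k)}_n\bm{\varphi}^{(k)T}_n\mathbf{W}^{(k)}\bigr)\otimes\bigl(\bm{\varphi}^{(d)}_n\bm{\varphi}^{(d)T}_n\bigr)$, whereas your construction produces (a reshape of) $\circledast_{k\neq d}\bigl(\sum_n\cdots\bigr)$, and $\sum_n\prod_k\neq\prod_k\sum_n$. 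In the theorem's right-hand side the sample sum is carried exactly once, by the single matrix product of $(\bm{\Phi}^{(d)}\khatri\bm{\Phi}^{(d)})\in\mathbb{R}^{M_d^2\times N}$ against the Hadamard (over modes) of the $N\times R^2$ matrices $(\bm{\Phi}^{(k)}\khatri\bm{\Phi}^{(k)})^T\mathbb{E}_q[\mathbf{W}^{(k)}\otimes\mathbf{W}^{(k)}]$; the $N$ dimension must remain shared across modes until that final contraction.

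The way to fix it is essentially the paper's route, and is consistent with your stated intention to work ``column by column'': write $\mathbf{G}^{(d)}\mathbf{G}^{(d)T}=\sum_{n=1}^{N}\mathbf{g}^{(d)}(x_n)\,\mathbf{g}^{(d)}(x_n)^T$, expand each per-sample outer product in indices (or via $(\mathbf{a}\otimes\mathbf{b})(\mathbf{c}\otimes\mathbf{e})^T=(\mathbf{a}\mathbf{c}^T)\otimes(\mathbf{b}\mathbf{e}^T)$) so that the randomness appears only through terms $\mathbf{w}^{(k)}_{m_k}\otimes\mathbf{w}^{(k)}_{j_k}$, use mean-field independence across modes to push $\mathbb{E}_q$ onto each mode's second moment $\mathbb{E}_q[\mathbf{W}^{(k)}\otimes\mathbf{W}^{(k)}]$, and only then stack the $n$-dependence into $(\bm{\Phi}^{(k)}\khatri\bm{\Phi}^{(k)})$ and $(\bm{\Phi}^{(d)}\khatri\bm{\Phi}^{(d)})$ and apply the reshape $\mathcal{R}\{\cdot\}_{M_dR\times M_dR}$. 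Your remarks on independence and on the decomposition \eqref{eq:24} (with $\operatorname{Var}[\mathbf{W}^{(k)}\otimes\mathbf{W}^{(k)}]$ obtained by reshaping $\bm{\Sigma}^{(k)}$ as in Lemma~\ref{lemma1}) are correct and match the paper; it is only the deterministic identity for $\mathbf{G}^{(d)}\mathbf{G}^{(d)T}$ that must be replaced.
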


\begin{proof}
See Section~4 of the Appendix.
\end{proof}

\noindent We require Theorem~\ref{theorem4} to evaluate the variance term  \(\operatorname{Var}[\mathbf{W}^{(k)} \otimes \mathbf{W}^{(k)}]\)  that appears in \(\mathbb{E}_q[\mathbf{G}^{(d)} \mathbf{G}^{(d)T}]\).  Notice that when this variance is zero, we recover the standard ALS update  equation for \(\mathbf{W}^{(d)}\). Now, in order to evaluate \eqref{eq:24} in terms of the posterior parameters of the factor matrices from \eqref{eq:22}, we need to introduce the following Lemma.

\begin{lemma}
\label{lemma1}
    Let \( \mathbf{W}^{(k)} \in \mathbb{R}^{M_k \times R} \) and let \( \operatorname{vec}(\mathbf{W}^{(k)}) \in \mathbb{R}^{M_kR} \) be its column-wise vectorization. Then,  
\begin{equation}
    \mathbf{W}^{(k)} \kronecker \mathbf{W}^{(k)} =  \mathcal{R} \left\{\operatorname{vec}(\mathbf{W}^{(k)}) \operatorname{vec}(\mathbf{W}^{(k)})^T \right\}_{M_k^2 \times R^2}
    \label{eq:25}
\end{equation}  
where \( \mathcal{R} \left\{. \right\}_{M_k^2 \times R^2} \) reshapes the outer product \( \operatorname{vec}(\mathbf{W}^{(k)}) \operatorname{vec}(\mathbf{W}^{(k)})^T \) from dimensions \( M_kR \times M_kR \) to \( M_k^2 \times R^2 \).
\end{lemma}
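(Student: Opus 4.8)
The plan is to prove the identity by a direct, index-level verification, leaning on the fact that a Kronecker product of two matrices is entirely determined by the Kronecker products of their columns. Throughout I would drop the superscript and write $\mathbf{W} = \mathbf{W}^{(k)} \in \mathbb{R}^{M \times R}$ with columns $\mathbf{w}_1, \dots, \mathbf{w}_R \in \mathbb{R}^{M}$, so that $\operatorname{vec}(\mathbf{W}) = [\mathbf{w}_1^{T}, \dots, \mathbf{w}_R^{T}]^{T}$ is the vertical concatenation of those columns.

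First I would describe the left-hand side column-by-column. Using the elementary fact that the column of $\mathbf{A} \otimes \mathbf{B}$ with index $(p-1)R + q$ equals the Kronecker product of column $p$ of $\mathbf{A}$ with column $q$ of $\mathbf{B}$, the $((p-1)R+q)$-th column of $\mathbf{W} \otimes \mathbf{W}$ is $\mathbf{w}_p \otimes \mathbf{w}_q$ for $p,q \in [1,R]$. Combining this with the standard identity $\operatorname{vec}(\mathbf{a}\mathbf{b}^{T}) = \mathbf{b} \otimes \mathbf{a}$ gives $\mathbf{w}_p \otimes \mathbf{w}_q = \operatorname{vec}(\mathbf{w}_q \mathbf{w}_p^{T})$, so every column of $\mathbf{W} \otimes \mathbf{W}$ is the vectorization of a rank-one $M \times M$ matrix built from a pair of columns of $\mathbf{W}$.

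Next I would treat the right-hand side. Because $\operatorname{vec}(\mathbf{W})$ is the stacking of the $R$ columns of $\mathbf{W}$, the outer product $\operatorname{vec}(\mathbf{W}) \operatorname{vec}(\mathbf{W})^{T}$ carries a natural $R \times R$ block structure whose $(i,j)$ block is $\mathbf{w}_i \mathbf{w}_j^{T} \in \mathbb{R}^{M \times M}$. Applying $\mathcal{R}\{\cdot\}_{M^2 \times R^2}$ reorganizes these entries so that each $M \times M$ block is vectorized into a length-$M^2$ column and the $R^2$ resulting columns are laid out according to their block indices. Matching this layout against the column description from the previous step, and using $\operatorname{vec}(\mathbf{w}_q\mathbf{w}_p^{T}) = \mathbf{w}_p \otimes \mathbf{w}_q$, shows that the reshaped outer product agrees with $\mathbf{W} \otimes \mathbf{W}$ column by column, hence the two matrices are equal. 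Equivalently, one may argue purely entrywise: the entry of $\mathbf{W} \otimes \mathbf{W}$ at row $(p_1,p_2)$ and column $(q_1,q_2)$ is $w_{p_1 q_1} w_{p_2 q_2}$, and the same product of factors is produced by $\operatorname{vec}(\mathbf{W})_a \operatorname{vec}(\mathbf{W})_b$ once the linear indices $a,b$ are resolved into their row/column multi-indices under column-wise vectorization.

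The main obstacle is bookkeeping rather than any conceptual difficulty: one must make the action of $\mathcal{R}\{\cdot\}_{M^2 \times R^2}$ on the $M R \times M R$ array fully explicit and check that it aligns with the Kronecker column/block ordering — in particular being careful about the row-block versus column-block indexing of the outer product when identifying which vectorized block becomes which column of the output. Once the index maps on both sides are written out, the claim collapses onto the two elementary facts used above, namely the column rule for Kronecker products and $\operatorname{vec}(\mathbf{a}\mathbf{b}^{T}) = \mathbf{b} \otimes \mathbf{a}$, together with commutativity of scalar multiplication.
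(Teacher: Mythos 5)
Your two ingredients are the right ones — the column rule for Kronecker products and $\operatorname{vec}(\mathbf{a}\mathbf{b}^T)=\mathbf{b}\otimes\mathbf{a}$ — but the proof has a genuine gap, and it sits exactly in the step you postpone as ``bookkeeping'': the assertion that $\mathcal{R}\{\cdot\}_{M_k^2\times R^2}$ vectorizes each $M_k\times M_k$ block of $\operatorname{vec}(\mathbf{W})\operatorname{vec}(\mathbf{W})^T$ into a column. Under the paper's own definition of $\mathcal{R}$ (a reshape preserving the column-wise linear order consistent with $\operatorname{vec}$), this is false: for $M_k\ge 2$ and $R\ge 2$ the entries of a block $\mathbf{w}_i\mathbf{w}_j^T$ are not contiguous in the column-major ordering of the $M_kR\times M_kR$ outer product, so an order-preserving reshape cannot send blocks to columns. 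A decisive counterexample is $M_k=R\ge 2$: then reshaping from $M_kR\times M_kR$ to $M_k^2\times R^2$ is the identity map, and the claimed identity would read $\mathbf{W}\otimes\mathbf{W}=\operatorname{vec}(\mathbf{W})\operatorname{vec}(\mathbf{W})^T$, which is impossible for generic $\mathbf{W}$ since the right-hand side is symmetric of rank one while the left-hand side has rank $(\operatorname{rank}\mathbf{W})^2$. The purely entrywise check you sketch, if actually carried out with the paper's definition of $\mathcal{R}$, exposes this index mismatch rather than confirming the claim, so the postponed bookkeeping is not routine — it is where the literal statement breaks.

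What your column/block matching does establish is the identity with $\mathcal{R}$ read as a Van Loan--Pitsianis-type block rearrangement: vectorize the $(q,p)$ block $\mathbf{w}_q\mathbf{w}_p^T$ of the outer product and place it as column $(p-1)R+q$ (i.e., order the vectorized blocks column-major over their block indices); with that operator your argument closes immediately, since column $(p-1)R+q$ of $\mathbf{W}\otimes\mathbf{W}$ is $\mathbf{w}_p\otimes\mathbf{w}_q=\operatorname{vec}(\mathbf{w}_q\mathbf{w}_p^T)$. For comparison, the paper supplies no proof of this lemma at all — only the ``intuitive interpretation'' following it, which is the same block-based reading you adopt — so there is no argument of the authors' to measure against; but a complete proof must make the rearrangement explicit (a permutation of entries, e.g. a reshape to a four-way array followed by an index permutation), and note that it is not the plain column-order reshape defined in Section 2.1. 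The same care is needed where the lemma is invoked to rearrange $\mathbf{\Sigma}^{(k)}$ in \eqref{eq:26}.
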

The intuitive interpretation of Lemma~\ref{lemma1} is that the Kronecker product of \( \mathbf{W}^{(k)} \) with itself is the same as the outer product of $\operatorname{vec}(\mathbf{W}^{(k)})$ with itself, but reshaped into a structured block form. Using this identity, the variance term in \eqref{eq:24} can be computed by reshaping  the covariance matrix \( \mathbf{\Sigma}^{(k)} \) of \( \operatorname{vec}(\mathbf{W}^{(k)}) \) as 

\begin{equation}
    \operatorname{Var} \left[ \mathbf{W}^{(k)} \kronecker \mathbf{W}^{(k)} \right] = \mathcal{R} \left\{ \mathbf{\Sigma}^{(k)} \right\}_{M_k^2 \times R^2}.
    \label{eq:26}
\end{equation}  
By combining \eqref{eq:26} with $\mathbb{E}_q \left[ \mathbf{W}^{(k)} \right] = \tilde{\mathbf{W}}^{(k)}$ from \eqref{eq:22}, the term $ \mathbb{E}_q\left[\mathbf{G}^{(d)} \mathbf{G}^{(d)T} \right]$ in \eqref{eq:24} can be computed explicitly as

\begin{equation}
  \begin{aligned}
 \mathcal{R}\left\{\left( \bm{\Phi}^{(d)} \khatri \bm{\Phi}^{(d)} \right) \mathop{\circledast}_{k \neq d}^{D} \left( \bm{\Phi}^{(k)} \khatri \bm{\Phi}^{(k)} \right)^T \left( \tilde{\mathbf{W}}^{(k)} \kronecker \tilde{\mathbf{W}}^{(k)} +  \mathcal{R} \left\{ \mathbf{\Sigma}^{(k)} \right\}_{M_k^2 \times R^2} \right)\right\}_{M_dR \times M_dR}. \\
       \\
  \end{aligned}
  \label{eq:27}
\end{equation}  

\vspace{1mm} 
An intuitive interpretation of Equation~\eqref{eq:22} is as follows. The posterior covariance \( \mathbf{\Sigma}^{(d)} \) is updated by combining prior information, \( \mathbb{E}_q[\bm{\Lambda}_R] \) and \( \mathbb{E}_q[\bm{\Lambda}_{M_d}] \) with contributions from other factor matrices $\mathbb{E}_q\left[\textbf{G}^{(d)} \textbf{G}^{(d)T} \right]$. These contributions are scaled by the corresponding feature matrix, as shown in \eqref{eq:24}, and represent the data-dependent part of the update. The impact of this data-dependent term is further weighted by \( \mathbb{E}_q[\tau] \), which reflects the model's fit to the data. Hence, better model fit leads to greater reliance on information from the other factors rather than the prior.

The posterior mean \( \operatorname{vec}(\tilde{\mathbf{W}}^{(d)}) \) is computed by projecting the outcome variable \( \mathbf{y} \) onto the expected design matrix \( \mathbb{E}_q[\mathbf{G}^{(d)}] \), which captures interactions between the features and the other factor matrices except the $d$th one. This projection is then scaled by the posterior covariance \( \mathbf{\Sigma}^{(d)} \). Finally, the result is scaled by the expected noise precision \( \mathbb{E}_q[\tau] \), amplifying the influence of the data when the model fit is good.

\subsubsection{\texorpdfstring{Posterior distribution of $\bm{\Lambda}_{R}$ and $\bm{\Lambda}_{M_d}$}{Posterior distribution of Lambda\_R and Lambda\_mD}}

Instead of point estimation through optimization, learning the posterior of $\bm{\lambda}_R$ is crucial for automatic rank inference. As seen in Figure 1, the inference of $\bm{\lambda}_R$ can be done by receiving messages from all the factor matrices and incorporating the messages from its hyperprior. By applying \eqref{eq:19}, we can identify the posteriors of $\lambda_r$, $\forall r \in [1, R]$ as independent Gamma distributions, 

\begin{equation}
    q_{\bm{\lambda}_R}(\bm{\lambda}_R) = \prod_{r=1}^R \text{Ga}(\lambda_r \mid c_N^r, d_N^r), 
    \label{eq:28}
\end{equation}

\noindent where $c_N^r, d_N^r$ denote the posterior parameters learned from $N$ observations and are updated by 

\begin{equation}
    \begin{aligned}
        c_N^r &= c_0^r + \frac{1}{2} \sum_{d=1}^D M_d, \\
        d_N^r &= d_o^r + \frac{1}{2} \sum_{d=1}^D \mathbb{E}_q\left[ \mathbf{w}^{(d)T}_{r} \bm{\Lambda}_{M_d} \mathbf{w}^{(d)}_{r} \right].
    \end{aligned}
    \label{eq:29}
\end{equation}
See Section 5 of the Appendix for a detailed derivation of \eqref{eq:29}. The expectation of the inner product of the $r$th column in the $d$th factor matrix $ \mathbf{w}^{(d)}_{r}$ with respect to the $q$-distribution can be computed using the posterior parameters for $ \mathbf{W}^{(d)}$ in equation \eqref{eq:22} as

\begin{equation}
\mathbb{E}_q \left[ \mathbf{w}^{(d)T}_{r} \bm{\Lambda}_{M_d} \mathbf{w}^{(d)}_{r} \right] = \tilde{\mathbf{w}}^{(d)T}_{r} \bm{\Lambda}_{M_d} \tilde{\mathbf{w}}^{(d)}_{r} + \bm{\lambda}_{M_d}^T \, \operatorname{Var} \left(\mathbf{w}^{(d)}_{r} \right),
\label{eq:30}
\end{equation}

\noindent where $ \tilde{\mathbf{w}}^{(d)}_{r} $ denotes the $r$th column of $ \tilde{\mathbf{W}}^{(d)} $. The second term accounts for the uncertainty in the posterior distribution, where the variance of each element in \( \mathbf{w}^{(d)}_{r} \) is weighted by the corresponding \( \lambda_{m_d} \) in \( \bm{\lambda}_{M_d} \).   Notice that the diagonal elements of the posterior covariance matrix \( \mathbf{\Sigma}^{(d)} \) corresponds to the variance of every element in $\mathbf{W}^{(d)}$, i.e., $\operatorname{Var}(w^{(d)}_{m_dr})$.  Hence, the variance term $\operatorname{Var}(\mathbf{w}^{(d)}_{r})$ in \eqref{eq:30} is also contained in $\operatorname{diag}(\mathbf{\Sigma}^{(d)})$. To efficiently compute this variance $\forall r \in [1,R]$ at once, we reshape the diagonal elements of \( \mathbf{\Sigma}^{(d)} \) into a matrix \( \mathbf{V}^{(d)} \in \mathbb{R}^{M_d \times R} \), defined as  

\begin{equation}
\mathbf{V}^{(d)} := \mathcal{R} \left\{ \operatorname{diag}(\mathbf{\Sigma}^{(d)}) \right\}_{M_d \times R},
\end{equation}  

\noindent such that $r$th column of $\mathbf{V}^{(d)}$ corresponds to $\operatorname{Var}(\mathbf{w}^{(d)}_{r})$ in \eqref{eq:30}. By combining equations \eqref{eq:29}, \eqref{eq:30} with $\mathbf{V}^{(d)}$, we can further simplify the computation of $ \mathbf{d}_N = [d_N^1, \dots, d_N^R]^T $ as 

\begin{equation}
    \mathbf{d}_N = \mathbf{d}_0 +  \sum_{d=1}^D \operatorname{diag} \left( \tilde{\mathbf{W}}^{(d)T}  \bm{\Lambda}_{M_d} \tilde{\mathbf{W}}^{(d)} + \bm{\Lambda}_{M_d} \mathbf{V}^{(d)}   \right).
    \label{eq:31}
\end{equation}

The posterior expectation can be obtained by $\mathbb{E}_q[\bm{\lambda}_R] = [c_N^1 / d_N^1, ..., c_N^R/d_N^R]^T$, and thus $\mathbb{E}_q[\bm{\Lambda}_R] = \operatorname{diag}(\mathbb{E}_q[\bm{\lambda}_R])$. An intuitive interpretation of equation \eqref{eq:29} is that $ \lambda_r $ is updated based on the sum of squared $ L_2 $-norms of the $ r $th column, scaled by $ \bm{\Lambda}_{M_d} $ as in equation \eqref{eq:30}. Consequently, a smaller $ ||\mathbf{w}_{r}|| $ or $ \bm{\Lambda}_{M_d} $ leads to larger $\mathbb{E}_q[\lambda_r] $, and updated priors of factor matrices, which in turn more strongly enforces the $r$th column to be zero.

Just as $\bm{\lambda}_R$ operates on the columns of $\mathbf{W}^{(d)}$, $\bm{\lambda}_{M_d}$ acts on its rows . Therefore, learning the posterior of $\bm{\lambda}_{M_d}$ is essential for determining the feature dimension. The key difference is that $\bm{\lambda}_{M_d}$ is specific to each factor matrix, whereas $\bm{\lambda}_R$ is shared across all factor matrices. As shown in Figure 1, the inference of $\bm{\lambda}_{M_d}$ is performed by gathering information from the $d$th factor matrix and incorporating information from its hyperprior. By applying \eqref{eq:19}, we can identify the posteriors of $\lambda_{m_d}$, $\forall d \in [1, D]$ and $\forall m_d \in [1, M_d]$ as an independent Gamma distribution,

\begin{equation}
    q_{\bm{\lambda_{M}}}(\bm{\lambda}_{M_d}) = \prod_{m_d=1}^{M_d} \text{Ga}(\lambda_{m_d} \mid g^{m_d}_N, h^{m_d}_N),
    \label{eq:32}
\end{equation}

\noindent where $g^{m_d}_N, h^{m_d}_N$ denote the posterior parameters learned from $N$ observations and are updated by 

\begin{equation}
    \begin{aligned}
        g^{m_d}_N &=  g^{m_d}_0 + \frac{R}{2},\\
        h^{m_d}_N &=  h^{m_d}_0 + \mathbb{E}_q \left[  \mathbf{w}^{(d)T}_{m_d} \bm{\Lambda}_{R} \mathbf{w}^{(d)}_{m_d}  \right].
    \end{aligned}
    \label{eq:33}
\end{equation}
See Section 6 of the Appendix for a detailed derivation of \eqref{eq:33}. The expectation of the inner product of the $m_d$th row in $d$th factor matrix w.r.t. $q$ distribution can be computed using the posterior parameters $\bm{W}^{(d)}$ in \eqref{eq:22},

\begin{equation}
        \mathbb{E}_q \left[  \mathbf{w}^{(d)T}_{m_d} \bm{\Lambda}_{R} \mathbf{w}^{(d)}_{m_d}  \right] = \tilde{\mathbf{w}}^{(d)T}_{m_d} \bm{\Lambda}_{R} \tilde{\mathbf{w}}^{(d)}_{m_d} + \operatorname{Var} \left( \mathbf{w}^{(d)}_{m_d}\right)^T \bm{\lambda}_R,
    \label{eq:34}
\end{equation}

\noindent where $\tilde{\mathbf{w}}^{(d)}_{m_d}$ denotes the $m_d$th row of $\tilde{\mathbf{W}}^{(d)}$. The second term in the expression accounts for the uncertainty and is the sum of the variances of the each element in $\mathbf{w}^{(d)}_{m_d}$, each weighted by the corresponding $\lambda_{r}$ in $\bm{\lambda}_R$. To efficiently evaluate this variance $\forall m_d \in [1, M_d]$ we can make use of $\mathbf{V}^{(d)}$ as $m_d$th row of it corresponds $\operatorname{Var}( \mathbf{w}^{(d)}_{m_d})$. By combining \eqref{eq:33}, \eqref{eq:34} with $\mathbf{V}^{(d)}$, we can further simplify the computation of $\mathbf{h}^{d}_N = [h_N^{1_d}, ..., h_N^{M_d}]^T$ as 

\begin{equation}
    \mathbf{h}^{d}_N = \mathbf{h}^{d}_0 + \operatorname{diag} \left( \tilde{\mathbf{W}}^{(d)} \bm{\Lambda}_{R} \tilde{\mathbf{W}}^{(d)T} +  \mathbf{V}^{(d)} \bm{\Lambda}_{R}\right).
    \label{eq:35}
\end{equation}

The posterior expectation can be obtained by $\mathbb{E}_q[\bm{\lambda}_{M_d}] = [g_N^{1_d} / h_N^{1_d}, ..., g_N^{M_d} / h_N^{M_d}]^T$, and thus $\mathbb{E}_q[\bm{\Lambda}_{M_d}] = \operatorname{diag}(\mathbb{E}_q[\bm{\lambda}_{M_d}])$. Similar to ${\lambda_r}$, $\lambda_{m_d}$ is updated by the sum of squared $L_2$ norm of the $m_d$th row scaled by $\bm{\Lambda}_{R}$, expressed by \eqref{eq:34} from the $d$th factor matrix. Therefore, intuitively, smaller $||\mathbf{w}^{(d)}_{m_d}||$ or $\bm{\Lambda}_{R}$ leads to larger $\mathbb{E}_q[\lambda_{m_d}]$ and updated priors of factor matrices, which in turn more strongly enforces the $m_d$th row to be zero. 

\vspace{0.5mm}
The updates of $\bm{\lambda}_R$ and $\bm{\lambda}_{M_d}$ are interdependent. $\bm{\lambda}_R$ regulates the importance of rank components across all factor matrices, where a larger $\lambda_r$ implies a less important column. Conversely, $\bm{\lambda}_{M_d}$ determines the relevance of feature dimensions in the $d$th factor matrix, with larger $\lambda_{m_d}$ indicating less important rows. In updating the scale parameter $d_N$ for $\bm{\lambda}_R$ in \eqref{eq:31}, the contributions of each row's mean and variance are weighted by $\lambda_{m_d}$. Similarly, when updating $h_N^d$ for $\bm{\lambda}_{M_d}$ in \eqref{eq:35}, the contributions of each column are scaled by $\lambda_r$.

\vspace{0.5mm}
The equations \eqref{eq:31} and \eqref{eq:35} suggest a negative correlation between $\bm{\lambda}_R$ and $\bm{\lambda}_{M_d}$. To illustrate, consider the precision $\lambda_{m_d}$ of the $m_d$th row in the $d$th factor matrix, which is updated based on the squared $L_2$ norm of that row, weighted by $\bm{\Lambda}_R$ as shown in \eqref{eq:34}. If the row contains large non-zero entries while the corresponding $\lambda_r$ values are also large, these entries are heavily penalized by the large $\lambda_r$ values. Consequently, scaling them by large $\lambda_r$ values 
increases the corresponding scale parameter \texorpdfstring{$h_N^{m_d}$}{hN^{md}}, which in turn reduces the precision $\lambda_{m_d}$. This mechanism enables the model to compensate for the penalization imposed by $\lambda_r$, thus preserving the influence of important values in the row. As a result, the model gains more flexibility by balancing penalization between rank components and feature dimensions, allowing it to better capture significant patterns without overly suppressing relevant features.

\subsubsection{\texorpdfstring{Posterior distribution of noise precision $\tau$}{Posterior distribution of noise precision tau}}

The noise precision $\tau$ can be inferred by receiving information from observed data and its co-parents, including all the factor matrices, and incorporating the information from its hyperprior. Applying \eqref{eq:19}, the variational posterior is a Gamma distribution , given by 

\begin{equation}
    q_{\tau}(\tau) = \text{Ga}(\tau \mid a_N, b_N),
    \label{eq:36}
\end{equation}

\noindent where the posterior parameters are updated by

\begin{equation}
    \begin{aligned}
        a_N &= a_0 + \frac{N}{2}, \\
        b_N &= b_0 + \frac{1}{2} \mathbb{E}_q \left[\| \mathbf{y} - \mathbf{\Phi}^T \mathbf{w}\|_F^{2}\right].
 \end{aligned}
 \label{eq:37}
 \end{equation}   
See Section 7 of the Appendix for a detailed derivation of \eqref{eq:37}. The posterior expectation of model error in \ref{eq:37} cannot be computed straightforwardly and, therefore, we need to introduce the following results. First, following from Theorem \ref{theorem3}, without isolating the $d$th factor matrix, the term $\mathbf{\Phi}^T \mathbf{w}$ in \eqref{eq:20} can also be written as 

\begin{equation}
    \mathbf{\Phi}^T \mathbf{w}= \boldsymbol1_R^T \left( \mathop{\circledast}_{d=1}^D \mathbf{W}^{(d)T}  \mathbf{\Phi}^{(d)}
 \right).
 \label{eq:38}
\end{equation}
\noindent  See Section 1 of the Appendix for a detailed derivation. Using this identity, the posterior expectation of model residual can be expressed as
\begin{equation}
\begin{aligned}
\mathbb{E}_q\left[\| \mathbf{y} - \mathbf{\Phi}^T \mathbf{w}\|_F^{2}\right] = \|\mathbf{y}\|_F^{2} - 2 \, \mathbf{y}^T \, (\boldsymbol1_R^T \left( \mathop{\circledast}_{d=1}^D \mathbf{E}_q \left[\mathbf{W}^{{(d)}} \right]^T \mathbf{\Phi}^{(d)} \right))  + \mathbb{E}_q\left[ \|\boldsymbol1_R^T \left( \mathop{\circledast}_{d=1}^D \mathbf{W}^{(d)}  \mathbf{\Phi}^{(d)}
\right) \|_F^2  \right],\\
\\
\end{aligned}
 \label{eq:39}
\end{equation}
where the last term can be reformulated isolating the random variables as in the following theorem. 

\begin{theorem}
 Given a set of independent random matrices \( \mathbf{W}^{(d)} \) for all \( d \in [1,  D] \), the following linear relation holds:
\begin{equation}
   \mathbb{E}_q\left[ \|\boldsymbol{1}_R^T \left( \mathop{\circledast}_{d=1}^D \mathbf{W}^{(d)}  \mathbf{\Phi}^{(d)} \right) \|_F^2  \right] =  \mathbf{1}_{N}^T \left(  \mathop{\circledast}_{d=1}^D \left( \bm{\Phi}^{(d)} \khatri \bm{\Phi}^{(d)} \right)^T  \mathbb{E}_q \left[ \mathbf{W}^{(d)} \otimes \mathbf{W}^{(d)}\right] \right) \mathbf{1}_{R^2},
 \label{eq:40}
\end{equation}
where $\mathbb{E}_q [ \mathbf{W}^{(d)} \kronecker \mathbf{W}^{(d)} ] = \mathbb{E}_q [ \mathbf{W}^{(d)} ] \kronecker \mathbb{E}_q [ \mathbf{W}^{(d)} ] + \operatorname{Var} [ \mathbf{W}^{(d)} \kronecker \mathbf{W}^{(d)}]$, and $\mathbb{E}_q [ \mathbf{W}^{(d)} ] = \tilde{\mathbf{W}}^{(d)}$.
\label{theorem6}
\end{theorem}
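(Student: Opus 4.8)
The plan is to establish the identity first for \emph{deterministic} factor matrices and then average under $q$, exploiting the mean-field independence across modes to push the expectation inside the Hadamard product. Throughout, write $\bm{\varphi}^{(d)}_n := \bm{\varphi}^{(d)}(x^{(d)}_n)$ and let $\mathbf{w}^{(d)}_r$ denote the $r$th column of $\mathbf{W}^{(d)}$. First I would unfold the left-hand side entrywise: by Theorem~\ref{theorem3} (equivalently \eqref{eq:38}), the $1 \times N$ row vector $\boldsymbol{1}_R^T(\circledast_{d=1}^D \mathbf{W}^{(d)T}\mathbf{\Phi}^{(d)})$ has $n$th entry $\bm{\varphi}(\mathbf{x}_n)^T\mathbf{w} = \sum_{r=1}^R \prod_{d=1}^D \mathbf{w}^{(d)T}_r \bm{\varphi}^{(d)}_n$, hence
\[
\| \boldsymbol{1}_R^T (\circledast_{d=1}^D \mathbf{W}^{(d)T}\mathbf{\Phi}^{(d)}) \|_F^2 = \sum_{n=1}^N \sum_{r,r'=1}^R \prod_{d=1}^D (\mathbf{w}^{(d)T}_r \bm{\varphi}^{(d)}_n)(\mathbf{w}^{(d)T}_{r'} \bm{\varphi}^{(d)}_n).
\]
The key algebraic step is the scalar identity $(\mathbf{u}^T\mathbf{a})(\mathbf{v}^T\mathbf{a}) = (\mathbf{u} \otimes \mathbf{v})^T (\mathbf{a} \otimes \mathbf{a})$ applied per mode: since $\mathbf{w}^{(d)}_r \otimes \mathbf{w}^{(d)}_{r'}$ is the column of $\mathbf{W}^{(d)} \otimes \mathbf{W}^{(d)}$ indexed by the pair $(r,r')$ and $\bm{\varphi}^{(d)}_n \otimes \bm{\varphi}^{(d)}_n$ is the $n$th column of $\bm{\Phi}^{(d)} \odot \bm{\Phi}^{(d)}$, each factor $(\mathbf{w}^{(d)T}_r \bm{\varphi}^{(d)}_n)(\mathbf{w}^{(d)T}_{r'} \bm{\varphi}^{(d)}_n)$ equals the $(n,(r,r'))$ entry of the $N \times R^2$ matrix $(\bm{\Phi}^{(d)} \odot \bm{\Phi}^{(d)})^T (\mathbf{W}^{(d)} \otimes \mathbf{W}^{(d)})$. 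Because the CP rank $R$ is shared across modes, the pair $(r,r')$ enumerates the $R^2$ columns in the same order for every $d$, so the product over $d$ in the display above is the Hadamard product of these $D$ matrices and the outer double sum over $n$ and $(r,r')$ is exactly the contraction $\mathbf{1}_N^T (\cdot) \mathbf{1}_{R^2}$; this gives \eqref{eq:40} in the deterministic case, using the same reshape/ordering bookkeeping as in the proof of Theorem~\ref{theorem4}.

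Next I would take $\mathbb{E}_q[\cdot]$ of this deterministic identity. Each factor $(\bm{\Phi}^{(d)} \odot \bm{\Phi}^{(d)})^T (\mathbf{W}^{(d)} \otimes \mathbf{W}^{(d)})$ is a function of $\mathbf{W}^{(d)}$ alone ($\bm{\Phi}^{(d)}$ being deterministic), and under the mean-field factorisation \eqref{eq:18} the $\{\mathbf{W}^{(d)}\}_{d=1}^D$ are mutually independent, so the expectation of the entrywise product over $d$ factorises into the entrywise product of the expectations, i.e.\ into $\circledast_{d=1}^D (\bm{\Phi}^{(d)} \odot \bm{\Phi}^{(d)})^T \mathbb{E}_q[\mathbf{W}^{(d)} \otimes \mathbf{W}^{(d)}]$; since $\mathbf{1}_N^T (\cdot) \mathbf{1}_{R^2}$ is linear, $\mathbb{E}_q$ also passes through the contraction, which is \eqref{eq:40}. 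The decomposition $\mathbb{E}_q[\mathbf{W}^{(d)} \otimes \mathbf{W}^{(d)}] = \mathbb{E}_q[\mathbf{W}^{(d)}] \otimes \mathbb{E}_q[\mathbf{W}^{(d)}] + \operatorname{Var}[\mathbf{W}^{(d)} \otimes \mathbf{W}^{(d)}]$ then follows exactly as for \eqref{eq:24}: by Lemma~\ref{lemma1}, $\mathbf{W}^{(d)} \otimes \mathbf{W}^{(d)} = \mathcal{R}\{\operatorname{vec}(\mathbf{W}^{(d)})\operatorname{vec}(\mathbf{W}^{(d)})^T\}_{M_d^2 \times R^2}$; reshaping is linear and $\mathbb{E}_q[\operatorname{vec}(\mathbf{W}^{(d)})\operatorname{vec}(\mathbf{W}^{(d)})^T] = \operatorname{vec}(\tilde{\mathbf{W}}^{(d)})\operatorname{vec}(\tilde{\mathbf{W}}^{(d)})^T + \mathbf{\Sigma}^{(d)}$, so this splits into the mean part $\tilde{\mathbf{W}}^{(d)} \otimes \tilde{\mathbf{W}}^{(d)}$ and the variance part $\mathcal{R}\{\mathbf{\Sigma}^{(d)}\}_{M_d^2 \times R^2}$, cf.\ \eqref{eq:26}.

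I expect the only genuinely delicate part to be the deterministic step: making the entry indexing of $\mathbf{W}^{(d)} \otimes \mathbf{W}^{(d)}$ and of $\bm{\Phi}^{(d)} \odot \bm{\Phi}^{(d)}$ consistent across all $D$ modes so that the Hadamard products are conformable, and being explicit that the mean-field hypothesis is precisely what licenses factoring $\mathbb{E}_q$ across that Hadamard product --- so no assumption beyond the one stated in the theorem is used. There is also a shorter but mode-asymmetric route worth noting: by Theorem~\ref{theorem3} the left-hand side equals $\operatorname{vec}(\mathbf{W}^{(d)})^T \mathbf{G}^{(d)} \mathbf{G}^{(d)T} \operatorname{vec}(\mathbf{W}^{(d)})$ for any fixed $d$, and combining the independence of $\mathbf{W}^{(d)}$ from $\mathbf{G}^{(d)}$ with Theorem~\ref{theorem4} yields a closed form as well; I would keep the symmetric derivation above as the primary argument.
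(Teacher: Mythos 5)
Your proposal is correct and follows essentially the same route as the paper's own proof: expand the squared norm entrywise over $n$, $r$, $r'$, rewrite each mode's factor via the identity $(\mathbf{u}^T\mathbf{a})(\mathbf{v}^T\mathbf{a})=(\mathbf{u}\otimes\mathbf{v})^T(\mathbf{a}\otimes\mathbf{a})$ so the expression becomes $\mathbf{1}_N^T\bigl(\circledast_d(\bm{\Phi}^{(d)}\khatri\bm{\Phi}^{(d)})^T(\mathbf{W}^{(d)}\otimes\mathbf{W}^{(d)})\bigr)\mathbf{1}_{R^2}$, push $\mathbb{E}_q$ through the mode-wise product by independence, and split $\mathbb{E}_q[\mathbf{W}^{(d)}\otimes\mathbf{W}^{(d)}]$ into mean and variance parts via Lemma~\ref{lemma1}. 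Doing the deterministic identity first and averaging afterwards, rather than interleaving the two as the paper does, is only a cosmetic reordering.
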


\begin{proof}
See Section~8 of the Appendix.
\end{proof}
Using Lemma 1, we can explicitly evaluate the variance term in terms of $\bm{\Sigma}^{(d)}$ by reshaping it as in \eqref{eq:25}. Therefore, the expectation of the residual sum of squares w.r.t. $q$ distribution can be computed using the posterior parameters in \eqref{eq:22} as in

\begin{equation}
       \begin{aligned}
        \mathbb{E}_q\left[\| \mathbf{y} - \mathbf{\Phi}^T \mathbf{w}\|_F^{2}\right] =  &\|\mathbf{y}\|_F^{2} - 2 \, \mathbf{y}^T \,  (\boldsymbol1_R^T \left( \mathop{\circledast}_{d=1}^D \tilde{\mathbf{W}}^{(d)T} \mathbf{\Phi}^{(d)} \right)) \, \\ 
        &+ \mathbf{1}_{N}^T \left(\mathop{\circledast}_{d= 1}^{D} \left( \bm{\Phi}^{(d)} \khatri \bm{\Phi}^{(d)} \right)^T \left( \tilde{\mathbf{W}}^{(d)} \kronecker \tilde{\mathbf{W}}^{(d)} +  \mathcal{R} \left\{ \mathbf{\Sigma}^{(k)} \right\}_{M_d^2 \times R^2}\right)\right) \mathbf{1}_{R^2}.
    \end{aligned}
        \label{eq:41}
\end{equation}
  Finally, the posterior approximation of $\tau$ can be obtained $\mathbb{E}_q[\tau] = a_N / b_N$.

\subsubsection{Lower Bound Model Evidence}
The inference framework presented in the previous section can essentially maximize the lower bound of the model evidence that is defined in \eqref{eq:17}. Since the lower bound by the definition should not decrease at each iteration, it can be used as a convergence criteria. The lower bound of the log marginal likelihood is computed by 

\begin{equation}
     \mathcal{L}(q) = \mathbb{E}_{q(\Theta)} [\operatorname{ln} \hspace{0.1mm} p(\mathbf{y}, \mathbf{\Theta})] + H(q(\Theta)),
     \label{eq:42}
\end{equation}
\noindent where the first term denotes the posterior expectation of joint distribution, and the second term denotes the entropy of posterior distributions. Various terms in the lower bound are computed and derived by assuming parametric forms for the $q$ distribution, leading to the following results

\small{\begin{equation}
    \begin{aligned}
        \mathcal{L}(q)&= - \frac{a_N}{2b_N}\mathbb{E}_q\left[\| \mathbf{y} - \mathbf{\Phi}^T\mathbf{w}\|_F^{2}\right]  - \frac{1}{2} \operatorname{Tr} \bigg\{  \sum_d \big( \tilde{\bm{\Lambda}}_R \otimes \tilde{\bm{\Lambda}}_{M_d} \big) \bigg( \operatorname{vec}(\tilde{\mathbf{W}}^{(d)}) \operatorname{vec}(\tilde{\mathbf{W}}^{(d)})^T + \mathbf{V}^{(d)} \bigg)  
\bigg\} \\
    &+ \sum_r \left\{ \operatorname{ln} \hspace{1mm} \Gamma(c_N^r) + c_N^r \left( 1 - \operatorname{ln} \hspace{1mm} d_N^r - \frac{d_0^r}{d_N^r} \right) \right\}  + \sum_d \sum_{m_d} \bigg \{ \ln \Gamma(g_N^{m_d}) + g_N^{m_d} \bigg(1 - \ln h_N^{m_d} - \frac{h_0^{m_d}}{h_N^{m_d}} \bigg) \bigg\} \\
    &+ \frac{1}{2}\sum_d \operatorname{ln} \hspace{1mm} \mid \mathbf{V}^{(d)} \mid + \operatorname{ln} \hspace{1mm} \Gamma(a_N) + a_N(1- \operatorname{ln} \hspace{1mm} b_N - \frac{b_0}{b_N}) + \text{const}.
    \end{aligned} 
    \label{eq:43}
\end{equation}}
\normalsize
See Section 10 of the Appendix for a detailed derivation of \eqref{eq:43}. The posterior expectation of model residuals denoted by $\mathbb{E}_q\left[\| \mathbf{y} - \mathbf{\Phi}^T\mathbf{w}\|_F^{2}\right]$ can be computed using \eqref{eq:41}. The lower bound can be interpreted as follows. The first term captures the model residual. The second term represents a weighted sum of the squared $L_2$ norms of the components in the factor matrices, incorporating uncertainty as well. The remaining terms correspond to the negative KL divergence between the posterior and prior distributions of the hyperparameters.

\subsubsection{Implementation and Initialization of Hyperparameters}
\label{implementation_details}

The variational Bayesian inference is guaranteed to converge to a local minimum. To avoid getting stuck in poor local solutions, it is important to choose an initialization point. A commonly used strategy is to adopt \textit{uninformative priors} by setting the hyperparameters \(\mathbf{c}_0, \mathbf{d}_0, \mathbf{g}_0^{M_d}, \) and \( \mathbf{h}_0^{M_d} \) to \( 10^{-6} \). Based on this, the precision matrices for penalization are typically initialized as \( \bm{\Lambda}_{M_d} = \bm{I} \) for all \( d \in [1, D] \) and \( \bm{\Lambda}_R = \bm{I} \).

\vspace{1mm}
\begin{algorithm}[h]
\footnotesize
\caption{Learning BTN Kernel Machines}
\label{alg:fbcp}
\begin{algorithmic}[1]
\Require Inputs $\mathbf{x} = \{x_n\} ^ {N} _ {n=1}$ and outputs $\mathbf{y} = \{y_n\} ^ {N} _ {n=1}$
\State \textbf{Initialization:} $R$, $\{M_d\}_{d=1}^D$, $\mathbf{W}^{(d)}, \mathbf{V}^{(d)}$, $\forall d \in [1, D]$, 
 $a_0$, $b_0$, $\mathbf{c}_0$, $\mathbf{d}_0$, $\mathbf{g}_0^{M_d}$, $\mathbf{h}_0^{M_d}$ and set $\tau = a_0 / b_0$, $\lambda_r = c_0^r / d_0^r$, $\forall r \in [1, R]$, $\lambda_{M_d} = g_0^{m_d}/h_0^{m_d}$, $\forall d \in [1, D]$, $\forall m_d \in [1, M_D]$
\Repeat
    \For{$d = 1$ to $D$}
        \State Update the posterior $q_d(\operatorname{vec}(\mathbf{W}^{(d)}))$ using equation \eqref{eq:22}
    \EndFor
    \For{$d = 1$ to $D$}
        \State Update the posterior $q(\bm{\lambda}_{M_d})$ using equation \eqref{eq:33}
    \EndFor
    \State Update the posterior $q(\bm{\lambda}_{R})$ using equation \eqref{eq:28}
    \State Update the posterior $q(\tau)$ using equation \eqref{eq:37}
    \State Evaluate the lower bound using equation \eqref{eq:43}
    \If{truncation criterion met}
        \State Reduce rank $R$ by eliminating zero-columns of $\mathbf{W}^{(d)} \;\forall d \in [1, D]$
    \EndIf
\Until{convergence}
\State Compute the predictive distribution using \eqref{eq:44}
\end{algorithmic}
\end{algorithm}

For the noise precision, the posterior mean of \( \tau \) is given by \( a_N / b_N \), where \( a_N \) scales with the sample size and \( b_N \) decreases during training as it reflects the expected model error (see~\eqref{eq:37}). This can cause \( \tau \) to grow large, leading to overfitting. To control this, we choose the hyperparameters for $\tau$ slightly higher as $a_0 = b_0 = 10^{-3}$, which still initializes $\tau$ to 1. These values can be adjusted based on the data set. For example, setting \( a_0 \) and \( b_0 \) to give a higher initial value of \( \tau \) increases the influence of the likelihood in the posterior updates, while a smaller \( \tau \) makes the prior more dominant as can be seen in \eqref{eq:21}. Similarly, setting higher initial values for the hyperparameters of \( \bm{\lambda}_R \) or \( \bm{\lambda}_{M_d} \) increases regularization on the columns or rows of the factor matrices, promoting sparsity by pruning less relevant components.

\vspace{1mm}
The factor matrices \( \mathbf{W}^{(d)} \) for all  \( d \in [1, D] \) are initialized from \( \mathcal{N}(0, \bm{I}) \), and the covariance matrix $\Sigma^{(d)}$ is set to $\sigma^2 \bm{I}$ with $\sigma^2 = 10^{-1}$. In the CPD case, due to the Hadamard product structure, initialization $\sigma^2$ is sensitive as it can cause numerical instability when $D$ is large. As can be seen in \eqref{eq:21}, large $\sigma^2$ can lead to inflated posterior updates of the factor matrices, while small $\sigma^2$ can excessively shrink them. In other words, since these matrices are combined via elementwise multiplication, extreme values can result in outputs tending toward infinity or zero as $D$ increases. Finally, the tensor rank \( R \) and the feature dimensions \( M_d ,\, \, \, \forall d\in [1,D] \) can be manually initialized based on the available computational resources, further discussion on their initialization can be found in Section \ref{initial_rank_feature_dim}.

The complete inference procedure is summarized in Algorithm~\ref{alg:fbcp}. We begin by updating the posterior of the factor matrices, followed by the higher-order parameters in order from local to global. For instance, $q(\bm{\lambda}_{M_d})$ depends only on the corresponding factor matrix, while $q(\bm{\lambda}_R)$ gets information from all factor matrices. Since their updates are interdependent, $q(\bm{\lambda}_{M_d})$ is updated first to provide more accurate and stable input to the update of $q(\bm{\lambda}_R)$, which improves convergence and numerical stability.

\vspace{1mm}

To improve efficiency in the implementation, we avoid rebuilding \( \mathbf{G}^{(d)} \) from scratch in every iteration. Instead, we update its components incrementally. For simplicity, we assume all feature dimensions are equal, i.e., \( M_1 = \cdots = M_D = M \). Constructing \( \mathbf{G}^{(d)} = \bm{\Phi}^{(d)} \odot \left( \circledast_{k \neq d} \mathbf{W}^{(k)^T} \bm{\Phi}^{(k)} \right) \) requires \( \mathcal{O}(DNMR) \) operations per iteration, which can be costly. To reduce this cost, we reuse previously computed results. From \eqref{eq:38}, we know that \( \mathbf{G}^{(d)} \) isolates the \( d \)th factor matrix from the full Hadamard product \( \circledast_{d=1}^D \mathbf{W}^{(d)^T} \bm{\Phi}^{(d)} \). We compute this full product once, and for each mode \( d \), we divide it elementwise by \( \mathbf{W}^{(d)^T} \bm{\Phi}^{(d)} \), perform the update, and then multiply the updated version back in.

\vspace{1mm}
Likewise, to avoid recomputing the expected design matrix covariance \( \mathbb{E}_q[\mathbf{G}^{(d)} \mathbf{G}^{(d)T}] \) in \eqref{eq:27}, we calculate the full expression \( \circledast_{d=1}^{D} ( \bm{\Phi}^{(d)} \khatri \bm{\Phi}^{(d)} )^T ( \tilde{\mathbf{W}}^{(d)} \kronecker \tilde{\mathbf{W}}^{(d)} + \mathcal{R} \{ \mathbf{\Sigma}^{(d)} \}_{M_d^2 \times R^2} ) \) only once. Then, for each factor matrix update, we remove the contribution from mode \( d \) by dividing out its term and reinsert it after the update. This reuse strategy reduces redundant computations and improves the overall efficiency.

\vspace{1mm}
 We further speed up the implementation by eliminating the zero columns of \( \{\mathbf{W}^{(d)}\}_{d=1}^D \) after each iteration. The reason for eliminating only the zero columns, instead of both columns and rows, is to avoid reconstructing $\mathbf{G}^{(d)}$ from scratch at every iteration. Since $\mathbf{G}^{(d)}$ is constructed by summing the rows of $\mathbf{W}^{(d)}$ weighted by $\mathbf{\Phi}^{(d)}$, removing rows from the factor matrices means reconstructing $\mathbf{G}^{(d)}$ from scratch at every iteration, which is computationally expensive. Truncation criteria can be set manually. In our implementation, we retain all \( R \) components for the first three iterations to allow the model enough flexibility before removing potentially useful components. After that, we remove components that contribute less than \( 10^{-5} \) to the total variance. To identify low-variance components, we stack the factor matrices column-wise to form \( \tilde{\mathbf{W}} = [\tilde{\mathbf{W}}^{(1)}, \tilde{\mathbf{W}}^{(2)}, \ldots, \tilde{\mathbf{W}}^{(D)}] \), compute \( \tilde{\mathbf{W}}^\top \tilde{\mathbf{W}} \), and remove components whose diagonal values fall below the \( 10^{-5} \) variance threshold.

\subsection{Predictive Distribution}
The predictive distribution over unseen data points, given training data, can be approximated by using variational posterior distribution, that is, 

\begin{equation}
    \begin{aligned}
        p(\tilde{y_i} \mid \mathbf{y}) &= \int p\left(y_i \mid \Theta\right) p\left(\Theta \mid \mathbf{y} \right)d\Theta \\
    \vspace{2cm}
     &\simeq \int \int p \left(\tilde{y_i} \mid \left\{ \mathbf{W}^{(d)} \right\}, \tau^{-1} \right) q\left( \left\{ \mathbf{W}^{(d)} \right\} \right) q(\tau) d\left\{ \mathbf{W}^{(d)} \right\} d \tau
    \end{aligned}
    \label{eq:44}
\end{equation}

\noindent Approximation of these integrations yields a Student's t-distribution  $\tilde{y_i} \mid \mathbf{y} \sim \mathcal{T}(\tilde{y_i}, \mathcal{S}_i, \nu_y)$  with its parameters given by 

\begin{align*}
    \tilde{y_i} &= \mathop{\circledast}_{d=1}^D \tilde{\mathbf{W}}^{(d)}  \mathbf{\varphi}_i^{(d)}, \qquad \nu_y = 2a_N, \\
    \mathcal{S}_i &= \left\{ \frac{b_N}{a_N}\sum_d  \textbf{g}^{(d)} (x_n)^T \mathbf{\Sigma}^{(d)} \textbf{g}^{(d)} (x_n)\right\}^{-1}. 
 \end{align*}

\noindent See Section 11 of the Appendix for a detailed derivation. Thus, the predictive variance can be obtained by $\operatorname{Var}(y_i) = \frac{\nu_y}{\nu_y - 2} \mathcal{S}_i ^{-1}$. 

\subsection{Computational Complexity}

The total computational cost of computing the posterior parameters for all factor matrices \(\mathbf{W}^{(d)}\) in  \eqref{eq:22} is \( \mathcal{O}\left(\sum_d N M_d^2 R^2 + M_d^3 R^3\right)\), where \(N\) is the number of observations, \(M_d\) is the feature dimension of the \(d\)th feature, and \(R\) is the tensor rank. The overall cost grows linearly with the number of observations \(N\) and the input dimension \(D\), and polynomially with the model complexity parameters \(M_d\) and \(R\). The cost of computing the model complexity hyperparameters $\bm{\lambda}_{R}$ and $\bm{\lambda}_{M_d}$ is $\mathcal{O}(\sum_d M_d R^2)$ and $\mathcal{O}(\sum_dM_d^2R)$ respectively. Finally the computational cost of computing the noise precision $\tau$ is only $\mathcal{O}(NR)$. \(M_d\) and \(R\) are typically much smaller than \(N\) and the rank \(R\) is automatically inferred during training and zero components are pruned early, its value tends to decrease rapidly in the first few iterations. As a result  when $N >> M_dR$ the total computational complexity of Algorithm 1 is dominated by the factor matrix updates which is \( \mathcal{O}\left(\sum_d N M_d^2 R^2 + M_d^3 R^3\right)\), making it suitable for learning problems which are large in both $N$ and $D$.

\subsection{Tensor Train Kernel Machines}

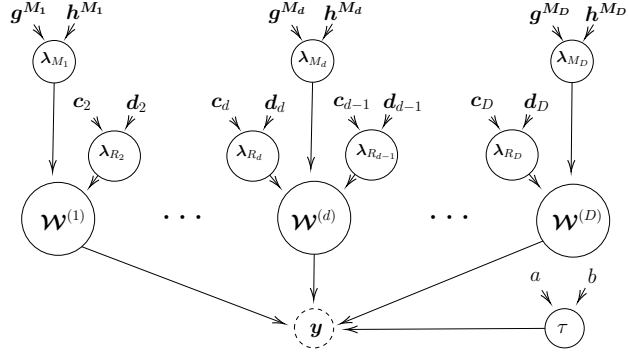
\begin{figure}[t]
    \centering
    \resizebox{0.55\textwidth}{!}{ 
        \begin{tikzpicture}[x=0.75pt,y=0.75pt,yscale=-1,xscale=1]

\draw   (64.67,240.57) .. controls (64.67,222.07) and (79.67,207.07) .. (98.17,207.07) .. controls (116.67,207.07) and (131.67,222.07) .. (131.67,240.57) .. controls (131.67,259.08) and (116.67,274.07) .. (98.17,274.07) .. controls (79.67,274.07) and (64.67,259.08) .. (64.67,240.57) -- cycle ;
\draw   (299.67,239.57) .. controls (299.67,221.07) and (314.67,206.07) .. (333.17,206.07) .. controls (351.67,206.07) and (366.67,221.07) .. (366.67,239.57) .. controls (366.67,258.08) and (351.67,273.07) .. (333.17,273.07) .. controls (314.67,273.07) and (299.67,258.08) .. (299.67,239.57) -- cycle ;
\draw   (537.33,242.24) .. controls (537.33,223.74) and (552.33,208.74) .. (570.83,208.74) .. controls (589.33,208.74) and (604.33,223.74) .. (604.33,242.24) .. controls (604.33,260.74) and (589.33,275.74) .. (570.83,275.74) .. controls (552.33,275.74) and (537.33,260.74) .. (537.33,242.24) -- cycle ;
\draw [dashed]  (314.99,341.18) .. controls (314.94,331.23) and (322.96,323.13) .. (332.9,323.09) .. controls (342.84,323.04) and (350.94,331.06) .. (350.99,341) .. controls (351.04,350.94) and (343.02,359.04) .. (333.08,359.09) .. controls (323.14,359.14) and (315.04,351.12) .. (314.99,341.18) -- cycle ;
\draw   (312.17,95.57) .. controls (312.17,84.8) and (320.9,76.07) .. (331.67,76.07) .. controls (342.44,76.07) and (351.17,84.8) .. (351.17,95.57) .. controls (351.17,106.34) and (342.44,115.07) .. (331.67,115.07) .. controls (320.9,115.07) and (312.17,106.34) .. (312.17,95.57) -- cycle ;
\draw   (75,96.24) .. controls (75,85.47) and (83.73,76.74) .. (94.5,76.74) .. controls (105.27,76.74) and (114,85.47) .. (114,96.24) .. controls (114,107.01) and (105.27,115.74) .. (94.5,115.74) .. controls (83.73,115.74) and (75,107.01) .. (75,96.24) -- cycle ;
\draw   (550.33,95.91) .. controls (550.33,85.14) and (559.06,76.41) .. (569.83,76.41) .. controls (580.6,76.41) and (589.33,85.14) .. (589.33,95.91) .. controls (589.33,106.68) and (580.6,115.41) .. (569.83,115.41) .. controls (559.06,115.41) and (550.33,106.68) .. (550.33,95.91) -- cycle ;
\draw   (544.99,341.18) .. controls (544.94,331.23) and (552.96,323.13) .. (562.9,323.09) .. controls (572.84,323.04) and (580.94,331.06) .. (580.99,341) .. controls (581.04,350.94) and (573.02,359.04) .. (563.08,359.09) .. controls (553.14,359.14) and (545.04,351.12) .. (544.99,341.18) -- cycle ;
\draw    (119.67,266.07) -- (304.13,336.29) ;
\draw [shift={(306,337)}, rotate = 200.84] [color={rgb, 255:red, 0; green, 0; blue, 0 }  ][line width=0.75]    (10.93,-3.29) .. controls (6.95,-1.4) and (3.31,-0.3) .. (0,0) .. controls (3.31,0.3) and (6.95,1.4) .. (10.93,3.29)   ;
\draw    (543,261) -- (358.85,336.24) ;
\draw [shift={(357,337)}, rotate = 337.77] [color={rgb, 255:red, 0; green, 0; blue, 0 }  ][line width=0.75]    (10.93,-3.29) .. controls (6.95,-1.4) and (3.31,-0.3) .. (0,0) .. controls (3.31,0.3) and (6.95,1.4) .. (10.93,3.29)   ;
\draw    (544.99,341.18) -- (363.99,342.16) ;
\draw [shift={(361.99,342.18)}, rotate = 359.69] [color={rgb, 255:red, 0; green, 0; blue, 0 }  ][line width=0.75]    (10.93,-3.29) .. controls (6.95,-1.4) and (3.31,-0.3) .. (0,0) .. controls (3.31,0.3) and (6.95,1.4) .. (10.93,3.29)   ;
\draw    (333.17,273.07) -- (332.69,313.07) ;
\draw [shift={(332.67,315.07)}, rotate = 270.68] [color={rgb, 255:red, 0; green, 0; blue, 0 }  ][line width=0.75]    (10.93,-3.29) .. controls (6.95,-1.4) and (3.31,-0.3) .. (0,0) .. controls (3.31,0.3) and (6.95,1.4) .. (10.93,3.29)   ;
\draw    (312.5,62.41) -- (319.39,72.74) ;
\draw [shift={(320.5,74.41)}, rotate = 236.31] [color={rgb, 255:red, 0; green, 0; blue, 0 }  ][line width=0.75]    (10.93,-3.29) .. controls (6.95,-1.4) and (3.31,-0.3) .. (0,0) .. controls (3.31,0.3) and (6.95,1.4) .. (10.93,3.29)   ;
\draw    (349.83,61.41) -- (342.88,72.7) ;
\draw [shift={(341.83,74.41)}, rotate = 301.61] [color={rgb, 255:red, 0; green, 0; blue, 0 }  ][line width=0.75]    (10.93,-3.29) .. controls (6.95,-1.4) and (3.31,-0.3) .. (0,0) .. controls (3.31,0.3) and (6.95,1.4) .. (10.93,3.29)   ;
\draw    (331.67,115.07) -- (330.04,192.57) ;
\draw [shift={(330,194.57)}, rotate = 271.2] [color={rgb, 255:red, 0; green, 0; blue, 0 }  ][line width=0.75]    (10.93,-3.29) .. controls (6.95,-1.4) and (3.31,-0.3) .. (0,0) .. controls (3.31,0.3) and (6.95,1.4) .. (10.93,3.29)   ;
\draw    (542,306.07) -- (548.89,316.41) ;
\draw [shift={(550,318.07)}, rotate = 236.31] [color={rgb, 255:red, 0; green, 0; blue, 0 }  ][line width=0.75]    (10.93,-3.29) .. controls (6.95,-1.4) and (3.31,-0.3) .. (0,0) .. controls (3.31,0.3) and (6.95,1.4) .. (10.93,3.29)   ;
\draw    (582,304.07) -- (575.95,315.31) ;
\draw [shift={(575,317.07)}, rotate = 298.3] [color={rgb, 255:red, 0; green, 0; blue, 0 }  ][line width=0.75]    (10.93,-3.29) .. controls (6.95,-1.4) and (3.31,-0.3) .. (0,0) .. controls (3.31,0.3) and (6.95,1.4) .. (10.93,3.29)   ;
\draw    (94.5,115.74) -- (93.04,195.57) ;
\draw [shift={(93,197.57)}, rotate = 271.05] [color={rgb, 255:red, 0; green, 0; blue, 0 }  ][line width=0.75]    (10.93,-3.29) .. controls (6.95,-1.4) and (3.31,-0.3) .. (0,0) .. controls (3.31,0.3) and (6.95,1.4) .. (10.93,3.29)   ;
\draw    (569.83,115.41) -- (569.02,195.57) ;
\draw [shift={(569,197.57)}, rotate = 270.58] [color={rgb, 255:red, 0; green, 0; blue, 0 }  ][line width=0.75]    (10.93,-3.29) .. controls (6.95,-1.4) and (3.31,-0.3) .. (0,0) .. controls (3.31,0.3) and (6.95,1.4) .. (10.93,3.29)   ;
\draw    (257.64,147.73) -- (263.07,155.81) ;
\draw [shift={(264.18,157.47)}, rotate = 236.08] [color={rgb, 255:red, 0; green, 0; blue, 0 }  ][line width=0.75]    (10.93,-3.29) .. controls (6.95,-1.4) and (3.31,-0.3) .. (0,0) .. controls (3.31,0.3) and (6.95,1.4) .. (10.93,3.29)   ;
\draw    (293.36,146.11) -- (288.59,154.9) ;
\draw [shift={(287.64,156.66)}, rotate = 298.5] [color={rgb, 255:red, 0; green, 0; blue, 0 }  ][line width=0.75]    (10.93,-3.29) .. controls (6.95,-1.4) and (3.31,-0.3) .. (0,0) .. controls (3.31,0.3) and (6.95,1.4) .. (10.93,3.29)   ;
\draw   (253,182.91) .. controls (253,170.16) and (263.67,159.82) .. (276.83,159.82) .. controls (290,159.82) and (300.67,170.16) .. (300.67,182.91) .. controls (300.67,195.66) and (290,206) .. (276.83,206) .. controls (263.67,206) and (253,195.66) .. (253,182.91) -- cycle ;
\draw    (370.67,199.74) -- (363.84,209.12) ;
\draw [shift={(362.67,210.74)}, rotate = 306.03] [color={rgb, 255:red, 0; green, 0; blue, 0 }  ][line width=0.75]    (10.93,-3.29) .. controls (6.95,-1.4) and (3.31,-0.3) .. (0,0) .. controls (3.31,0.3) and (6.95,1.4) .. (10.93,3.29)   ;
\draw    (292.33,200.41) -- (300.4,210.2) ;
\draw [shift={(301.67,211.74)}, rotate = 230.53] [color={rgb, 255:red, 0; green, 0; blue, 0 }  ][line width=0.75]    (10.93,-3.29) .. controls (6.95,-1.4) and (3.31,-0.3) .. (0,0) .. controls (3.31,0.3) and (6.95,1.4) .. (10.93,3.29)   ;
\draw    (75.83,62.07) -- (82.72,72.41) ;
\draw [shift={(83.83,74.07)}, rotate = 236.31] [color={rgb, 255:red, 0; green, 0; blue, 0 }  ][line width=0.75]    (10.93,-3.29) .. controls (6.95,-1.4) and (3.31,-0.3) .. (0,0) .. controls (3.31,0.3) and (6.95,1.4) .. (10.93,3.29)   ;
\draw    (113.17,61.07) -- (106.21,72.37) ;
\draw [shift={(105.17,74.07)}, rotate = 301.61] [color={rgb, 255:red, 0; green, 0; blue, 0 }  ][line width=0.75]    (10.93,-3.29) .. controls (6.95,-1.4) and (3.31,-0.3) .. (0,0) .. controls (3.31,0.3) and (6.95,1.4) .. (10.93,3.29)   ;
\draw    (552.17,63.07) -- (559.06,73.41) ;
\draw [shift={(560.17,75.07)}, rotate = 236.31] [color={rgb, 255:red, 0; green, 0; blue, 0 }  ][line width=0.75]    (10.93,-3.29) .. controls (6.95,-1.4) and (3.31,-0.3) .. (0,0) .. controls (3.31,0.3) and (6.95,1.4) .. (10.93,3.29)   ;
\draw    (589.5,62.07) -- (582.55,73.37) ;
\draw [shift={(581.5,75.07)}, rotate = 301.61] [color={rgb, 255:red, 0; green, 0; blue, 0 }  ][line width=0.75]    (10.93,-3.29) .. controls (6.95,-1.4) and (3.31,-0.3) .. (0,0) .. controls (3.31,0.3) and (6.95,1.4) .. (10.93,3.29)   ;
\draw    (366.64,146.73) -- (372.07,154.81) ;
\draw [shift={(373.18,156.47)}, rotate = 236.08] [color={rgb, 255:red, 0; green, 0; blue, 0 }  ][line width=0.75]    (10.93,-3.29) .. controls (6.95,-1.4) and (3.31,-0.3) .. (0,0) .. controls (3.31,0.3) and (6.95,1.4) .. (10.93,3.29)   ;
\draw    (402.36,145.11) -- (397.59,153.9) ;
\draw [shift={(396.64,155.66)}, rotate = 298.5] [color={rgb, 255:red, 0; green, 0; blue, 0 }  ][line width=0.75]    (10.93,-3.29) .. controls (6.95,-1.4) and (3.31,-0.3) .. (0,0) .. controls (3.31,0.3) and (6.95,1.4) .. (10.93,3.29)   ;
\draw   (362,181.91) .. controls (362,169.16) and (372.67,158.82) .. (385.83,158.82) .. controls (399,158.82) and (409.67,169.16) .. (409.67,181.91) .. controls (409.67,194.66) and (399,205) .. (385.83,205) .. controls (372.67,205) and (362,194.66) .. (362,181.91) -- cycle ;
\draw    (134.67,200.74) -- (127.84,210.12) ;
\draw [shift={(126.67,211.74)}, rotate = 306.03] [color={rgb, 255:red, 0; green, 0; blue, 0 }  ][line width=0.75]    (10.93,-3.29) .. controls (6.95,-1.4) and (3.31,-0.3) .. (0,0) .. controls (3.31,0.3) and (6.95,1.4) .. (10.93,3.29)   ;
\draw    (130.64,147.73) -- (136.07,155.81) ;
\draw [shift={(137.18,157.47)}, rotate = 236.08] [color={rgb, 255:red, 0; green, 0; blue, 0 }  ][line width=0.75]    (10.93,-3.29) .. controls (6.95,-1.4) and (3.31,-0.3) .. (0,0) .. controls (3.31,0.3) and (6.95,1.4) .. (10.93,3.29)   ;
\draw    (166.36,146.11) -- (161.59,154.9) ;
\draw [shift={(160.64,156.66)}, rotate = 298.5] [color={rgb, 255:red, 0; green, 0; blue, 0 }  ][line width=0.75]    (10.93,-3.29) .. controls (6.95,-1.4) and (3.31,-0.3) .. (0,0) .. controls (3.31,0.3) and (6.95,1.4) .. (10.93,3.29)   ;
\draw   (126,182.91) .. controls (126,170.16) and (136.67,159.82) .. (149.83,159.82) .. controls (163,159.82) and (173.67,170.16) .. (173.67,182.91) .. controls (173.67,195.66) and (163,206) .. (149.83,206) .. controls (136.67,206) and (126,195.66) .. (126,182.91) -- cycle ;
\draw    (495.64,146.73) -- (501.07,154.81) ;
\draw [shift={(502.18,156.47)}, rotate = 236.08] [color={rgb, 255:red, 0; green, 0; blue, 0 }  ][line width=0.75]    (10.93,-3.29) .. controls (6.95,-1.4) and (3.31,-0.3) .. (0,0) .. controls (3.31,0.3) and (6.95,1.4) .. (10.93,3.29)   ;
\draw    (531.36,145.11) -- (526.59,153.9) ;
\draw [shift={(525.64,155.66)}, rotate = 298.5] [color={rgb, 255:red, 0; green, 0; blue, 0 }  ][line width=0.75]    (10.93,-3.29) .. controls (6.95,-1.4) and (3.31,-0.3) .. (0,0) .. controls (3.31,0.3) and (6.95,1.4) .. (10.93,3.29)   ;
\draw   (491,181.91) .. controls (491,169.16) and (501.67,158.82) .. (514.83,158.82) .. controls (528,158.82) and (538.67,169.16) .. (538.67,181.91) .. controls (538.67,194.66) and (528,205) .. (514.83,205) .. controls (501.67,205) and (491,194.66) .. (491,181.91) -- cycle ;
\draw    (530.33,199.41) -- (538.4,209.2) ;
\draw [shift={(539.67,210.74)}, rotate = 230.53] [color={rgb, 255:red, 0; green, 0; blue, 0 }  ][line width=0.75]    (10.93,-3.29) .. controls (6.95,-1.4) and (3.31,-0.3) .. (0,0) .. controls (3.31,0.3) and (6.95,1.4) .. (10.93,3.29)   ;

\draw (290.17,40.07) node [anchor=north west][inner sep=0.75pt]   [align=left] {\Large $\bm{g^{M_d}}$};
\draw (340.83,40.41) node [anchor=north west][inner sep=0.75pt]   [align=left] {\Large $\bm{h^{M_d}}$};
\draw (80.67,85.81) node [anchor=north west][inner sep=0.75pt]     {$\bm{\lambda}_{M_{1}}$};
\draw (317.17,85.47) node [anchor=north west][inner sep=0.75pt]     {$\bm{\lambda}_{M_{d}}$};
\draw (554.33,84.81) node [anchor=north west][inner sep=0.75pt]     {$\bm{\lambda}_{M_{D}}$};
\draw (555,335.47) node [anchor=north west][inner sep=0.75pt]    {\Large$\tau$};
\draw (328,335.47) node [anchor=north west][inner sep=0.75pt]    {\Large$\bm{y}$};
\draw (310.67,230.48) node [anchor=north west][inner sep=0.75pt]  [rotate=-359.99] {\Large$\bm{\mathcal{W}}^{(d)}$};
\draw (80.67,230.48) node [anchor=north west][inner sep=0.75pt]  [rotate=-359.99]  {\Large$\bm{\mathcal{W}}^{(1)}$};
\draw (550.33,230.48) node [anchor=north west][inner sep=0.75pt]  [rotate=-359.99]  {\Large$\bm{\mathcal{W}}^{(D)}$};
\draw (530,288.07) node [anchor=north west][inner sep=0.75pt]   [align=left] {\Large$a$};
\draw (583,285.07) node [anchor=north west][inner sep=0.75pt]   [align=left] {\Large$b$};
\draw (237.32,128.09) node [anchor=north west][inner sep=0.75pt]   [align=left] {\Large $\bm{c}_{d}$};
\draw (286.18,125.09) node [anchor=north west][inner sep=0.75pt]   [align=left] {\Large $\bm{d}_{d}$};
\draw (260.27,171.34) node [anchor=north west][inner sep=0.75pt]    {$\bm{\lambda} _{R_{d}}$};
\draw (53.5,39.74) node [anchor=north west][inner sep=0.75pt]   [align=left] {\Large $\bm{g^{M_1}}$};
\draw (104.17,40.07) node [anchor=north west][inner sep=0.75pt]   [align=left] {\Large $\bm{h^{M_1}}$};
\draw (529.83,40.74) node [anchor=north west][inner sep=0.75pt]   [align=left] {\Large $\bm{g^{M_D}}$};
\draw (580.5,41.07) node [anchor=north west][inner sep=0.75pt]   [align=left] {\Large $\bm{h^{M_D}}$};
\draw (349.32,127.09) node [anchor=north west][inner sep=0.75pt]   [align=left] {\Large $\bm{c}_{d-1}$};
\draw (395.18,124.09) node [anchor=north west][inner sep=0.75pt]   [align=left] {\Large $\bm{d}_{d-1}$};
\draw (370.27,170.34) node [anchor=north west][inner sep=0.75pt]    {$\bm{\lambda} _{R_{d-1}}$};
\draw (110.32,128.09) node [anchor=north west][inner sep=0.75pt]   [align=left] {\Large $\bm{c}_2$};
\draw (159.18,125.09) node [anchor=north west][inner sep=0.75pt]   [align=left] {\Large $\bm{d}_2$};
\draw (134.27,171.34) node [anchor=north west][inner sep=0.75pt]    {$\bm{\lambda} _{R_{2}}$};
\draw (475.32,127.09) node [anchor=north west][inner sep=0.75pt]   [align=left] {\Large $\bm{c}_{D}$};
\draw (524.18,124.09) node [anchor=north west][inner sep=0.75pt]   [align=left] {\Large $\bm{d}_{D}$};
\draw (496.27,170.34) node [anchor=north west][inner sep=0.75pt]    {$\bm{\lambda}_{R_{D}}$};

\node at (215,240) {\Huge $\cdots$};  
\node at (460,240) {\Huge $\cdots$};  

        \end{tikzpicture}
    }
    \caption{\footnotesize Representation of BTN-Kernel machines with the TT-decomposed weight vector $\mathbf{w}$ as a probabilistic graphical model showing the hierarchical sparsity inducing priors over the core tensors $\{\bm{\mathcal{W}}^{(d)}\}_{d=1}^D$ by the sparsity parameters $\{\boldsymbol{\lambda}_{R}\}_{d=2}^{D}$ and $\{\boldsymbol{\lambda}_{M_d}\}_{d=1}^D$. The dashed node denotes the observed data $\mathbf{y}$, while the solid nodes represent random variables. Shape and scale hyperparameters of the Gamma priors placed on $\{\boldsymbol{\lambda}_{R}\}_{d=2}^{D}$, $\{\boldsymbol{\lambda}_{M_d}\}_{d=1}^D$ and $\tau$ are shown as unbounded nodes.}
    \label{fig:TT_model}
\end{figure}

\label{tt-discussion}
Tensor Trains are another common choice to parameterize $\mathbf{w}$. These models replace the factor matrices $\bm{W}^{(d)}$ of the CPD with third-order core tensors $\bm{\mathcal{W}}^{(d)} \in \mathbb{R}^{R_d \times M_d \times R_{d+1}}$. Extending the probabilistic model to Tensor Train Kernel Machines is straightforward. The prior of the Tensor Train core tensors is specified as
\begin{equation*}
    p(\operatorname{vec}(\bm{\mathcal{W}}^{(d)})\mid \bm{\lambda}_{R_d}, \bm{\lambda}_{M_d}, \bm{\lambda}_{R_{d+1}}) = \mathcal{N} \left(\operatorname{vec}(\bm{\mathcal{W}}^{(d)}) \mid \bm{0},  \bm{\Lambda}_{R_{d+1}}^{-1} \otimes \bm{\Lambda}_{M_d}^{-1} \otimes \bm{\Lambda}_{R_d}^{-1} \right), \quad \forall d \in [1, D].
\end{equation*}
Since a Tensor Train has $D-1$ ranks $R_2,\ldots,R_D$ there will be $D-1$ corresponding prior precision matrices $\bm{\Lambda}_{R_{2}},\ldots,\bm{\Lambda}_{R_{D}}$. The increase in number of ranks leads to models that are more flexible but hence come at the cost of more parameters to optimize. Just as with the CPD-based model, the computational complexity of updating the posterior is dominated by the update of $q_d(\operatorname{vec}(\bm{\mathcal{W}}^{(d)}))$, which is $\mathcal{O}(N(R_dM_dR_{d+1})^2+(R_dM_dR_{d+1})^3)$ flops.  

\vspace{1mm}
Figure~\ref{fig:TT_model} shows the probabilistic graphical model representing the joint distribution for the case where $\mathbf{w}$ is decomposed using the Tensor Train (TT) format. For simplicity, the deterministic design matrices $\{ \bm{\Phi}^{(d)} \}_{d=1}^D$ are not shown, similar to the CPD case in Figure~\ref{fig:CPD_model}. As illustrated in Figure~\ref{fig:TT_model}, the main difference in the model's parameterization, compared to the CPD decomposed $\mathbf{w}$ model is that the TT model includes $D{-}1$ independent sparsity parameters $\bm{\lambda}_{R_d}$. These parameters are used to infer the tensor rank of each core in the TT decomposition.

\section{Experiments}

We first conducted an experiment using synthetic data to illustrate the effect of penalization through the sparsity parameters $\bm{\lambda}_{M_d}$ and $\bm{\lambda}_R$ on the rows and columns of the factor matrices. Next, using real data, we analyzed the model’s convergence as well as the impact of the initial rank and feature dimension on its performance. Finally, we compared BTN-kernel machines with three state-of-the-art methods: T-KRR \citep{wesel_large-scale_2021}, SP-BTN \citep{konstantinidis2022vbttn}, and GP \citep{rasmussen:06}, evaluating their predictive performance. In all experiments, we applied a polynomial feature mapping followed by normalization to unit norm. To improve numerical stability for large $D$, we added a constant offset of 0.2 to each feature matrix $\bm{\Phi}^{(d)}$ after construction. This shift mitigates the risk of vanishing values in the final output in \eqref{eq:38}, which involves the Hadamard product applied $D$ times. Our Python code is available at \url{https://github.com/afrakilic/BTN-Kernel-Machines} and allows the reproduction of all experiments in this section.

\subsection{Ground Truth Recovery with Synthetic Data}
\label{sec:experiments}

To illustrate the effect of penalization via $\bm{\lambda}_{M_d}$ and $\bm{\lambda}_R$ on the rows and columns of the factor matrices, we first present a synthetic experiment. The ground truth is constructed using the following procedure. Each column of the feature matrix $\mathbf{X} \in \mathbb{R}^{N \times D}$ is independently sampled from a standard normal distribution, $\mathcal{N}(0, \bm{I})$, with $N = 500$ and $D = 3$. Then, $D$ factor matrices $\mathbf{W}^{(d)} \in \mathbb{R}^{M_d \times R}$ are drawn from $\mathcal{N}(0, \bm{I})$ and scaled by 10 to ensure non-zero values, with the true rank set to $R = 3$. For each feature dimension $M_d$, a random subset of feature indices (up to 5) is selected, resulting in $M_1 = 1$, $M_2 = 4$, and $M_3 = 3$. The true observed data is then constructed as $\mathbf{y} = \mathbf{\Phi}^T \mathbf{w} + \mathbf{e}$, where $\epsilon \sim \mathcal{N}(0, \sigma_e^2)$ with $\sigma_e = 0.001$ denotes i.i.d. additive noise. 

\vspace{1mm}
To test whether $\bm{\lambda}_{M_d}$ and $\bm{\lambda}_R$ penalize noisy rows and columns respectively, the model is initialized as described in Section 3.2.5, with an initial rank of 5 and feature dimension of 5 for all modes.  The posterior means of the factor matrices, $\tilde{\mathbf{W}}^{(d)}$ are illustrated in Figure \ref{fig:plot1} along with  $\tilde{\bm{\lambda}}_{R} $ and $\tilde{\bm{\lambda}}_{M_d} $. Three factor matrices are inferred in which two columns become zero, resulting in the correct estimation of the tensor rank as 3. Furthermore, in order to asses the matrix ranks of the estimated factor matrices, singular value analysis was performed. For $\tilde{\mathbf{W}}^{(1)}$, whose true rank is 1, the second singular value drops by over five orders of magnitude from 447.34 to $1.60 \times 10^{-3}$, indicating rank 1. For $\tilde{\mathbf{W}}^{(2)}$ and $\tilde{\mathbf{W}}^{(3)}$, both with true rank 3, the fourth singular values are near zero ($1.40 \times 10^{-16}$ and $1.64 \times 10^{-17}$), showing drops of more than 15 orders of magnitude from the third singular values, indicating rank 3.

\begin{figure}[t]
  \centering
  \includegraphics[width=1\linewidth]{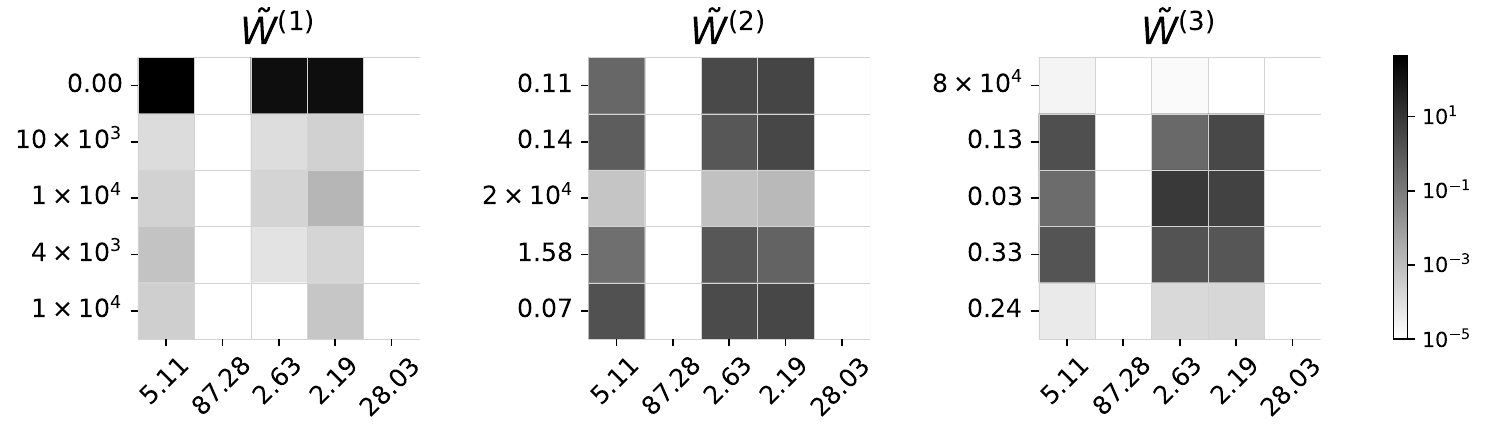}
\caption{\footnotesize Posterior means of the factor matrices $\tilde{\mathbf{W}}^{(d)}$ for all $d \in [1, 3]$, along with $\tilde{\bm{\lambda}}_{R}$ and $\tilde{\bm{\lambda}}_{M_d}$. The row labels to the left of each matrix show the posterior mean of $\tilde{\bm{\lambda}}_{M_d}$, specific to each factor matrix. The column labels below the matrices represent the posterior mean of $\tilde{\bm{\lambda}}_{R}$, shared across all factor matrices. The model correctly identifies the tensor rank and feature dimensions by assigning larger precision values to unnecessary rows and columns, which makes their values shrink zero or close to zero.}

  \label{fig:plot1}
\end{figure}

Figure \ref{fig:plot1} shows that the posterior mean of $\tilde{\lambda}_r$ is smaller for columns that remain nonzero after estimation and larger for those that shrink to zero. In particular, it is at least 5.5 times larger for zero columns compared to nonzero ones. Likewise, for the correct feature dimensions of each factor matrix, rows with large corresponding $\tilde{\lambda}_{m_d}$ values are strongly penalized, causing their entries to become close to zero. To illustrate this, consider the first factor matrix $\mathbf{W}^{(1)}$, where the true feature dimension is $M_1 = 1$. In the posterior mean $\mathbf{\tilde{W}}^{(1)}$, the smallest nonzero entry in the first row is approximately five orders of magnitude larger than the largest entry in the other rows, consistent with the corresponding precision value  $\lambda_{m_1 = 1} \approx 0$.  

\vspace{1mm}
An interesting observation is that the range of  $\lambda_M$ values is much larger than that of $\lambda_R$, spanning from approximately $5.5 \times 10^{-6}$ to $7.9 \times 10^4$, which covers over ten orders of magnitude. In contrast, $\lambda_R$ ranges only from 2.2 to 87.3, just under two orders of magnitude. However, even when $\lambda_{m_d}$ takes very large values, in most cases the corresponding rows in the factor matrices do not become exactly zero, unlike the columns affected by large $\lambda_r$. This difference is due to the structure of CPD, which involves an elementwise (Hadamard) product across the columns of the factor matrices. This structure appears in the design matrix $\textbf{G}^{(d)}= \mathbf{\Phi} ^{(d)} \khatri  \left(\mathop{\circledast}_{k \neq d}  \mathbf{W}^{{(k)}^T} \mathbf{\Phi}^{(k)}\right)$ which is used in the posterior mean update of the factor matrices (\ref{eq:21}).  When any column is strongly penalized by a large $\lambda_r$, its values become close to zero.  Since the CPD model multiplies these columns elementwise, the entire product for that component also approaches zero, causing those columns themselves to become zero. In contrast, rows contribute through summation as in $\mathbf{W}^{(k)T} \mathbf{\Phi}^{(k)}$, rather than through multiplication.. As a result, even when strongly penalized, their values usually become very small rather than exactly zero. Moreover, as shown in Figure \ref{fig:plot1},   $\bm{\lambda}_{R}$ is shared across all factor matrices and penalizes the same columns in every mode. In contrast, $\bm{\lambda}_{M_d}$ is independent across modes and penalizes the rows of its corresponding factor matrix, resulting in different rows being penalized in each mode. Finally, the posterior mean of the noise precision was estimated as $\tau \approx 204000$, indicates the method's effectiveness in denoising, with the estimated noise variance $\tilde{\sigma}_e \approx 0.002$.

\subsection{Model Convergence}

\begin{figure}[t]
  \centering
  \includegraphics[width=1\linewidth]{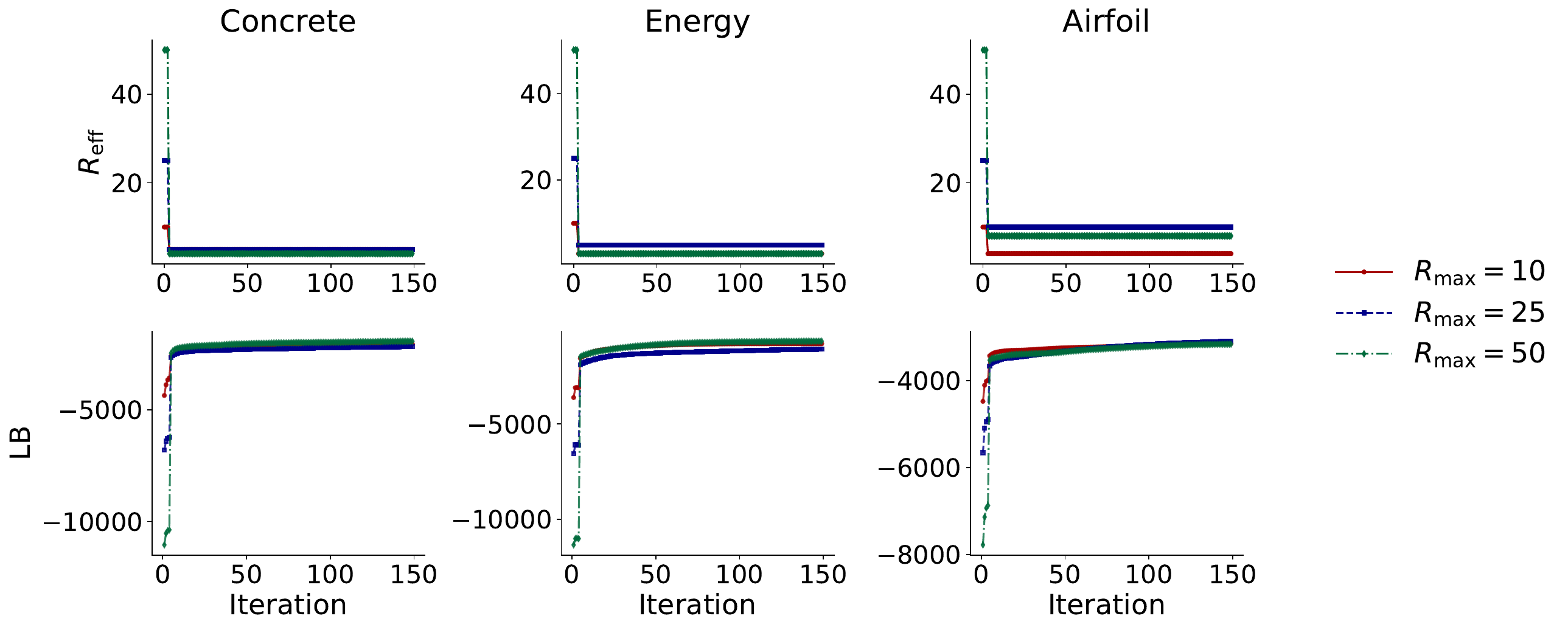}
  \caption{\footnotesize $R_{\text{eff}}$ and the variational lower bound (LB) during the training for varying initial rank values $R_{\text{max}}$. The variational LB and $R_{\text{eff}}$ quickly stabilize early in training, with all models converging to low-rank solutions regardless of the initial $R_{\text{max}}$.}
  \label{fig:plot2}
\end{figure}

In this experiment, we examine the convergence behavior of learning the model by tracking the variational lower bound and effective rank during training. The variational lower bound, which by definition should not decrease at each iteration, serves as a convergence criterion. To evaluate model convergence, we analyzed the behavior of the variational lower bound during training on three UCI data sets: Airfoil, Concrete, and Energy. Furthermore, since the rank is reduced during training, we also analyzed the behavior of the effective rank $R_{\text{eff}}$, specifically to assess whether it converges to a stable value or continues to decrease throughout training. We define $R_{\text{eff}}$ as the number of components that explain more than $10^{-5}$ of the total variance. To compute it, we stack the factor matrices column-wise to form the matrix $\tilde{\mathbf{W}} = [\tilde{\mathbf{W}}^{(1)}, \tilde{\mathbf{W}}^{(2)}, \ldots, \tilde{\mathbf{W}}^{(D)}]$, compute the variance contributions from the diagonal of $\tilde{\mathbf{W}^T} \tilde{\mathbf{W}}$, and count the number of components exceeding the threshold.

\vspace{1mm}
The data sets are split into random training and test sets with $90\%$ and $10\%$ of the data, respectively. Each data set is normalized within the training set so that the features and target values had zero mean and unit variance. All models are initialized as described in Section 3.2.5, with the feature dimension fixed at 20 across all modes. To examine whether the effective rank and lower bound convergence change with different initial rank values, we consider three initial rank values, $R_{\text{max}}$, 10, 25, and 50. To promote low-rank solutions, we set the hyperparameters to $c_0 = 10 ^{-5}$ and $d_0 = 10 ^ {-6}$. 

\vspace{1mm}
In Figure \ref{fig:plot2}, observe that the variational lower bound converges after only a few iterations. Similarly, the effective rank $R_{\text{eff}}$ stabilizes early in training and remains consistent, regardless of the initial rank. In all cases, the models converged to low-rank solutions. Therefore, we assume convergence when the relative change in the variational lower bound is smaller than a user-defined threshold of $10^{-4}$ between iterations; otherwise, training is stopped when the maximum number of iterations, set to 50, is reached.

\subsection{Initial rank and feature dimension}
\label{initial_rank_feature_dim}

\begin{table}[t]
\centering
\scriptsize  
\setlength\tabcolsep{2pt}  
\renewcommand{\arraystretch}{1.1}  
\begin{tabular}{lcc ccc ccc}
& & 
& \multicolumn{3}{c}{\textbf{RMSE}} 
& \multicolumn{3}{c}{\textbf{NLL}} \\
\cmidrule(lr){4-6} \cmidrule(lr){7-9}
\textbf{Data set} & $N$ & $D$ 
& $R_{\text{max}}=10$ & $R_{\text{max}}=25$ & $R_{\text{max}}=50$
& $R_{\text{max}}=10$ & $R_{\text{max}}=25$ & $R_{\text{max}}=50$ \\
\midrule
Airfoil  & 1503 & 5 
& 2.047 $\pm$ 0.204 & \textbf{1.723 $\pm$ 0.151} & 1.786 $\pm$ 0.209 
& 3.491 $\pm$ 0.110 & \textbf{2.865 $\pm$ 0.152} & 2.884 $\pm$ 0.160 \\
Concrete & 1030 & 8 
& 6.165 $\pm$ 1.207 & 5.452 $\pm$ 1.242 & \textbf{5.343 $\pm$ 0.860} 
& 3.403 $\pm$ 0.091 & 3.387 $\pm$ 0.171 & \textbf{3.317 $\pm$ 0.161} \\
Energy   & 768  & 9 
& 1.321 $\pm$ 0.227 & \textbf{1.114 $\pm$ 0.312} & 1.419 $\pm$ 0.288 
& 3.268 $\pm$ 0.450 & \textbf{2.174 $\pm$ 0.542} & 2.466 $\pm$ 0.982 \\
\bottomrule
\end{tabular}
\caption{\footnotesize Average test RMSE and NLL with standard errors for varying initial rank values \( R_{\text{max}} \). Increasing \(R_{\text{max}}\) from 10 to 25 generally improves performance, while further increasing it to 50 yields no consistent improvement in RMSE or NLL.}
\label{tab:rmax_performance}
\end{table}

\begin{table}[t]
\centering
\scriptsize
\setlength\tabcolsep{4pt}
\begin{tabular}{lcc ccc ccc}
& & 
& \multicolumn{3}{c}{\textbf{$R_{\text{eff}} \pm$ Std.}} 
& \multicolumn{3}{c}{\textbf{\% of $R_{\text{max}}$}} \\
\cmidrule(lr){4-6} \cmidrule(lr){7-9}
\textbf{Dataset} & $N$ & $D$ 
& $R_{\text{max}}=10$ & $R_{\text{max}}=25$ & $R_{\text{max}}=50$
& $R_{\text{max}}=10$ & $R_{\text{max}}=25$ & $R_{\text{max}}=50$ \\
\midrule
Airfoil  & 1503 & 5 
& 4.0 $\pm$ 0.0 & 9.5 $\pm$ 0.5 & 7.7 $\pm$ 1.4 
& 40.0\% & 38.0\% & 15.4\% \\
Concrete & 1030 & 8 
& 4.4 $\pm$ 0.7 & 5.3 $\pm$ 0.5 & 5.2 $\pm$ 0.7 
& 44.0\% & 21.2\% & 10.4\% \\
Energy   & 768  & 9 
& 3.0 $\pm$ 0.0 & 4.9 $\pm$ 0.9 & 3.0 $\pm$ 0.0 
& 30.0\% & 19.6\% & 6.0\% \\
\bottomrule
\end{tabular}
\caption{\footnotesize Average effective rank with standard errors and percentage of $R_{\text{eff}}$ relative to $R_{\text{max}}$. The model automatically lowers its $R_{\text{eff}}$ despite a high initial $R_{\text{max}}$ by applying stronger penalties that remove unnecessary components.}
\label{tab:reff_percentages}
\end{table}

We performed another series of experiments to evaluate the impact of varying initial rank values $R_{\text{max}}$ and initial feature dimension values $M_{\text{max}}$ on model performance. Additionally, in this section, we present results illustrating the effect of penalization by $\bm{\Lambda}_{M_d}$ on the rows of the factor matrices, independently of $\Lambda_R$. In the experiments, the same UCI data sets and training procedures from the previous section are used. For each data set, the data splitting is repeated 10 times. The normalization on the targets is removed for prediction and the average test performance of each method is reported.

\vspace{1mm}
To assess how the initial rank affects the test performance, we compared models with different $R_{\text{max}}$ values. Table \ref{tab:rmax_performance}, shows the average test root mean squared error (RMSE) and negative log-likelihood (NLL) for each initial rank.  The best result for each data set is highlighted in bold. Increasing $ R_{\text{max}}$ from 10 to 25 generally improved results, but raising it to 50 did not consistently improve RMSE or NLL. 

\vspace{1mm}
Table~\ref{tab:reff_percentages} presents the average effective rank with standard errors, along with its percentage relative to the initial rank ($R_{\text{max}}$). As $R_{\text{max}}$ increases, the percentage of effective rank consistently decreases across all data sets. This shows that the penalization term $\Lambda_R$ limits model complexity by reducing the use of unnecessary rank components. Although the percentage of retained rank is highest at $R_{\text{max}} = 10$, more than half of the columns in the factor matrices are still eliminated. Consequently, the average effective rank is relatively small. This helps explain the improvement in RMSE observed in Table~\ref{tab:rmax_performance} when $R_{\text{max}}$ is increased from 10 to 25. Given that the model promotes low rank solutions with the hyperparameters $c_0 = 10^{-5}$ and $d_0 = 10^{-6}$, initializing with a low $R_{\text{max}}$ limits the model’s capacity, often leading to underfitting.

\vspace{1mm}
The results also suggest that the degree of penalization increases with higher $R_{\text{max}}$, as reflected by the slightly lower average effective rank when $R_{\text{max}} = 50$ compared to $R_{\text{max}} = 25$. However, as shown in Table~\ref{tab:rmax_performance}, the performance difference between $R_{\text{max}} = 25$ and $R_{\text{max}} = 50$ is marginal. This can be attributed to the fact that once $R_{\text{max}}$ is set sufficiently high, the model is given enough flexibility at initialization and tends to converge to a similar level of complexity. Therefore, we recommend initializing $R_{\text{max}}$ at a sufficiently high value, given the available computational resources. For instance, in data sets with higher dimensionality ($D$), initializing $R_{\text{max}}$ to lower values with lower $\Lambda_R$ can be more computationally efficient, especially since the rank remains fixed during the initial iterations.

\begin{figure}[t]
  \centering
  \includegraphics[width=1\linewidth]{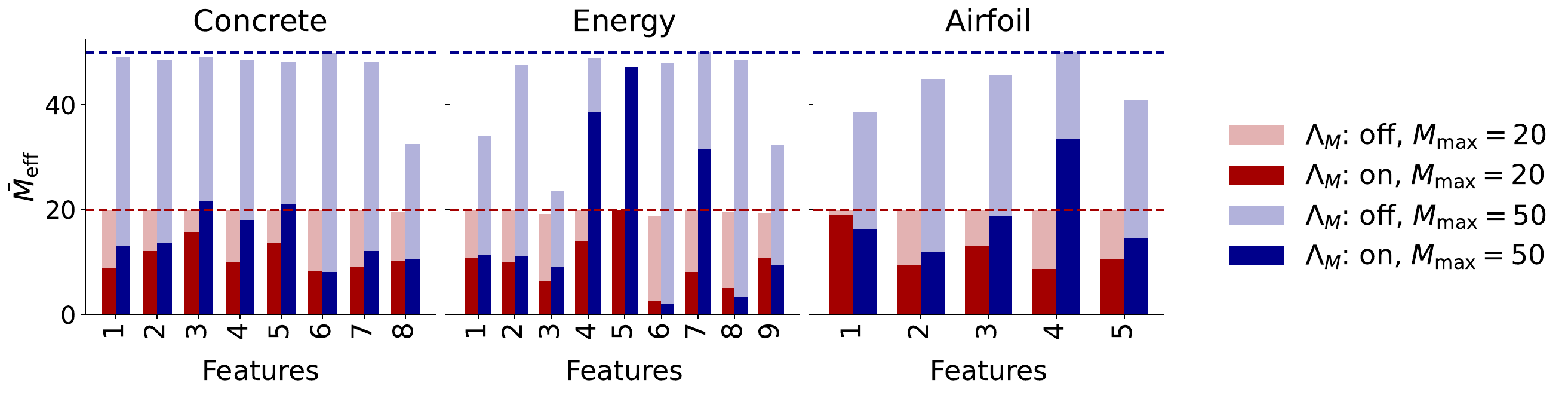}
  \caption{ \footnotesize The average feature dimension, $\Bar{M}_{\text{eff}}$, for each feature across the data sets when $\bm{\Lambda}_{M_d}$ is on and off. The average feature dimension $\bar{M}_{\text{eff}}$ is consistently lower when the row-wise penalization term $\bm{\Lambda}_{M_d}$ is applied.
}
  \label{fig:plot3}
\end{figure}

\begin{table}[t]
\centering
\small
\resizebox{\textwidth}{!}{%
\begin{tabular}{l c
                cc cc cc}

& 
& \multicolumn{2}{c}{\textbf{RMSE}} 
& \multicolumn{2}{c}{\textbf{NLL}} 
& \multicolumn{2}{c}{$\mathbf{R_{\text{eff}}}$} \\
\cmidrule(lr){3-4} \cmidrule(lr){5-6} \cmidrule(lr){7-8}
Dataset & $M_{\text{max}}$ 
& $\lambda_M$-on & $\lambda_M$-off 
& $\lambda_M$-on & $\lambda_M$-off 
& $\lambda_M$-on & $\lambda_M$-off \\
\midrule

Concrete & 20 
& 5.452 $\pm$ 1.242 & \textbf{5.328 $\pm$ 1.192} 
& \textbf{3.387 $\pm$ 0.171} & 4.095 $\pm$ 0.168 
& 5.3 $\pm$ 0.5 & 5.3 $\pm$ 0.5 \\
 & 50 
& \textbf{5.241 $\pm$ 1.075} & 5.490 $\pm$ 0.679 
& \textbf{3.811 $\pm$ 0.203} & 5.381 $\pm$ 0.314 
& 6.3 $\pm$ 0.8 & 6.3 $\pm$ 0.8 \\

Energy & 20 
& \textbf{1.114 $\pm$ 0.312} & 1.150 $\pm$ 0.155 
& \textbf{2.174 $\pm$ 0.543} & 2.810 $\pm$ 0.621 
& 4.9 $\pm$ 0.9 & 4.5 $\pm$ 0.7 \\
 & 50 
& \textbf{1.353 $\pm$ 0.181} & 1.441 $\pm$ 0.202 
& \textbf{3.738 $\pm$ 0.379} & 4.697 $\pm$ 0.374 
& 2.7 $\pm$ 0.6 & 2.7 $\pm$ 0.6 \\

Airfoil & 20 
& \textbf{1.723 $\pm$ 0.151} & 1.746 $\pm$ 0.182 
& \textbf{2.865 $\pm$ 0.152} & 3.046 $\pm$ 0.113 
& 9.5 $\pm$ 0.5 & 9.3 $\pm$ 0.5 \\
 & 50 
& 1.727 $\pm$ 0.181 & \textbf{1.700 $\pm$ 0.155} 
& \textbf{3.557 $\pm$ 0.288} & 3.971 $\pm$ 0.216 
& 7.9 $\pm$ 0.7 & 7.8 $\pm$ 0.7 \\

\bottomrule
\end{tabular}%
}
\caption{\footnotesize Average predictive RMSE and NLL with standard errors, along with $R_{\text{eff}}$, for varying initial input dimension values \( M_{\text{max}} \) when \(\bm{\Lambda}_{M_d}\) is on and off. The row-wise penalization \(\bm{\Lambda}_{M_d}\) improves uncertainty estimation by reducing NLL and promotes row sparsity without affecting the effective rank, helping prevent overfitting especially as \(M_{\text{max}}\) increases.}
\label{tab:lambda_M_on_and_off}
\end{table}

\vspace{2mm}
As shown with the synthetic data experiment in Figure \ref{fig:plot1}, the Hadamard structure of CPD leads strongly penalized columns to become exactly zero, while the corresponding rows tend to remain small but nonzero. Therefore, for both structural and computational reasons, we eliminate zero columns during training but retain the rows. Since this behavior has already been demonstrated on synthetic data, the primary goal of the following experiment is twofold: first, to confirm that penalization on the rows is also effective for real data sets, and second, to assess the impact of the initial feature dimension on test performance. To this end, we fix $R_{\text{max}} = 25$ and consider two values for $M_{\text{max}}$: 25 and 50. We train the models both when the row-wise penalization term $\bm{\Lambda}_{M_d}$ is enabled and disabled. Specifically, setting $\bm{\Lambda}_{M_d} = \bm{I}$ and keeping it fixed throughout training results in no penalization on the rows; we refer to this setting as $\bm{\Lambda}_{M_d}$ : off. Conversely, when $\bm{\Lambda}_{M_d}$ is updated during training, we denote it as $\bm{\Lambda}_{M_d}$ : on.

\vspace{1mm}
We define the effective feature dimension, $M_{\mathrm{eff}}$, for each factor matrix $\tilde{\mathbf{W}}^{(d)}$ as the count of rows whose variance contribution, given by the corresponding diagonal entry of $ \tilde{\mathbf{W}}^{(d)} \tilde{\mathbf{W}}^{(d)T}$, exceeds 0.25\% of the total variance, defined as the sum of all diagonal entries.
Figure~\ref{fig:plot3} shows the average feature dimension, $\Bar{M}_{\text{eff}}$, per feature, computed over 10 train-test splits for each data set with and without the row-wise penalization term $\bm{\Lambda}_{M_d}$. The results show that for both values of $M_{\text{max}}$, the average feature dimension, $\Bar{M}_{\text{eff}}$, is consistently lower when  $\bm{\Lambda}_{M_d}$ is on compared to when it is off. Furthermore, $\bm{\Lambda}_{M_d}$ provides insights into feature relevance. For example, in the Energy data set, feature 6 contributes less to the total variance in the outcome variable compared to feature 5. This concludes that $\bm{\Lambda}_{M_d}$ penalizes less relevant feature dimensions while reinforcing those with higher contributions. 

\vspace{1mm}
Table~\ref{tab:lambda_M_on_and_off} shows that the row-wise penalization $\bm{\Lambda}_{M_d}$ generally reduces NLL across all data sets and initial feature dimension choices, indicating improved uncertainty estimation. RMSE is also slightly better or comparable in all cases. The effective rank $R_{\text{eff}}$ remains stable across penalized and unpenalized settings, indicating that $\bm{\Lambda}_{M_d}$ mainly promotes sparsity in the row space without affecting the columns space \textit{or} the rank structure.

\vspace{1mm}
 Increasing $M_{\text{max}}$ from 20 to 50 does not consistently improve RMSE but leads to higher NLL across all data sets, regardless of whether $\bm{\Lambda}_{M_d}$ is on or off. However, the increase in NLL is consistently smaller when $\bm{\Lambda}_{M_d}$ is on. For example, in the Energy data set, RMSE slightly improves with larger $M_{\text{max}}$ and $\bm{\Lambda}_{M_d}$ on, but NLL worsens without regularization. This suggests that $\bm{\Lambda}_{M_d}$ helps prevent overfitting by controlling complexity through row sparsity.

\subsection{Predictive Performance with baseline}

We considered six UCI data sets in order to compare the predictive performance of BTN-Kernel machines with GP, T-KRR and SP-BTN. The data sets are split into random training and test sets with $90\%$ and $10\%$ of the data, respectively. This splitting process is repeated 10 times and the average test performance is reported. For regression tasks, each data set is normalized within the training set so that the features and target values have zero mean and unit variance. The normalization on the targets is removed for prediction. For binary classification tasks, only the features are normalized based on the training data, and inference is performed by considering the sign of the model response \citep{suykens1999}.

\vspace{2mm}
For the GP models, we followed the procedure described in \cite{wesel_large-scale_2021}, while for the SP-BTN models, we reported the average test RMSE values from the original paper, as the implementation was not available. NLL values or other uncertainty quantification metrics for SP-BTN were not reported in the original work, so we have excluded them from our comparison. For a fair comparison between T-KRR and BTN-Kernel machines, we used polynomial feature mapping with unit norm also for T-KRR. Since BTN-Kernel machines retain the number of rows, we used the same feature dimensionality, denoted as $M$ in Table~\ref{tab:RMSE_NLL}, for both models. For T-KRR, the rank $R$ was set to 10 for all data sets, and for BTN-Kernel machines, we initialized the model with $R_{\text{max}} = 25$ for all data sets except the Adult data set. Due to the high number of features ($D$) in the Adult data set, we set $R_{\text{max}} = 10$ to maintain computational efficiency.

\begin{table}[t]
\centering
\small 

\resizebox{0.99\textwidth}{!}{ 
\begin{tabular}{c c c c c  c c c c  c c}
\multirow{2}{*}{} & \multirow{2}{*}{} & \multirow{2}{*}{} & \multirow{2}{*}{} & \multirow{2}{*}{}
& \multicolumn{4}{c}{\textbf{RMSE}} 
& \multicolumn{2}{c}{\textbf{NLL}} \\
\cmidrule(lr){6-9} \cmidrule(lr){10-11}
Dataset & $N$ & $D$ & $M$ & $R_{\text{eff}}$ 
& \textbf{GP} & \textbf{BTN-Kernels} & \textbf{T-KRR} & \textbf{SP-BTN} 
& \textbf{GP} & \textbf{BTN-Kernels} \\
\midrule
Yacht & 308 & 6 & 20 & 5.2 $\pm$ 0.6 
& 0.402 $\pm$ 0.131 & \textbf{0.379 $\pm$ 0.132} & 0.894 $\pm$ 0.478 & 0.506 $\pm$ 0.091 
&\textbf{0.105 $\pm$ 0.336} & 0.598 $\pm$ 0.782 \\
Energy & 768 & 9 & 20 & 10.1 $\pm$ 1.3 
& 1.296 $\pm$ 0.290 & \textbf{0.456 $\pm$ 0.069} & 0.641 $\pm$ 0.135 & 0.549 $\pm$ 0.200 
& 1.680 $\pm$ 0.223 & \textbf{1.530 $\pm$ 0.271} \\
Concrete & 1030 & 8 & 20 & 5.3 $\pm$ 0.5 
& 5.565 $\pm$ 0.520 & 5.452 $\pm$ 1.242 & \textbf{4.959 $\pm$ 0.620} & 5.500 $\pm$ 0.230 
& \textbf{3.078 $\pm$ 0.080} & 3.387 $\pm$ 0.171 \\
Airfoil & 1503 & 5 & 20 & 6.2 $\pm$ 0.4 
& 2.293 $\pm$ 0.199 & \textbf{1.723 $\pm$ 0.151} & 1.806 $\pm$ 0.144 & - 
& \textbf{2.281 $\pm$ 0.084} & 2.865 $\pm$ 0.152 \\
Spambase & 4601 & 57 & 30 & 8.6 $\pm$ 1.2 
& 0.095 $\pm$ 0.016 & 0.075 $\pm$ 0.015 & \textbf{ 0.066 $\pm$ 0.012} & - 
& 0.720 $\pm$ 0.146 & \textbf{0.499 $\pm$ 0.047} \\
Adult & 45222 & 96 & 40 & 6.3 $\pm$ 0.5 
& N/A & \textbf{0.144 $\pm$ 0.005} & 0.1658 $\pm$ 0.0069 & - 
& N/A & \textbf{0.674 $\pm$ 0.003 }\\
\bottomrule
\end{tabular}
}
\caption{\footnotesize Average predictive RMSE (for regression) or misclassification error (for classification) and NLL with standard errors. BTN-Kernel machines and T-KRR achieve the lowest overall errors, with BTN-Kernel machines matching or outperforming GP in both prediction accuracy and uncertainty estimation, especially on higher-dimensional datasets, while also remaining scalable to large datasets like Adult.}
\label{tab:RMSE_NLL}
\end{table}

\vspace{0.3mm}
All BTN-Kernel machines models are initialized as described in Section~3.2.5. To encourage low-rank solutions, we set the hyperparameters to $c_0 = 10^{-5}$ and $d_0 = 10^{-6}$, with the exception of the Adult data set, for which $R_{\max}$ is initialized to a lower value. The precision hyperparameters are initialized with a slightly more informative prior, using $a_0 = 10^{-2}$ and $b_0 = 10^{-3}$ for all data sets, except for the Airfoil and Concrete data sets. 

\vspace{0.3mm}
Table \ref{tab:RMSE_NLL} shows the average RMSE or misclassification rate for each method, along with the average NLL for GP and BTN-Kernel machines. The best results for each data set are highlighted in bold. Overall, BTN-Kernel machines and T-KRR have the lowest errors, with BTN-Kernel machines achieving the lowest RMSE on 4 out of 6 data sets. SP-BTN was tested only on three low-dimensional data sets, where its performance was similar to other methods, but its effectiveness on higher-dimensional data sets is not known. Although GP performs slightly worse for some data sets like Energy, its RMSE and misclassification rate remain close to the other methods. However, GP is not suitable for very large data sets, such as the Adult data set. All methods perform well on the test data sets, with BTN-Kernel machines and T-KRR generally doing better. Both methods are scalable and suitable for high-dimensional data, but unlike BTN-Kernel machines, T-KRR does not provide uncertainty estimates.

\vspace{0.3mm}
GP and BTN-Kernel machines both offer uncertainty estimates, which is an advantage over the other methods. GP shows slightly better average test NLL on the Yacht, Concrete, and Airfoil data sets, while BTN-Kernel machines performs better on Energy and Spambase. However, the differences between them are small overall. This indicates that BTN-Kernel machines performs similarly to GP, with some advantage on higher-dimensional data sets for both prediction accuracy and uncertainty estimation. For the Adult data set, which is large and high-dimensional, BTN-Kernel machines achieves a considerable level of performance performance with a misclassification rate of 14.4\% and an NLL score of $0.674 \pm 0.003$.

\section{Conclusion}

\label{sec:conclusion}
We have presented BTN Kernel Machines, a probabilistic extension of tensor network-based kernel methods. By placing sparsity-inducing hierarchical priors on the tensor network factors, the model can automatically infer the appropriate tensor rank and feature dimensions. This enables the selection of relevant features, enhances interpretability, and helps prevent overfitting by controlling model complexity. To make Bayesian learning tractable, we employed mean-field variational inference, resulting in a Bayesian ALS algorithm with the same computational complexity as its conventional deterministic counterpart. Hence, BTN Kernel Machines provide uncertainty quantification without incurring additional computational cost. Numerical experiments demonstrate the model’s effectiveness in automatic complexity inference, overfitting prevention, and interpretability. Empirical evaluations on six data sets demonstrate that BTN Kernel Machines achieve competitive performance in both predictive accuracy and uncertainty quantification. The results further show that the model effectively scales to large data sets with high-dimensional inputs, delivering state-of-the-art performance while capturing predictive uncertainty. Potential future work can be the implementation of Tensor Train Kernel machines described in Section \ref{tt-discussion} and the comparison with the CPD case. Furthermore, instead of sparsity-inducing hierarchical priors, alternative prior structures on tensor network factors can be explored, considering both CPD and TT decomposed model weights.

\acks{This publication is part of the project Sustainable learning for Artificial Intelligence from noisy large-scale data (with project number VI.Vidi.213.017) which is financed by the Dutch Research Council (NWO).}


\newpage
\appendix
\section{}

\textbf{1. Proof of Theorem 3}

Derivation of the term in (20) only for one sample. We make use of the multi-linearity property of the CPD and rely on re-ordering the summations:

\small{\begin{align*}
     \bm{\varphi}(x_n)^T\mathbf{w} &= \left\langle \sum_{r=1}^{R} \mathbf{w}_{r}^{(1)} \otimes \mathbf{w}_{r}^{(2)} \otimes \dots \otimes \mathbf{w}_{r}^{(D)}, \bm{\varphi}^{(1)} \otimes \bm{\varphi}^{(2)} \otimes \dots \otimes \bm{\varphi}^{(D)} \right\rangle \\
     &= \sum_{r=1}^{R} \left\langle  \mathbf{w}_{r}^{(1)} \otimes \mathbf{w}_{r}^{(2)} \otimes \dots \otimes \mathbf{w}_{r}^{(D)}, \bm{\varphi}^{(1)} \otimes \bm{\varphi}^{(2)} \otimes \dots \otimes \bm{\varphi}^{(D)} \right\rangle \\
     &=\sum_{r=1}^{R} \sum_{m_1=1}^{M_1} \dots \sum_{m_d=1}^{M_D}  w_{m_1 r}^{(1)} \varphi_{m_1}^{(1)} \dots w_{m_d r}^{(d)} \varphi_{m_d}^{(d)} \dots w_{m_D r}^{(D)} \varphi_{m_D}^{(D)} \\
     &=\sum_{r=1}^{R} \sum_{m_1=1}^{M_1} w_{m_1 r}^{(1)} \varphi_{m_1}^{(1)}  \dots  \sum_{m_d=1}^{M_d} w_{m_d r}^{(d)} \varphi_{m_d}^{(d)} \dots \sum_{m_D=1}^{M_D} w_{m_D r}^{(D)} \varphi_{m_D}^{(D)} \\
      &=\sum_{r=1}^{R} \prod_{d=1}^D  \sum_{m_d=1}^{M_d}  w_{m_d r}^{(d)} \varphi_{m_d}^{(d)}  \\
    &= \sum_{r=1}^{R} \sum_{m_d=1}^{M_d}  w_{m_d r}^{(d)} \left( \varphi_{m_d}^{(d)} \prod_{k \neq d}^D  \sum_{m_k=1}^{M_k}  w_{m_k r}^{(k)} \varphi_{m_k}^{(k)} \right) \\
     &= \sum_{r=1}^{R}\mathbf{w}_{r}^{{(d)}^T} \left( \bm{\varphi}^{(d)} \prod_{k \neq d}^D   \mathbf{w}_{. r}^{{(k)}^T}\bm{\varphi}^{(k)} \right) \\
     &= \operatorname{vec}(\mathbf{W}^{(d)})^T \left(  \left(\mathop{\circledast}_{k \neq d}   \mathbf{W}^{{(k)}^T} \bm{\varphi}^{(k)}  \right) \kronecker \bm{\varphi}^{(d)} \right) \\
\end{align*}
}
\noindent and for $n=1...N$: 

\begin{equation}
    \begin{aligned}
    \mathbf{\Phi}^T \mathbf{w} &=\operatorname{vec}(\mathbf{W}^{(d)})^T \left(   \left(\mathop{\circledast}_{k \neq d}  \mathbf{W}^{{(k)}^T} \mathbf{\Phi}^{(k)}\right) \khatri \mathbf{\Phi} ^{(d)} \right)  \\
    &= \operatorname{vec}(\mathbf{W}^{(d)})^T  \textbf{G}^{(d)}.
\end{aligned}
\end{equation}

Notice that if we do not isolate the $d$-th factor matrix, the data fitting term can also be written as  

\begin{equation*}
    \mathbf{\Phi}^T \mathbf{w} =  \boldsymbol1_R^T \left( \mathop{\circledast}_{d=1}^D\mathbf{W}^{{(d)}^T}  \mathbf{\Phi}^{(d)} \right).  \hspace{15mm}  \hfill \qedsymbol
\end{equation*}

\newpage

\small
\begin{adjustwidth}{-1cm}{} 
\textbf{2. The log of the joint distribution}

\vspace{-1mm}
\begin{equation}
\begin{aligned}
    l(\Theta) &= \sum_{n=1}^{N} \left( \frac{1}{2} \ln \tau - \frac{\tau}{2} \left( y_n -  \bm{\varphi}(x_n)^T\mathbf{w} \right)^2 \right) + \sum_{d=1}^D \sum_r \sum_{m_d} \left( \frac{1}{2} \ln |\lambda_r \lambda_{m_d}| - \frac{1}{2} ( w^{(d)}_{m_dr} \lambda_r \lambda_{m_d} w^{(d)}_{m_dr}) \right) \\
    &+ \sum_r \left( (c_0^r -1 ) \operatorname{ln} \lambda_r - d_0^r\lambda_r \right) + \sum_d^D \sum_{m_d} \left( (g_0^{dm_d} - 1) \operatorname{ln}\lambda_{m_d}^{d} - g_0^{dm_d} \lambda_{m_d}^{d} \right) + (a_0 -1) \operatorname{ln} \tau - b_0 \tau + \text{const} \\
    &= \frac{N}{2} \operatorname{ln} \tau - \frac{\tau}{2} \sum_{n=1}^{N} \left( y_n -  \bm{\varphi}(x_n)^T\mathbf{w} \right)^2+ \frac{ \sum_d M_d}{2} \operatorname{ln} |\bm{\Lambda}_R| + \frac{R}{2} \sum_{d=1}^D \operatorname{ln}|\bm{\Lambda}_{M_d}| 
    - \frac{1}{2}  \sum_{d=1}^D \sum_r \sum_{m_d}   w^{(d)}_{m_dr} \lambda_r \lambda_{m_d} w^{(d)}_{m_dr}  \\
    &+ \sum_r \left( (c_0^r -1)\operatorname{ln}\lambda_r -d_0^r\lambda_r \right) + \sum_d^D \sum_{m_d} \left( (g_0^{dm_d} - 1) \operatorname{ln}\lambda_{m_d}^{d} - g_0^{dm_d} \lambda_{m_d}^{d} \right) + (a_0 -1) \operatorname{ln} \tau - b_0 \tau + \text{const} \\
    &= - \frac{\tau}{2} \| \mathbf{y} -\mathbf{\Phi}^T\mathbf{w} \|_F^{2}   - \frac{1}{2}  \sum_{d=1}^D \sum_r \sum_{m_d}  w^{(d)}_{m_dr} \lambda_r \lambda_{m_d} w^{(d)}_{m_dr}  + \left( \frac{N}{2} + a_0 -1\right)  \operatorname{ln} \tau + \sum_r  \left( \frac{\sum_d M_d}{2} + (c_0^r - 1) \right)\operatorname{ln} \lambda_r \\
    &+  \sum_d^D \sum_{m_d} \left(\frac{R}{2} + (g_0^{dm_d} - 1) \right) \operatorname{ln}\lambda_{m_d}^{d} - \sum_d^D \sum_{m_d}  h_0^{dm_d} \lambda_{m_d}^{d} - \sum_r d_0^r \lambda_r - b_0 \tau + \text{const}
\end{aligned}
\end{equation}

This form will be used frequently in variational Bayesian inference, because the inference of each $\Theta_j$ can be done by 
\begin{equation}
    \operatorname{ln} \hspace{0.3mm} q_j (\Theta_j) = \mathbb{E}_{q(\Theta \backslash \Theta_j)} [\operatorname{ln} p(\mathbf{y}, \mathbf{\Theta})] + \text{const}
\end{equation}

\textbf{3. Factor matrix update}

\begin{equation}
\begin{aligned}
    \operatorname{ln} \hspace{0.3mm} q\left( \operatorname{vec}(
    \mathbf{W}^{(d)})\right) &= \mathbb{E}_{q(\Theta \backslash  \mathbf{W} ^ {(d)})} [\operatorname{ln} p(\mathbf{y}, \{ \mathbf{W} ^ {(d)} \} , \bm{\lambda}_{M_d}, \bm{\lambda}_R, \tau)] + \text{const} \\
    &=\mathbb{E} \left[ - \frac{\tau}{2} \| \mathbf{y} -\mathbf{\Phi}^T\mathbf{w} \|_F^{2}   - \frac{1}{2}  \sum_{d=1}^D \sum_r \sum_{m_d} w^{(d)}_{m_dr} \lambda_r \lambda_{m_d} w^{(d)}_{m_dr} \right] + \text{const} \\            
      &= \mathbb{E} \left[  - \frac{\tau}{2} \left(\mathbf{y} - \mathbf{\Phi}^T\mathbf{w} \right)^2  + - \frac{1}{2} \operatorname{vec}(\mathbf{W}^{(d)})^T\left(\bm{\Lambda}_R \otimes \bm{\Lambda}_{M_d}\right) \operatorname{vec}(\mathbf{W}^{(d)})\right] +  \text{const} \\
    &= \mathbb{E} \left[  - \frac{\tau}{2} \left(\mathbf{y} - \operatorname{vec}(\mathbf{W}^{(d)})^T \textbf{G}^{(d)} \right)^2  + - \frac{1}{2} \operatorname{vec}(\mathbf{W}^{(d)})^T\left(\bm{\Lambda}_R \otimes \bm{\Lambda}_{M_d}\right) \operatorname{vec}(\mathbf{W}^{(d)})\right] +  \text{const} \\
      &= \mathbb{E}\left[-\frac{\tau}{2} \operatorname{vec}(\mathbf{W}^{(d)})^T \textbf{G}^{(d)} \textbf{G}^{(d)T} \operatorname{vec}(\mathbf{W}^{(d)}) - \frac{1}{2} \operatorname{vec}(\mathbf{W}^{(d)})^T \left(\bm{\Lambda}_R \otimes \bm{\Lambda}_{M_d}\right) \operatorname{vec}(\mathbf{W}^{(d)})\right. + \left. \tau \operatorname{vec}(\mathbf{W}^{(d)})^T  \textbf{G}^{(d)} \textbf{y} \right] \\ &+ \text{const} \\
      &= -\frac{1}{2} \operatorname{vec}(\mathbf{W}^{(d)})^T \left( \mathbb{E}\left[\tau\right] \mathbb{E}\left[\textbf{G}^{(d)} \textbf{G}^{(d)T} \right] + \mathbb{E}\left[\bm{\Lambda}_R \otimes \bm{\Lambda}_{M_d}\right] \right)\operatorname{vec}(\mathbf{W}^{(d)}) + \operatorname{vec}(\mathbf{W}^{(d)})^T \mathbb{E}\left[ \tau\right] \mathbb{E}\left[ \textbf{G}^{(d)} \right] \textbf{y} \\
      &+ \text{const} \\
      \\
\end{aligned}
\end{equation}

\newpage
\vspace*{-1.5cm} 
\textbf{4. Proof of Theorem 4}
\small{\begin{equation}
    \begin{aligned}
        \mathbb{E}\left[\mathbf{G}^{(d)} \mathbf{G}^{(d)T} \right] &= \mathbb{E}\left[ \left( \left( \mathop{\circledast}_{k \neq d}  \mathbf{W}^{{(k)}^T} \mathbf{\Phi}^{(k)}\right) \khatri \mathbf{\Phi}^{(d)} \right) \left( \left( \mathop{\circledast}_{k \neq d}  \mathbf{W}^{{(k)}^T} \mathbf{\Phi}^{(k)}\right) \khatri \mathbf{\Phi}^{(d)} \right)^T \right] \\
        &= \sum_{n=1}^{N} \mathbb{E} \left[ \left( \left( \prod_{k \neq d}^{D} \mathbf{W}^{{(k)}^T} \bm{\varphi}^{(k)} \right) \otimes \bm{\varphi}^{(d)} \right) \left( \left( \prod_{k \neq d}^{D} \mathbf{W}^{{(k)}^T} \bm{\varphi}^{(k)} \right) \otimes \bm{\varphi}^{(d)} \right)^T \right] \\
         &=  \sum_{n=1}^{N}  \mathbb{E} \left[\left( \left( \prod_{k \neq d}^{D} \sum_{m_k=1}^{M_k} \mathbf{w}_{m_k .}^{(k)} \varphi_{m_k}^{(k)} \right) \otimes \bm{\varphi}^{(d)} \right)
        \left( \left( \prod_{k \neq d}^{D} \sum_{j_k=1}^{M_k} \mathbf{w}_{j_k .}^{(k)} \varphi_{j_k}^{(k)} \right) \otimes \bm{\varphi}^{(d)} \right)^T \right]\\
        &= \sum_{n=1}^{N} \mathbb{E} \left[ \mathcal{R}\left\{ \left( \bm{\varphi}^{(d)} \kronecker \bm{\varphi}^{(d)} \right) \prod_{k \neq d}^{D} \sum_{m_k=1}^{M_k} \sum_{j_k=1}^{M_k} \varphi_{m_k}^{(k)} \varphi_{j_k}^{(k)} \mathbf{w}_{m_k .}^{(k)} \kronecker \mathbf{w}_{j_k .}^{(k)} \right \}_{M_dR \,\times \,M_dR} \right] \\
        &= \sum_{n=1}^{N}\mathcal{R}\left\{ \left( \bm{\varphi}^{(d)} \kronecker \bm{\varphi}^{(d)} \right) \prod_{k \neq d}^{D} \sum_{m_k=1}^{M_k} \sum_{j_k=1}^{M_k} \varphi_{m_k}^{(k)} \varphi_{j_k}^{(k)} \mathbb{E} \left[ \mathbf{w}_{m_k .}^{(k)} \kronecker \mathbf{w}_{j_k .}^{(k)} \right] \right \}_{M_dR \,\times \,M_dR}\\
        &= \sum_{n=1}^{N} \mathcal{R}\left\{\left( \bm{\varphi}^{(d)} \kronecker \bm{\varphi}^{(d)} \right) \prod_{k \neq d}^{D} \left( \bm{\varphi}^{(k)} \kronecker \bm{\varphi}^{(k)} \right)^T \mathbb{E} \left[ \mathbf{W}^{(k)} \kronecker \mathbf{W}^{(k)} \right] \right\}_{M_dR \,\times \,M_dR}\\
        &= \mathcal{R}\left\{\left( \bm{\Phi}^{(d)} \khatri \bm{\Phi}^{(d)} \right) \prod_{k \neq d}^{D} \left( \bm{\Phi}^{(k)} \khatri \bm{\Phi}^{(k)} \right)^T \mathbb{E} \left[ \mathbf{W}^{(k)} \kronecker \mathbf{W}^{(k)} \right]\right\}_{M_dR \,\times \,M_dR} \\
        &= \mathcal{R}\left\{ \left( \bm{\Phi}^{(d)} \khatri \bm{\Phi}^{(d)} \right) \prod_{k \neq d}^{D} \left( \bm{\Phi}^{(k)} \khatri \bm{\Phi}^{(k)} \right)^T \left( \mathbb{E} \left[ \mathbf{W}^{(k)} \right] \kronecker \mathbb{E} \left[ \mathbf{W}^{(k)} \right] + \operatorname{Var} \left[ \mathbf{W}^{(k)} \kronecker \mathbf{W}^{(k)} \right] \right)\right\}_{M_dR \,\times \,M_dR} \\
        &=  \mathcal{R}\left\{\left( \bm{\Phi}^{(d)} \khatri \bm{\Phi}^{(d)} \right) \mathop{\circledast}_{k \neq d}^{D} \left( \bm{\Phi}^{(k)} \khatri \bm{\Phi}^{(k)} \right)^T \left( \tilde{\mathbf{W}}^{(k)} \kronecker \tilde{\mathbf{W}}^{(k)} +  \mathcal{R} \left\{ \mathbf{\Sigma}^{(k)} \right\}_{M_k^2 \times R^2} \right)\right\}_{M_dR \,\times \,M_dR}, \\
        \\        
    \end{aligned}
\end{equation}}

\vspace{5mm}

In above expression the term  $\operatorname{Var} \left[ \mathbf{W}^{(k)} \kronecker \mathbf{W}^{(k)} \right] $ can be evaluated by reshaping the $\mathbf{\Sigma}^{(k)} \in \mathbb{R}^{MR \times MR}$ into a size $M^2 \times R^2$

\begin{equation}
    \operatorname{Var} \left[ \mathbf{W}^{(k)} \kronecker \mathbf{W}^{(k)} \right]  =  \mathcal{R} \left\{ \mathbf{\Sigma}^{(k)} \right\}_{M_k^2 \times R^2} 
\end{equation}

\newpage
\vspace*{-1cm} 

\textbf{5. The variational posterior distribution of hyperparameter $\bm{\Lambda}_R$}

\vspace{-5mm}
\begin{equation}
\begin{aligned}
    \operatorname{ln} \hspace{1.5mm} q(\bm{\lambda}_R) &= \mathbb{E}_{q(\Theta \backslash  \bm{\lambda})} [\operatorname{ln} p(\mathbf{y}, \{ \mathbf{W} ^ {(d)} \}, \bm{\lambda}_{M_d} , \bm{\lambda}_R, \tau) ] + \text{const} \\    
    &= \mathbb{E} \left[  - \frac{1}{2}  \sum_{d=1}^D \sum_r \sum_{m_d}  w^{(d)}_{m_dr} \lambda_r \lambda_{m_d} w^{(d)}_{m_dr} + \sum_r  \left( \frac{\sum_d M_d}{2} + (c_0^r - 1) \right)\operatorname{ln} \lambda_r - \sum_r d_0^r \lambda_r  \right] + \text{const} \\
    &= \mathbb{E}\left[\sum_r \left\{ - \frac{1}{2} \sum_d \sum_{m_d}  (w^{(d)}_{m_dr} \lambda_{m_d} w^{(d)}_{m_dr})  \lambda_r + \left( c_0^r + \frac{\sum_d M_d}{2}  - 1 \right) \operatorname{ln} \lambda_r -  d_0^r \lambda_r \right\} \right] + \text{const} \\
    &= \mathbb{E}\left[\sum_r \left\{ \left( c_0^r + \frac{\sum_d M_d}{2}  - 1 \right) \operatorname{ln} \lambda_r - \left(d_0^r + \frac{1}{2} \sum_d \sum_{m_d}  w^{(d)}_{m_dr} \lambda_{m_d} w^{(d)}_{m_dr}\right) \lambda_r\right\} \right] + \text{const} \\
    &= \mathbb{E} \left[\sum_r \left\{ \left( c_0^r + \frac{\sum_d M_d}{2}  - 1 \right) \operatorname{ln} \lambda_r - \left(d_0^r + \frac{1}{2} \sum_d  \mathbf{w}^{(d)T}_{r} \bm{\Lambda}_{M_d} \mathbf{w}^{(d)}_{r}\right) \lambda_r\right\} \right] + \text{const} \\
    &= \sum_r \left\{ \left( c_0^r + \frac{\sum_d M_d}{2}  - 1 \right) \operatorname{ln} \lambda_r - \left(d_0^r + \frac{1}{2} \sum_d  \mathbb{E} \left[ \mathbf{w}^{(d)T}_{r} \bm{\Lambda}_{M_d} \mathbf{w}^{(d)}_{r} \right] \right) \lambda_r\right\} +  \text{const}
\end{aligned}
\end{equation}

Thus we observe that the posterior is a Gamma distribution with updated parameters . In above expression, the posterior expectation term can be computed by

\vspace{-3mm}

\begin{equation}
    \begin{aligned}
        \mathbb{E}_q \left[ \mathbf{w}^{(d)T}_{r.} \bm{\Lambda}_{M_d} \mathbf{w}^{(d)}_{r.} \right] &= \mathbb{E}_q \left[\sum_{m_d} \lambda_{m_d} \left(  w^{(d)}_{m_dr}\right)^2 \right] = \sum_{m_d} \lambda_{m_d}  \mathbb{E}_q \left[\left(  w^{(d)}_{m_dr}\right)^2 \right] \\
        &= \sum_{m_d} \lambda_{m_d}  \left\{ \left( \mathbb{E}_q \left[ w^{(d)}_{m_dr}\right]\right)^2 + \operatorname{Var} \left( w^{(d)}_{m_dr}\right) \right\} \\
        &= \tilde{\mathbf{w}}^{(d)T}_{r} \bm{\Lambda}_{M_d} \tilde{\mathbf{w}}^{(d)}_{r} + \bm{\lambda}_{M_d}^T \, \operatorname{Var} \left(\mathbf{w}^{(d)}_{r} \right).
    \end{aligned}
\end{equation}

\vspace{1mm}

\textbf{6. The variational posterior distribution of hyperparameter $\bm{\lambda_{M_d}}$}

\vspace{-5mm}
\begin{equation}
    \begin{aligned}
        \operatorname{ln} \hspace{1.5mm} q(\bm{\lambda_{M_d}}) &= \mathbb{E}_{q(\Theta \backslash  \bm{\lambda})} [\operatorname{ln}  p(\mathbf{y}, \{ \mathbf{W} ^ {(d)} \}, \bm{\lambda}_{M_d} , \bm{\lambda}_R, \tau)] + \text{const} \\  
        &= \mathbb{E} \left[  - \frac{1}{2}  \sum_{d=1}^D \sum_r \sum_{m_d} w^{(d)}_{m_dr} \lambda_r \lambda_{m_d} w^{(d)}_{m_dr} +  \sum_d^D \sum_{m_d} \left(\frac{R}{2} + (g_0^{dm_d} - 1) \right) \operatorname{ln}\lambda_{m_d}^{d} - \sum_d^D \sum_{m_d}  h_0^{dm_d} \lambda_{m_d}^{d}\right]
        + \text{const} \\ 
        &= \mathbb{E} \left[ - \frac{1}{2}  \sum_{m_d} \sum_r  (w^{(d)}_r \lambda_{m_d} w^{(d)}_{m_dr}) \lambda_{m_d} +\sum_{m_d}\left(\frac{R}{2} + (g_0^{dm_d} - 1) \right) \operatorname{ln}\lambda_{m_d} - \sum_{m_d}  h_0^{dm_d} \lambda_{m_d} \right]  
        + \text{const} \\ 
        &= \mathbb{E} \left[ \sum_{m_d} \left\{ \left( g_0^{dm_d} + \frac{R}{2} - 1 \right)\ln \lambda_{m_d} - \left( h_0^{dm_d} + \frac{1}{2} \mathbf{w}^{(d)T}_{m_d} \bm{\Lambda}_{R} \mathbf{w}^{(d)}_{m_d} \right)  \lambda_{m_d} \right\} \right]  + \text{const} \\
        &=  \sum_{m_d} \left\{ \left( g_0^{dm_d} + \frac{R}{2} - 1 \right)\ln \lambda_{m_d} - \left( h_0^{dm_d} + \frac{1}{2} \mathbb{E} \left[ \mathbf{w}^{(d)T}_{m_d} \bm{\Lambda}_R \mathbf{w}^{(d)}_{m_d} \right] \right)  \lambda_{m_d} \right\} + \text{const} 
    \end{aligned}
\end{equation}

Thus we observe that the posterior is a Gamma distribution with updated parameters . In above expression, the posterior expectation term can be computed by

\begin{equation}
    \begin{aligned}
        \mathbb{E}_q \left[  \mathbf{w}^{(d)T}_{m_d} \bm{\Lambda}_R \mathbf{w}^{(d)}_{m_d}  \right] &= \mathbb{E}_q \left[\sum_{r} \lambda_{r} \left(  w^{(d)}_{m_dr}\right)^2 \right] = \sum_{r} \lambda_{r}  \mathbb{E}_q \left[\left(  w^{(d)}_{m_dr}\right)^2 \right] \\
        &= \sum_{r} \lambda_{r}  \left\{ \left( \mathbb{E}_q \left[ w^{(d)}_{m_dr}\right]\right)^2 + \operatorname{Var} \left( w^{(d)}_{m_dr}\right) \right\} \\
        &= \tilde{\mathbf{w}}^{(d)T}_{m_d} \bm{\Lambda}_R \tilde{\mathbf{w}}^{(d)}_{m_d} + \operatorname{Var} \left( \mathbf{w}^{(d)}_{m_d}\right)^T \bm{\lambda}_R.
    \end{aligned}
\end{equation}

\textbf{7. The variational posterior distribution of hyperparameter $\tau$}

\begin{equation}
    \begin{aligned}
        \operatorname{ln} \hspace{1.5mm} q(\tau) &= \mathbb{E}_{q(\Theta \backslash \tau)} [\operatorname{ln} \hspace{1.5mm}  p(\mathbf{y}, \{ \mathbf{W} ^ {(d)} \}, \bm{\lambda}_{M_d} , \bm{\lambda}_R, \tau)] + \text{const} \\  
        &= \mathbb{E} \left[ - \frac{\tau}{2} \| \mathbf{y} -\mathbf{\Phi}^T \mathbf{w} \|_F^{2}  + \left( \frac{N}{2} + a_0 -1\right)  \operatorname{ln} \tau  - b_0 \tau \right] + \text{const} \\
        &=  \left( \frac{N}{2} + a_0 -1\right)  \operatorname{ln} \tau - \left( b_0 + \frac{1}{2} \mathbb{E}_q\left[ \| \mathbf{y} -\mathbf{\Phi}^T \mathbf{w}\|_F^{2}\right] \right) \tau + \text{const}
    \end{aligned}
\end{equation}

\textbf{8. Proof of Theorem 6}

    \begin{equation}
    \begin{aligned}
        \mathbb{E} \left[\|\mathbf{\Phi}^T \mathbf{w} \|_F^2 \right] &=  \mathbb{E} \left[  \|\mathbf{1}_R^T \left( \mathop{\circledast}_{d=1}^D \mathbf{W}^{(d)}  \mathbf{\Phi}^{(d)} \right) \|_F^2 \right] \\
        &= \mathbb{E} \left[ \mathbf{1}_R^T \left( \mathop{\circledast}_{d=1}^D \mathbf{W}^{(d)}  \mathbf{\Phi}^{(d)} \right) \left( \mathop{\circledast}_{d=1}^D \mathbf{W}^{(d)}  \mathbf{\Phi}^{(d)} \right)^T  \mathbf{1}_R\right] \\
        &= \mathbb{E} \left[ \sum_{n} \sum_{r_1} \sum_{r_2} \prod_{d=1}^{D} \sum_{m_d} \sum_{j_d}  \varphi^{(d)}_{m_d} \varphi^{(d)}_{j_d} w^{(d)}_{m_dr_1} w^{(d)}_{j_dr_2}  \right] \\
         &= \mathbb{E} \left[ \sum_{n} \sum_{r_1}\sum_{r_2} \prod_{d=1}^{D}   \left(\bm{\varphi}^{(d)} \otimes \bm{\varphi}^{(d)} \right)^T  \left(\mathbf{w}^{(d)}_{.r_1} \otimes \mathbf{w}^{(d)}_{.r_2}\right)  \right] \\
         &= \sum_{n} \sum_{r_1}\sum_{r_2} \prod_{d=1}^{D}   \left(\bm{\varphi}^{(d)} \otimes \bm{\varphi}^{(d)} \right)^T  \mathbb{E} \left[ \mathbf{w}^{(d)}_{.r_1} \otimes \mathbf{w}^{(d)}_{.r_2}\right]  \\
         &= \sum_{n}  \prod_{d=1}^{D}  \left \{  \left(\bm{\varphi}^{(d)} \otimes \bm{\varphi}^{(d)} \right)^T  \mathbb{E} \left[ \mathbf{W}^{(d)} \otimes \mathbf{W}^{(d)}\right] \right\} \mathbf{1}_{R^2} \\
          &= \mathbf{1}_{N}^T  \prod_{d=1}^{D}  \left \{  \left( \bm{\Phi}^{(d)} \khatri \bm{\Phi}^{(d)} \right)^T  \mathbb{E} \left[ \mathbf{W}^{(d)} \otimes \mathbf{W}^{(d)}\right] \right\} \mathbf{1}_{R^2} \\
         &= \mathbf{1}_{N}^T  \prod_{d=1}^{D}  \left \{  \left( \bm{\Phi}^{(d)} \khatri \bm{\Phi}^{(d)} \right)^T \left( \mathbb{E} \left[ \mathbf{W}^{(k)} \right] \kronecker \mathbb{E} \left[ \mathbf{W}^{(k)} \right] + \operatorname{Var} \left[ \mathbf{W}^{(k)} \kronecker \mathbf{W}^{(k)} \right] \right) \right\} \mathbf{1}_{R^2} \\
         &= \mathbf{1}_{N}^T \left(\mathop{\circledast}_{k \neq d}^{D} \left( \bm{\Phi}^{(k)} \khatri \bm{\Phi}^{(k)} \right)^T \left( \tilde{\mathbf{W}}^{(k)} \kronecker \tilde{\mathbf{W}}^{(k)} +  \mathcal{R} \left\{ \mathbf{\Sigma}^{(k)} \right\}_{M_k^2 \times R^2}\right)\right) \mathbf{1}_{R^2} \hspace{10mm}  \hfill \qedsymbol
         \end{aligned}
\end{equation}

\textbf{9. Posterior expectation of model error or residual}

\begin{equation}
\begin{aligned}
 &\mathbb{E}\left[  \| \mathbf{y} - \mathbf{\Phi}^T \mathbf{w} \|_F^{2}\right] \\
 = &\mathbb{E}\left[\|\mathbf{y}\|_F^{2} - 2 \mathbf{y} ^T \mathbf{\Phi}^T \mathbf{w} +\|\mathbf{\Phi}^T \mathbf{w} \|_F^2 \right] \\
= &\|\mathbf{y}\|_F^{2} - 2 \, \mathbf{y}^T \, (\boldsymbol1_R^T \left( \mathop{\circledast}_{d=1}^D \mathbf{E} \left[\mathbf{W}^{{(d)}} \right]^T \mathbf{\Phi}^{(d)} \right))  + \mathbb{E} \left[ \|\boldsymbol1_R^T \left( \mathop{\circledast}_{d=1}^D \mathbf{W}^{(d)}  \mathbf{\Phi}^{(d)}
 \right) \|_F^2  \right] \\
 =  &\|\mathbf{y}\|_F^{2} - 2 \, \mathbf{y}^T  \,  (\boldsymbol1_R^T \left( \mathop{\circledast}_{d=1}^D \tilde{\mathbf{W}}^{(d)T} \mathbf{\Phi}^{(d)} \right))  + \mathbf{1}_{N}^T \left(\mathop{\circledast}_{k \neq d}^{D} \left( \bm{\Phi}^{(k)} \khatri \bm{\Phi}^{(k)} \right)^T \left( \tilde{\mathbf{W}}^{(k)} \kronecker \tilde{\mathbf{W}}^{(k)} +  \mathcal{R} \left\{ \mathbf{\Sigma}^{(k)} \right\}_{M_k^2 \times R^2}\right)\right) \mathbf{1}_{R^2} 
  \end{aligned}
\end{equation}

\newpage

\vspace*{-1.4cm} 
\textbf{10. Lower bound model evidence}

\vspace{-1mm}
\hspace*{-3cm} 
\begin{equation}
    \begin{aligned}
        L(q) &= \mathbb{E}_{q(\Theta)} [\operatorname{ln} \hspace{0.1mm} p(\mathbf{y}, \mathbf{\Theta})] + H(q(\Theta))  \\
        &= \mathbb{E}_{q( \{\mathbf{W}^{(d)}\}, \tau)} \left[ \operatorname{ln} \hspace{1mm} p(\mathbf{y} \mid \{\mathbf{W}^{(d)}\}_{d=1}^{D}, \tau^{-1})  \right] + \mathbb{E}_{q( \{\mathbf{W}^{(d)}\}, \{\bm{\lambda}_{M_d}\}, \bm{\lambda}_R,)} \left[ \sum_{d=1} ^D \operatorname{ln} \hspace{1mm} p(\mathbf{W}^{(d)} \mid  \bm{\lambda}_R, \bm{\lambda}_{M_d}) )\right] \\
        & + \mathbb{E}_{q(\{\bm{\lambda}_{M_d}\})} \left[ \sum_{d=1} ^D  \operatorname{ln} \hspace{1mm} p(\bm{\lambda}_{M_d})\right]+ \mathbb{E}_{q(\bm{\lambda}_R)} \left[ \operatorname{ln} \hspace{1mm} p(\bm{\lambda}_R)\right]  + \mathbb{E}_{q(\tau)} \left[ \operatorname{ln} \hspace{1mm} p(\tau) \right] - \mathbb{E}_{q(\{\mathbf{W}^{(d)}\}} \left[ \sum_{d=1}^D \operatorname{ln} \hspace{1mm} q(\mathbf{W}^{(d)}) \right]\\  
        &- \mathbb{E}_{q(\{\bm{\lambda}_{M_d}\})} \left[ \sum_{d=1} ^D \operatorname{ln} \hspace{1mm} q(\bm{\lambda}_{M_d})\right] - \mathbb{E}_{q(\bm{\lambda}_R)} \left[ \operatorname{ln} \hspace{1mm} q(\bm{\lambda}_R)\right]   -  \mathbb{E}_{q(\tau)} \left[ \operatorname{ln} \hspace{1mm} q(\tau) \right]\\
    \end{aligned}
\end{equation}

All expectations above are with respect to the posterior distribution \( q \). The first term is the expected log-likelihood; the next four terms are the expected log-priors over $\{\mathbf{W}^{(d)}\}_{d=1}^{D}$, \( \{\bm{\lambda}_{M_d}\}_{d=1}^D \), \( \bm{\lambda}_R \), and \( \tau \), respectively. The final four terms represent the entropies of the posterior distributions over the factor matrices and the hyperparameters. Each term can be computed by

\begin{equation}
    \begin{aligned}
        \mathbb{E}_{q} \left[ \operatorname{ln} \hspace{1mm} p(\mathbf{y} \mid \{\mathbf{W}^{(d)}\}_{d=1}^{D}, \tau^{-1})  \right] 
        &= - \frac{N}{2} \operatorname{ln} \hspace{1mm} (2\pi) + \frac{N}{2}  \mathbb{E}_{q} \left[ \operatorname{ln} \hspace{1mm} \tau \right] -\frac{1}{2} \mathbb{E}_q\left[\| \mathbf{y} - \mathbf{\Phi}^T \mathbf{w}\|_F^{2}\right]  \\
        &=  - \frac{N}{2} \operatorname{ln} \hspace{1mm} (2\pi) + \frac{N}{2} (\psi(a_N) - \operatorname{ln} \hspace{1mm} b_N) -\frac{a_N}{2b_N}  \mathbb{E}_q\left[\| \mathbf{y} - \mathbf{\Phi}^T \mathbf{w}\|_F^{2}\right] 
    \end{aligned}
\end{equation}

\noindent where $N$ denotes the number of observations. $\mathbb{E}_q\left[\| \mathbf{y} - \mathbf{\Phi}^\mathbf{w}\|_F^{2}\right] $ can be computed as show in Theorem 3.

\begin{equation}
    \begin{aligned}
        &\mathbb{E}_{q}  \bigg[ \sum_{d=1} ^D \ln p(\mathbf{W}^{(d)} \mid \bm{\lambda}_R, \bm{\lambda}_{M_d} )\bigg] \\ 
        &= \mathbb{E}_{q} \bigg[ \sum_{d=1}^D \sum_r \sum_{m_d} 
        \bigg(-\frac{\ln (2\pi)}{2}  
        +\frac{1}{2} \ln |\lambda_r \lambda_{m_d}| 
        - \frac{1}{2} w^{(d)}_{m_dr} \lambda_r \lambda_{m_d} w^{(d)}_{m_dr} 
        \bigg) \bigg] \\ 
        &= \mathbb{E}_{q} \bigg[ \sum_{d=1}^D \sum_r \sum_{m_d} 
        \bigg(-\frac{\ln (2\pi)}{2}  
        +\frac{1}{2} \big( \ln |\lambda_r| + \ln|\lambda_{m_d}| \big) 
        - \frac{1}{2} w^{(d)}_{m_dr} \lambda_r \lambda_{m_d} w^{(d)}_{m_dr} 
        \bigg) \bigg] \\ 
        &= \sum_d \bigg\{ 
        - \frac{RM_d}{2} \ln (2\pi) 
        + \frac{M_d}{2} \sum_r \mathbb{E}_{q} [\ln \lambda_r ] 
        + \frac{R}{2} \sum_{m_d} \mathbb{E}_{q} [\ln \lambda_{m_d} ]
        - \frac{1}{2} \mathbb{E}_{q} \bigg[ 
        \operatorname{vec}(\mathbf{W}^{(d)})^T 
        \big( \bm{\Lambda}_R \otimes \bm{\Lambda}_{M_d} \big) 
        \operatorname{vec}(\mathbf{W}^{(d)}) \bigg] 
        \bigg\} \\ 
        &= -\frac{R \sum_d M_d}{2} \ln (2\pi) 
        + \frac{\sum_d M_d}{2} \sum_r \mathbb{E}_{q} [\ln \lambda_r ] 
        + \frac{R}{2} \sum_d \sum_{m_d} \mathbb{E}_{q} [\ln \lambda_{m_d} ] \\
        &- \frac{1}{2} \sum_d  
        \bigg\{ \operatorname{Tr} \bigg( 
        \mathbb{E}_{q} [\bm{\Lambda}_R ] \otimes \mathbb{E}_{q} [\bm{\Lambda}_{M_d}] 
        \operatorname{Var} \big( \operatorname{vec}(\mathbf{W}^{(d)}) \big) \bigg) 
        \quad + \mathbb{E}_{q} \big[ \operatorname{vec}(\mathbf{W}^{(d)})^T \big] 
        \big( \mathbb{E}_{q} [\bm{\Lambda}_R ] \otimes \mathbb{E}_{q} [\bm{\Lambda}_{M_d}] \big) 
        \mathbb{E}_{q} \big[ \operatorname{vec}(\mathbf{W}^{(d)}) \big] 
        \bigg\} \\ 
        &= -\frac{R \sum_d M_d}{2} \ln (2\pi) 
        + \frac{\sum_d M_d}{2} \sum_r (\psi(c_N^r) - \ln d_N^r) 
        + \frac{R}{2} \sum_d \sum_{m_d}(\psi(g^{m_d}_N) - \ln h^{m_d}_N) \\ 
        &\quad - \frac{1}{2} \sum_d \bigg\{ 
        \operatorname{Tr} \bigg( 
        \big( \tilde{\bm{\Lambda}}_R \otimes \tilde{\bm{\Lambda}}_{M_d} \big) \mathbf{V}^{(d)} 
        \bigg) \bigg\} - \frac{1}{2} \sum_d \bigg\{ 
        \operatorname{Tr} \bigg( 
        \operatorname{vec}(\tilde{\mathbf{W}}^{(d)})^T 
        \big( \tilde{\bm{\Lambda}}_R \otimes \tilde{\bm{\Lambda}}_{M_d} \big) 
        \operatorname{vec}(\tilde{\mathbf{W}}^{(d)}) 
        \bigg) \bigg\} \\ 
        &= -\frac{R \sum_d M_d}{2} \ln (2\pi) 
        + \frac{\sum_d M_d}{2} \sum_r (\psi(c_N^r) - \ln d_N^r) + \frac{R}{2} \sum_d \sum_{m_d}(\psi(g^{m_d}_N) - \ln h^{m_d}_N) \\
        &- \frac{1}{2} \operatorname{Tr} \bigg\{ 
        \sum_d \big( \tilde{\bm{\Lambda}}_R \otimes \tilde{\bm{\Lambda}}_{M_d} \big) 
        \bigg( \operatorname{vec}(\tilde{\mathbf{W}}^{(d)}) 
        \operatorname{vec}(\tilde{\mathbf{W}}^{(d)})^T + \mathbf{V}^{(d)} \bigg)  
        \bigg\}.
    \end{aligned}
\end{equation}

 \begin{equation}
     \begin{aligned}
         \mathbb{E}_{q} \left[ \operatorname{ln} \hspace{1mm} p(\bm{\lambda}_R)\right] &= \sum_r\left\{\operatorname{ln} \hspace{1mm} \Gamma (c_0^r) + c_0^r\operatorname{ln} \hspace{1mm} d_o^r + (c_0^r - 1) \mathbb{E}_{q} \left[ \operatorname{ln} \hspace{1mm} \lambda_r\right] - d_0^r \mathbb{E}_{q} \left[ \lambda_r \right]\right\} \\
          &= \sum_r\left\{\operatorname{ln} \hspace{1mm} \Gamma (c_0^r) + c_0^r\operatorname{ln} \hspace{1mm} d_o^r + (c_0^r - 1) (\psi(c_N^r) - \operatorname{ln} \hspace{1mm} d_N^r) - d_0^r \frac{c^r_N}{d^r_N}\right\}  
     \end{aligned}
 \end{equation}

\vspace{2mm}

 \begin{equation}
     \begin{aligned}
         \mathbb{E}_{q} \left[ \operatorname{ln} \hspace{1mm} p(\bm{\lambda}_{M_d})\right] &= \sum_d \sum_{m_d} \left\{\operatorname{ln} \hspace{1mm} \Gamma (g_0^{m_d}) + g_0^{m_d}\operatorname{ln} \hspace{1mm} h_o^{m_d} + (g_0^{m_d} - 1) \mathbb{E}_{q} \left[ \operatorname{ln} \hspace{1mm} \lambda_{m_d}\right] - h_0^{m_d} \mathbb{E}_{q} \left[ \lambda_{m_d} \right]\right\} \\
          &= \sum_d \sum_{m_d} \left\{\operatorname{ln} \hspace{1mm} \Gamma (g_0^{m_d}) + g_0^{m_d}\operatorname{ln} \hspace{1mm} h_o^{m_d} + (g_0^{m_d} - 1) (\psi(g^{m_d}_N) - \ln h^{m_d}_N) - h_0^{m_d}\frac{g_N^{m_d}}{h_N^{m_d}}\right\} 
     \end{aligned}
 \end{equation}

\vspace{2mm}
 \begin{equation}
     \begin{aligned}
         \mathbb{E}_{q} \left[ \operatorname{ln} \hspace{1mm} p(\tau)\right] &= -\operatorname{ln} \hspace{1mm} \Gamma (a_0) + a_0 \operatorname{ln} \hspace{1mm} b_0 + (a_0 - 1) \mathbb{E}_{q}\left[ \operatorname{ln} \hspace{1mm} \tau \right] - b_0  \mathbb{E}_{q} \left[ \tau \right] \\
         &= -\operatorname{ln} \hspace{1mm} \Gamma (a_0) + a_0 \operatorname{ln} \hspace{1mm} b_0 + (a_0 - 1) (\psi(a_N) - \operatorname{ln}  \hspace{1mm} b_N) - b_0  \frac{a_N}{b_N} 
     \end{aligned}
 \end{equation}

\vspace{2mm}
 \begin{equation}
     \begin{aligned}
          - \mathbb{E}_{q} \left[ \sum_{d=1} ^D \operatorname{ln} \hspace{1mm} q(\mathbf{W}^{(d)}) \right] &= \sum_d \sum_r \sum_{m_d} \mathbb{E}_q \left[ \operatorname{ln} \hspace{1mm} q(w_{m_d}^{(d)}) \right] \\
          &= \sum_d \left\{ \frac{1}{2} \operatorname{ln} \mid \mathbf{V}_{m_d} ^ {(d)} \mid \right\} + \frac{R \sum_d M_d}{2} [1 + \operatorname{ln}(2\pi)]
     \end{aligned}
 \end{equation}

\vspace{2mm}
 \begin{equation}
     \begin{aligned}
         -\mathbb{E}_q [\operatorname{ln} \hspace{1mm} q(\bm{\lambda}_R)] = \sum_r \left\{ \operatorname{ln} \hspace{1mm} \Gamma (c_N^r) - (c_N^r -1) \psi(c_N^r) - \operatorname{ln} d_N^r + c_N^r \right\}
     \end{aligned}
 \end{equation}

\vspace{2mm}
 \begin{equation}
     \begin{aligned}
         -\mathbb{E}_q [\operatorname{ln} \hspace{1mm} q(\bm{\lambda}_{M_d})] = \sum_d \sum_{m_d} \left\{ \operatorname{ln} \hspace{1mm} \Gamma (g_N^{m_d}) - (g_N^{m_d} -1) \psi(c_N^{m_d}) - \operatorname{ln} h_N^{m_d} + g_N^{m_d} \right\}
     \end{aligned}
 \end{equation}

\vspace{2mm}
 \begin{equation}
     \begin{aligned}
         -\mathbb{E}_q [\operatorname{ln} \hspace{1mm} q(\tau)] = \operatorname{ln} \hspace{1mm} \Gamma (a_N) - (a_N-1) \psi(a_N) - \operatorname{ln} \hspace{1mm} b_N + a_N 
     \end{aligned}
 \end{equation}

\vspace{2mm}
 In these expressions, $\psi(.)$ denotes the digamma function and $\Gamma(.)$ denotes Gamma function. 

 \newpage
 
 Combining all these terms together and omitting the const term, we obtain the following lower bound in a compact form

 \begin{align*}
     L(q) &=  \frac{N}{2} (\psi(a_N) - \operatorname{ln} \hspace{1mm} b_N) -\frac{a_N}{2b_N} \mathbb{E}_q\left[\| \mathbf{y} - \mathbf{\Phi}^T \mathbf{w}\|_F^{2}\right] + \frac{\sum_d M_d}{2} \sum_r (\psi(c_N^r) - \ln d_N^r) \\
     &+ \frac{R}{2} \sum_d \sum_{m_d}(\psi(g^{m_d}_N) - \ln h^{m_d}_N) 
     - \frac{1}{2} \operatorname{Tr} \bigg\{ 
    \sum_d \big( \tilde{\bm{\Lambda}}_R \otimes \tilde{\bm{\Lambda}}_{M_d} \big) 
    \bigg( \operatorname{vec}(\tilde{\mathbf{W}}^{(d)}) 
    \operatorname{vec}(\tilde{\mathbf{W}}^{(d)})^T + \mathbf{V}^{(d)} \bigg)  
    \bigg\} \\
      & +\sum_r\left\{ (c_0^r - 1) (\psi(c_N^r) - \operatorname{ln} \hspace{1mm} d_N^r) - d_0^r \frac{c^r_N}{d^r_N}\right\} +  \sum_d \sum_{m_d} \left\{ (g_0^{m_d} - 1) (\psi(g^{m_d}_N) - \ln h^{m_d}_N) - h_0^{m_d}\frac{g_N^{m_d}}{h_N^{m_d}}\right\} \\
    &+ (a_0 - 1) (\psi(a_N) - \operatorname{ln}  \hspace{1mm} b_N) - b_0  \frac{a_N}{b_N} + \sum_d \left\{ \frac{1}{2} \operatorname{ln} \mid \mathbf{V}^ {(d)} \mid \right\} +  \sum_r \left\{ \operatorname{ln} \hspace{1mm} \Gamma (c_N^r) - (c_N^r -1) \psi(c_N^r) - \operatorname{ln} d_N^r + c_N^r \right\} \\
     &+ \sum_d \sum_{m_d} \left\{ \operatorname{ln} \hspace{1mm} \Gamma (g_N^{m_d}) - (g_N^{m_d} -1) \psi(c_N^{m_d}) - \operatorname{ln} h_N^{m_d} + g_N^{m_d} \right\} + (a_0 - 1) (\psi(a_N) - \operatorname{ln}  \hspace{1mm} b_N) - b_0  \frac{a_N}{b_N} + \text{const}\\
    \\
    &= \left(\frac{N}{2} + a_0 - a_N \right) \psi(a_N) - \left( \frac{N}{2} + a_0\right) \operatorname{ln} \hspace{1mm} b_N - \frac{a_N}{2b_N} \mathbb{E}_q\left[\| \mathbf{y} - \mathbf{\Phi}^T \mathbf{w}\|_F^{2}\right]  \\
    &- \frac{1}{2} \operatorname{Tr} \bigg\{ 
    \sum_d \big( \tilde{\bm{\Lambda}}_R \otimes \tilde{\bm{\Lambda}}_{M_d} \big) 
    \bigg( \operatorname{vec}(\tilde{\mathbf{W}}^{(d)}) 
    \operatorname{vec}(\tilde{\mathbf{W}}^{(d)})^T + \mathbf{V}^{(d)} \bigg)  
    \bigg\} + \sum_d \left\{ \frac{1}{2} \operatorname{ln} \mid \mathbf{V} ^ {(d)} \mid \right\}\\
    &+ \sum_r \left\{ \left( \frac{\sum_d M_d}{2} + c_0^r - c_N^r \right) \psi(c_N^r) - \left(\frac{\sum_d M_d}{2} + c_0^r \right) \operatorname{ln} \hspace{1mm} d_N^r \right\} \\
    &+ \sum_d \sum_{m_d} \bigg \{ \bigg( \frac{R}{2} + g_0^{m_d} - g_N^{m_d} \bigg) \psi(g_N^{m_d}) - \bigg(\frac{R}{2} + g_0^{m_d}\bigg) \ln h_N^{m_d} \bigg \}\\
    &+ \sum_r \left\{ \operatorname{ln}\hspace{1mm} \Gamma (c_N^r) + c_N^r - d_0^r\frac{c_N^r}{d_N^r} \right\} + \sum_d \sum_{m_d}\bigg\{ \ln \Gamma (g_N^{m_d}) + g_N^{m_d} - h_0^{m_d} \frac{g_N^{m_d}}{h_N^{m_d}}\bigg\} + \operatorname{ln}\hspace{1mm} \Gamma (a_N) + a_N - b_0 \frac{a_N}{b_N}  + \text{const} \\
   \\
    &= - \frac{a_N}{2b_N}\mathbb{E}_q\left[\| \mathbf{y} - \mathbf{\Phi}^T \mathbf{w}\|_F^{2}\right] - \frac{1}{2} \operatorname{Tr} \bigg\{ 
    \sum_d \big( \tilde{\bm{\Lambda}}_R \otimes \tilde{\bm{\Lambda}}_{M_d} \big) 
    \bigg( \operatorname{vec}(\tilde{\mathbf{W}}^{(d)}) 
    \operatorname{vec}(\tilde{\mathbf{W}}^{(d)})^T + \mathbf{V}^{(d)} \bigg)  
    \bigg\} \\
    & + \sum_r \left\{ \operatorname{ln} \hspace{1mm} \Gamma(c_N^r) + c_N^r \left( 1 - \operatorname{ln} \hspace{1mm} d_N^r - \frac{d_0^r}{d_N^r} \right) \right\}  + \sum_d \sum_{m_d} \bigg \{ \ln \Gamma(g_N^{m_d}) + g_N^{m_d} \bigg(1 - \ln h_N^{m_d} - \frac{h_0^{m_d}}{h_N^{m_d}} \bigg) \bigg\} \\
    &+ \frac{1}{2}\sum_d \operatorname{ln} \hspace{1mm} \mid \mathbf{V}^{(d)} \mid+ \operatorname{ln} \hspace{1mm} \Gamma(a_N) + a_N(1- \operatorname{ln} \hspace{1mm} b_N - \frac{b_0}{b_N}) + \text{const}
 \end{align*}

 Finally the lower bound can be computed by using the posterior parameters of all unknowns in $\Theta$, where the posterior parameters are updated at each iteration. 

\newpage

\vspace*{-1cm} 
\textbf{11. Predictive Distribution}

\begin{align*}
    p(\tilde{y_i} \mid \mathbf{y}) &= \int p\left(y_i \mid \Theta\right) p\left(\Theta \mid \mathbf{y} \right)d\Theta \\
    \vspace{2cm}
     &\simeq \int \int p \left(\tilde{y_i} \mid \left\{ \mathbf{W}^{(d)} \right\}, \tau^{-1} \right) q\left( \left\{ \mathbf{W}^{(d)} \right\} \right) q(\tau) d\left\{ \mathbf{W}^{(d)} \right\} d \tau \\ 
    \vspace{2cm}
    &= \int \int \mathcal{N} \left(\tilde{y_i} \mid \mathop{\circledast}_{d=1}^D \mathbf{W}^{(d)}  \mathbf{\varphi}_i^{(d)}, \tau^{-1} \right) \prod_d \mathcal{N}\left(\operatorname{vec}(\mathbf{W}^{(d)}) \mid  \operatorname{vec}(\Tilde{\mathbf{W}}^{(d)}), \mathbf{\Sigma}^{(d)}\right) q(\tau) d \left\{ \operatorname{vec}(\mathbf{W}^{(d)}) \right\} d \tau \\ 
    \vspace{2cm}
    &= \int \int \mathcal{N} \left(\tilde{y_i} \mid \operatorname{vec}(\mathbf{W}^{(1)})^T \textbf{g}^{(d)} (x_n)  , \tau^{-1} \right) \mathcal{N}\left(\operatorname{vec}(\mathbf{W}^{(1)}) \mid  \operatorname{vec}(\Tilde{\mathbf{W}}^{(1)}), \mathbf{V}^{(1)}\right) d \left\{ \operatorname{vec}(\mathbf{W}^{(1)}) \right\}... \\
    &\hspace{7mm}... \prod_{d \neq 1} \mathcal{N}\left(\operatorname{vec}(\mathbf{W}^{(d)}) \mid  \operatorname{vec}(\Tilde{\mathbf{W}}^{(d)}), \mathbf{\Sigma}^{(d)}\right) q(\tau) d \left\{ \operatorname{vec}(\mathbf{W}^{(d)}) \right\}_{d \neq 1} d \tau \\
    \vspace{2cm}
    &= \int \int \mathcal{N} \left(\tilde{y_i} \mid \operatorname{vec}(\tilde{\mathbf{W}}^{(1)})^T \textbf{g}^{(d)} (x_n)  , \tau^{-1} +\textbf{g}^{(d)} (x_n)^T \mathbf{V}^{(1)} \textbf{g}^{(d)} (x_n) \right) ... \\
    &\hspace{7mm} ... \prod_{d \neq 1} \mathcal{N}\left(\operatorname{vec}(\mathbf{W}^{(d)}) \mid  \operatorname{vec}(\Tilde{\mathbf{W}}^{(d)}), \mathbf{\Sigma}^{(d)}\right) q(\tau) d \left\{ \operatorname{vec}(\mathbf{W}^{(d)}) \right\}_{d \neq 1} d \tau \\
    \vspace{2cm}
    & = \int \int \mathcal{N} \left(\tilde{y_i} \mid \operatorname{vec}(\tilde{\mathbf{W}}^{(1)})^T \textbf{g}^{(d)} (x_n)  , \tau^{-1} + \textbf{g}^{(d)} (x_n)^T \mathbf{V}^{(1)} \textbf{g}^{(d)} (x_n)\right) ... \\
     &\hspace{7mm}... \prod_{d \neq 1} \mathcal{N}\left(\operatorname{vec}(\mathbf{W}^{(d)}) \mid  \operatorname{vec}(\Tilde{\mathbf{W}}^{(d)}), \mathbf{\Sigma}^{(d)}\right) q(\tau) d \left\{ \operatorname{vec}(\mathbf{W}^{(d)}) \right\}_{d \neq 1} d \tau \\
    \vspace{2cm} 
    &  \hspace{1cm} \vdots \\
    \hspace{0.5cm}
    &=  \int  \mathcal{N} \left(\tilde{y_i} \mid  \mathop{\circledast}_{d=1}^D \tilde{\mathbf{W}}^{(d)}  \mathbf{\varphi}_i^{(d)}  , \tau^{-1} + \sum_d  \textbf{g}^{(d)} (x_n)^T \mathbf{\Sigma}^{(d)} \textbf{g}^{(d)} (x_n) \right )q(\tau)d \tau \\
    \vspace{2cm}
    &=  \int  \mathcal{N} \left(\tilde{y_i} \mid  \mathop{\circledast}_{d=1}^D \tilde{\mathbf{W}}^{(d)}  \mathbf{\varphi}_i^{(d)}  , \tau^{-1} + \sum_d  \textbf{g}^{(d)} (x_n)^T \mathbf{\Sigma}^{(d)} \textbf{g}^{(d)} (x_n) \right )\text{Ga} (\tau \mid a_N, b_N) d\tau \\
    \vspace{2cm}
    &\simeq \mathcal{T} \left(\tilde{y_i} \mid \mathop{\circledast}_{d=1}^D \tilde{\mathbf{W}}^{(d)}  \mathbf{\varphi}_i^{(d)}  ,  \left\{ \frac{b_N}{a_N}\sum_d \textbf{g}^{(d)} (x_n)^T \mathbf{\Sigma}^{(d)} \textbf{g}^{(d)} (x_n) \right\}^{-1}, 2a_N \right) 
 \end{align*}
\noindent where $\mathcal{T}$ is a Student's t-distribution $\tilde{y_i} \mid \mathbf{y} \sim \mathcal{T}(\tilde{y_i}, \mathcal{S}_i, \nu_y)$ with its parameters given by 

\begin{align*}
    \tilde{y_i} &= \mathop{\circledast}_{d=1}^D \tilde{\mathbf{W}}^{(d)}  \mathbf{\varphi}_i^{(d)}, \qquad \nu_y = 2a_N \\
    \mathcal{S}_i &= \left\{ \frac{b_N}{a_N}\sum_d  \textbf{g}^{(d)} (x_n)^T \mathbf{\Sigma}^{(d)} \textbf{g}^{(d)} (x_n)\right\}^{-1}. 
 \end{align*}

\noindent Thus, the predictive variance can be obtained by $\operatorname{Var}(y_i) = \frac{\nu_y}{\nu_y - 2} \mathcal{S}_i ^{-1}$.

\end{adjustwidth}

\label{app:theorem}

\newpage

\vskip 0.2in
\bibliography{bibliography}

\end{document}